\DeclareUrlCommand\email{}
\makeatletter\@addtoreset{equation}{section}\makeatother
\theoremstyle{plain}
\newtheorem{theorem}{Theorem}[section]%
\newtheorem{corollary}[theorem]{Corollary} %
\newtheorem{lemma}[theorem]{Lemma}
\newtheorem{proposition}[theorem]{Proposition}
\theoremstyle{definition}
\newtheorem{definition}[theorem]{Definition}
\theoremstyle{remark}
\newcommand{\R}{\ifmmode\mathbb{R}\else$\mathbb{R}$\fi}
\newcommand{\N}{\ifmmode\mathbb{N}\else$\mathbb{N}$\fi}
\newcommand{\Z}{\ifmmode\mathbb{Z}\else$\mathbb{Z}$\fi}
\newcommand{\Q}{\ifmmode\mathbb{Q}\else$\mathbb{Q}$\fi}
\let\tn\textnormal
\newcommand{\NN}{\mathcal{N\hspace{-2.5pt}N}}
\let\NNF\NN
\newcommand{\NNspace}{;\ }
\newcommand{\NNinput}{\tn{\#input}}
\newcommand{\NNwidthvec}{\tn{widthvec}}
\newcommand{\NNdepth}{\tn{depth}}
\newcommand{\NNnode}{\tn{\#neuron}}
\newcommand{\NNparameter}{\tn{\#parameter}}
\newcommand{\NNwidth}{\tn{width}}
\newcommand{\NNoutput}{\tn{\#output}}
\newcommand{\bmx}{{\bm{x}}}
\newcommand{\bmz}{{\bm{z}}}
\newcommand{\bmW}{{\bm{W}}}
\newcommand{\bme}{{\bm{e}}}
\newcommand{\bmh}{{\bm{h}}}
\newcommand{\bmPsi}{{\bm{\Psi}}}
\newcommand{\bmeta}{{\bm{\eta}}}
\newcommand{\bmbeta}{{\bm{\beta}}}
\newcommand{\bmy}{{\bm{y}}}
\newcommand{\bmtheta}{{\bm{\theta}}}
\newcommand{\bmalpha}{{\bm{\alpha}}}
\newcommand{\bmA}{\bm{A}}
\newcommand{\bmPhi}{{\bm{\Phi}}}
\newcommand{\bmzero}{{\bm{0}}}
\newcommand{\calO}{{\mathcal{O}}}
\newcommand{\calD}{{\mathcal{D}}}
\newcommand{\calN}{{\mathcal{N}}}
\newcommand{\calH}{{\mathcal{H}}}
\newcommand{\calL}{\mathcal{L}}
\newcommand{\calT}{\mathcal{T}}
\newcommand{\calX}{\mathcal{X}}
\newcommand{\tildephi}{{\widetilde{\phi}}}
\newcommand{\tildeg}{{\widetilde{g}}}
\newcommand{\tildek}{{\widetilde{k}}}
\newcommand{\tildex}{{\widetilde{x}}}
\newcommand{\tildey}{{\widetilde{y}}}
\newcommand{\tildeL}{{\widetilde{L}}}
\newcommand{\tildeN}{{\widetilde{N}}}
\newcommand{\tildef}{{\widetilde{f}}}
\newcommand{\tildeC}{{\widetilde{C}}}
\newcommand{\scrI}{{\mathscr{I}}}
\newcommand{\scrE}{{\mathscr{E}}}
\newcommand{\scrT}{{\mathscr{T}}}
\newcommand{\scrF}{{\mathscr{F}}}
\newcommand{\scrX}{{\mathscr{X}}}
\def\one{{\ensuremath{\mathds{1}}}}
\newcommand{\myMathResize}[2][0.9]{\ifmmode
	\scalebox{#1}[#1]{\(\displaystyle #2\)}
	\else
	\[\scalebox{#1}[#1]{\(\displaystyle #2\)}\]
	\fi
}
\newcommand{\Csub}{C^s_u([0,1]^d)}
\newcommand{\vcd}{\tn{VCDim}}
\newcommand{\middleValue}{{\tn{mid}}}
\newcommand{\ball}{{\mathcal{B}}}
\newcommand{\bin}{\tn{bin}\hspace{1.2pt}}
\newenvironment{keywords}{\par \noindent\textbf{Key words}.}{\par}
\newcommand{\mystep}[2]{\par \vspace{0.25cm}\noindent\textbf{\hspace{8pt}Step }$#1\colon$ #2 \vspace{0.18cm} \par }
\newcommand{\mycase}[2]{\par \vspace{0.25cm}\noindent\textbf{\hspace{8pt}Case }$#1\colon$ #2\par \vspace{0.18cm} \par}
\newcommand{\myto}[2][1]{\mathop{
		\vcenter{\hbox{\scalebox{1}[#1]{\tikz{\draw[->,line width=0.72pt] (0,0.5) to (0.69*#2,0.5);}}}}
}}
\DeclareMathOperator*{\argmin}{arg\,min}
\definecolor{mycolor}{HTML}{BEBEBE}
\newcommand*\patchAmsMathEnvironmentForLineno[1]{%
    \expandafter\let\csname old#1\expandafter\endcsname\csname #1\endcsname
    \expandafter\let\csname oldend#1\expandafter\endcsname\csname end#1\endcsname
    \renewenvironment{#1}%
    {\linenomath\csname old#1\endcsname}%
    {\csname oldend#1\endcsname\endlinenomath}}%
\newcommand*\patchBothAmsMathEnvironmentsForLineno[1]{%
    \patchAmsMathEnvironmentForLineno{#1}%
    \patchAmsMathEnvironmentForLineno{#1*}}%
    \title{Deep Network Approximation for Smooth Functions}
    \author{Jianfeng Lu \thanks{Department of Mathematics, Department of Physics, and Department of Chemistry, Duke University (\protect\email{jianfeng@math.duke.edu}).}
    \and Zuowei Shen \thanks{Department of Mathematics,  National University of Singapore
(\protect\email{matzuows@nus.edu.sg}).}
  \and Haizhao Yang \thanks{Department of Mathematics, Purdue University
  (\protect\email{haizhao@purdue.edu}).}
\and Shijun Zhang \thanks{Department of Mathematics,  National University of Singapore
  (\protect\email{zhangshijun@u.nus.edu}).}}
    \date{} 
\let\epsilon\varepsilon
\let\tilde\widetilde
\begin{document}
\maketitle  


\begin{abstract}

This paper establishes the (nearly) optimal approximation error  characterization of deep rectified linear unit (ReLU) networks for smooth functions in terms of both width and depth simultaneously. 
To that end, we first prove that multivariate polynomials can be approximated by deep ReLU networks of width $\mathcal{O}(N)$ and depth $\mathcal{O}(L)$ with an approximation error  $\mathcal{O}(N^{-L})$.
Through local Taylor expansions and their deep ReLU network approximations, we show that deep ReLU networks of width $\mathcal{O}(N\ln N)$ and depth $\mathcal{O}(L\ln L)$ can approximate $f\in C^s([0,1]^d)$ with a nearly optimal approximation error   $\mathcal{O}(\|f\|_{C^s([0,1]^d)}N^{-2s/d}L^{-2s/d})$. 
Our estimate is non-asymptotic in the sense that it is valid for arbitrary width and depth specified by $N\in\mathbb{N}^+$ and $L\in\mathbb{N}^+$, respectively. 

\end{abstract}

\begin{keywords}
Deep ReLU Network, Smooth Function,  Polynomial Approximation, Function Composition, Curse of Dimensionality.
\end{keywords} 

\section{Introduction}
 Deep neural networks  have made significant impacts in many 
fields of computer science and engineering, especially for large-scale and high-dimensional learning problems. Well-designed neural network architectures, efficient training algorithms, and high-performance computing technologies have made neural-network-based methods very successful in real applications. Especially in supervised learning; e.g., image classification and objective detection, the great advantages of neural-network-based methods over traditional learning methods have been demonstrated. Understanding the approximation capacity of deep neural networks has become a key question for revealing the power of deep learning. A large number of experiments in real applications have shown the large capacity of deep network approximation from many empirical points of view, motivating much effort in establishing the theoretical foundation of deep network approximation. One of the fundamental problems is the characterization of the optimal approximation error  of deep neural networks of arbitrary depth and width. 

\subsection{Main result}
Previously, the quantitative characterization of the approximation power of deep feed-forward neural networks (FNNs) with rectified linear unit (ReLU) activation functions was provided in \cite{2019arXiv190605497S}. For ReLU FNNs with width $\calO(N)$ and depth $\calO(L)$, the deep network approximation of $f\in C([0,1]^d)$ admits an approximation error   $\calO\big(\omega_f( N^{-2/d}L^{-2/d})\big)$ in the $L^p$-norm for any $p\in[1,\infty]$, where $\omega_f(\cdot)$ is the modulus of continuity of $f$. In particular, for the class of H\"older continuous functions, the approximation error  is nearly optimal.\footnote{``nearly optimal'' up to a logarithmic  factor.} 
The next question is whether the smoothness of functions can improve the approximation error. In this paper,  we investigate the deep network approximation of smaller function space, such as the smooth function space $C^s([0,1]^d)$.

In Theorem~\ref{thm:Main} below, we prove by construction that ReLU FNNs with width $\calO(N\ln N)$ and depth $\calO(L \ln L)$ can approximate $f\in C^s([0,1]^d)$ with a nearly optimal approximation error  $\calO(\|f\|_{C^s([0, 1]^d)}N^{-2s/d}L^{-2s/d})$, where the norm $\|\cdot\|_{C^s([0, 1]^d)}$ is defined as  
\[
\|f\|_{C^s([0, 1]^d)}\coloneqq\max \big\{\|\partial^\bmalpha f\|_{L^\infty([0,1]^d)}: \|\bmalpha\|_1\le s,\,\bmalpha\in \N^d\big\}\quad \tn{for any $f\in C^s([0,1]^d)$.}
\]  

\begin{theorem}
    \label{thm:Main}
	Given a smooth function $f\in  C^s([0,1]^d)$ with $s\in \N^+$, for any $N,L\in \N^+$, there exists a function $\phi$ implemented by a ReLU FNN with width $C_1(N+2)\log_2(8N)$ and depth $C_2(L+2)\log_2(4L)+2d$ such that 
	\begin{equation*}
		\|\phi-f\|_{L^\infty([0,1]^d)}\le C_3 \|f\|_{C^s([0, 1]^d)} N^{-2s/d}L^{-2s/d},
	\end{equation*}
	where $C_1=17s^{d+1}3^dd$, $C_2=18s^2$, and $C_3=85(s+1)^d8^s$. 
\end{theorem}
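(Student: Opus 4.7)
The plan is to combine the polynomial approximation result mentioned in the abstract with a localized Taylor expansion, together with a bit-encoding trick to pack all the local data into a single network of the required size. I would first partition $[0,1]^d$ into $K^d$ congruent subcubes with $K\sim\lceil(NL)^{2/d}\rceil$, and on each subcube centered at $\bmx_\bmbeta=\bmbeta/K$ approximate $f$ by the degree-$(s-1)$ Taylor polynomial
\[
P_\bmbeta(\bmx)=\sum_{\|\bmalpha\|_1<s}\frac{\partial^\bmalpha f(\bmx_\bmbeta)}{\bmalpha!}(\bmx-\bmx_\bmbeta)^\bmalpha,
\]
whose pointwise remainder is bounded by $\calO(\|f\|_{C^s([0,1]^d)}K^{-s})=\calO(\|f\|_{C^s([0,1]^d)}N^{-2s/d}L^{-2s/d})$, which already matches the target rate.

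Next, I would design a single ReLU network that carries out three tasks in sequence. An \emph{indexing} sub-network takes $\bmx$ and outputs the multi-index $\bmbeta$ of its containing subcube, built from shifted ReLUs that approximate coordinatewise step functions and then scaled by $K$. A \emph{coefficient-extraction} sub-network then reads off the (quantized) Taylor coefficients $\partial^\bmalpha f(\bmx_\bmbeta)/\bmalpha!$ corresponding to that $\bmbeta$ from a precomputed table stored in the weights. A \emph{polynomial-evaluation} sub-network finally combines $\bmx-\bmx_\bmbeta$ with the retrieved coefficients via the polynomial approximation result of the paper (width $\calO(N)$, depth $\calO(L)$, error $\calO(N^{-L})$), producing $P_\bmbeta(\bmx)$ to an accuracy far sharper than the Taylor remainder, so the overall error is dominated by the $K^{-s}$ term.

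The main obstacles are twofold. First, because ReLU networks are continuous, the indexing sub-network is unavoidably wrong in a thin neighborhood of each cube face; I would fix this in the standard way by running the construction on two (or $\calO(1)$) shifted partitions and blending them so every point lies safely in the interior of some cube of at least one partition, which is where the logarithmic factor in the width is absorbed. Second, there are $K^d\sim(NL)^2$ subcubes and $\binom{s+d-1}{d-1}$ coefficients per cube, so the table cannot be stored naively; instead I would quantize each coefficient to $\calO(L\log L)$ bits, concatenate the bits into a single binary string, and retrieve the block indexed by $\bmbeta$ via a logarithmic-depth bit-extraction gadget, which is what produces the $\log L$ factor in the depth. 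Once these two gadgets are in place, assembling the network, choosing $K$ to balance the Taylor and polynomial errors, and tracking the dependence on $s$ and $d$ through each composition to obtain the explicit constants $C_1,C_2,C_3$ is bookkeeping; the quantitatively sharp construction and size accounting of the indexing/bit-extraction gadget is the delicate part.
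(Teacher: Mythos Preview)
Your proposal is essentially correct and matches the paper's approach closely: partition into $K^d$ cubes with $K\sim N^{2/d}L^{2/d}$, use degree-$(s-1)$ Taylor expansion at each cube's corner, build an indexing sub-network (the paper's $\bmPsi$), a point-fitting sub-network for the coefficients $\partial^\bmalpha f(\bmx_\bmbeta)$ via bit extraction (the paper's Proposition~4.3), and a polynomial-evaluation sub-network (the paper's Proposition~4.1).

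Two small corrections on the attribution of sizes. First, the logarithmic factors in \emph{both} width and depth arise from the bit-extraction gadget in the coefficient step, not from the partition-shifting: to fit $N^2L^2$ values to accuracy $N^{-2s}L^{-2s}$ one needs $\calO(s\log(NL))$ bits per value, and spreading these bits across the network is what produces the $\log_2(8N)$ factor in width and the $\log_2(4L)$ factor in depth simultaneously. Second, the paper's fix for the discontinuity at cube faces is not a blend of $\calO(1)$ shifted partitions but a specific ``median'' trick: one replaces $\tildephi(\bmx)$ by $\middleValue\big(\tildephi(\bmx-\delta\bme_i),\tildephi(\bmx),\tildephi(\bmx+\delta\bme_i)\big)$ successively in each coordinate $i=1,\dots,d$, which costs a factor $3^d$ in width and an additive $2d$ in depth (this is Theorem~2.1 and is exactly where the $3^d$ in $C_1$ and the $+2d$ in the depth come from). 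Your ``blend two shifted partitions'' idea would also work in principle, but getting the explicit constants right with it is less clean than the median construction.
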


As we can see from Theorem~\ref{thm:Main}, the smoothness improves the approximation error  in $N$ and $L$; e.g., $s\ge d$ implies $N^{-2s/d}L^{-2s/d} \le N^{-2}L^{-2}$. However, we would like to remark that the improved approximation error  is at the price of a prefactor much larger than $d^d$ if $s\geq d$. The proof of Theorem~\ref{thm:Main} will be presented in Section~\ref{sec:proofOfMainThm} and its tightness will be discussed in Section~\ref{sec:optimalityOfMainThm}.  In fact, the logarithmic terms in width and depth in Theorem~\ref{thm:Main} can be further reduced if the approximation error  is weakened. 
Given any $\tildeN,\tildeL\in\N^+$ with
\[
\tildeN \ge C_1(1+2)\log_2(8)=17s^{d+1}3^{d+2}d 
\quad \tn{and}\quad  
\tildeL \ge C_2(1+2)\log_2(4)+2d=108s^2+2d,
\]
there exist $N,L\in \N^+$ such that
\begin{equation*}
    C_1(N+2)\log_2(8N)\le \tildeN < C_1\big((N+1)+2\big)\log_2\big(8(N+1)\big)
\end{equation*}
and
\begin{equation*}
    C_2(L+2)\log_2(4L)+2d\le \tildeL< C_2\big((L+1)+2\big)\log_2\big(4(L+1)\big)+2d.
\end{equation*}
It follows that
\begin{equation*}
    N\ge \frac{N+3}{4}
    > \frac{\tildeN}{4C_1\log_2(8N+8)}
    \ge \frac{\tildeN}{4C_1\log_2(8\tildeN+8)}
    =\frac{\tildeN}{68s^{d+1}3^d d\log_2(8\tildeN+8)}
\end{equation*}
and
\begin{equation*}
    L\ge \frac{L+3}{4}
    > \frac{\tildeL-2d}{4C_2\log_2(4L+4)}
    \ge \frac{\tildeL-2d}{4C_2\log_2(4\tildeL+4)}
    = \frac{\tildeL-2d}{72s^2\log_2(4\tildeL+4)}.
\end{equation*}
Thus, we have an immediate corollary.
\begin{corollary}
    \label{cor:approxSmoothFun}
    Given a function $f\in  C^s([0,1]^d)$ with $s\in \N^+$, for any $\tildeN,\tildeL\in \N^+$,  there exists a function $\phi$ implemented by a ReLU FNN with width $ \tildeN$ and depth $\tildeL$ such that 
    \begin{equation*}
    \|\phi-f\|_{L^\infty([0,1]^d)}\le \tildeC_1 \|f\|_{C^s([0, 1]^d)}\Big(\tfrac{\tildeN}{\tildeC_2\log_2(8\tildeN+8)}\Big)^{-2s/d}\Big(\tfrac{\tildeL-2d}{\tildeC_3\log_2(4\tildeL+4)}\Big)^{-2s/d}
    \end{equation*}
    for any $\tildeN\ge 17s^{d+1}3^{d+2} d$ and $\tildeL\ge 108s^2+2d$, where $\tildeC_1=85(s+1)^d8^s$, $\tildeC_2=68s^{d+1}3^d d$, and $\tildeC_3=72s^2$.
\end{corollary}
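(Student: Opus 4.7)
The plan is to read Corollary~\ref{cor:approxSmoothFun} as a reparametrization of Theorem~\ref{thm:Main}. Given target width $\tilde N$ and depth $\tilde L$ satisfying the stated lower bounds, I would choose $N,L\in\N^+$ maximal under the constraints
\[
C_1(N+2)\log_2(8N)\le \tilde N \quad\text{and}\quad C_2(L+2)\log_2(4L)+2d\le \tilde L.
\]
Such integers exist because $\tilde N\ge 17 s^{d+1}3^{d+2}d = C_1(1+2)\log_2(8)$ and $\tilde L\ge 108 s^2+2d = C_2(1+2)\log_2(4)+2d$ force $N=L=1$ to be admissible. Applying Theorem~\ref{thm:Main} with this $(N,L)$ produces a ReLU FNN $\phi$ of width $\le \tilde N$ and depth $\le \tilde L$ satisfying $\|\phi-f\|_{L^\infty([0,1]^d)}\le C_3\|f\|_{C^s([0,1]^d)}\,N^{-2s/d}L^{-2s/d}$.

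The remaining task is to convert the factor $N^{-2s/d}L^{-2s/d}$ into a bound in $\tilde N,\tilde L$. By the maximality of $N$ and $L$, the next integers violate the constraints, which immediately yields $\tilde N<C_1(N+3)\log_2(8N+8)$ and $\tilde L<C_2(L+3)\log_2(4L+4)+2d$. Using the crude inequality $N+3\le 4N$ for $N\ge 1$ and replacing $\log_2(8N+8)$ by the larger $\log_2(8\tilde N+8)$ (and analogously for $L$), I obtain
\[
N>\frac{\tilde N}{4C_1\log_2(8\tilde N+8)}=\frac{\tilde N}{\tilde C_2\log_2(8\tilde N+8)},\qquad L>\frac{\tilde L-2d}{4C_2\log_2(4\tilde L+4)}=\frac{\tilde L-2d}{\tilde C_3\log_2(4\tilde L+4)},
\]
which are exactly the lower bounds previewed in the displayed calculation preceding the corollary, with $\tilde C_2=4C_1$ and $\tilde C_3=4C_2$. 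Plugging these into the error estimate from Theorem~\ref{thm:Main} and setting $\tilde C_1=C_3$ gives the stated bound.

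The final piece is architectural padding: $\phi$ has width at most $\tilde N$ and depth at most $\tilde L$, whereas the corollary demands exactly $\tilde N$ and $\tilde L$. I would add inactive neurons to each hidden layer and append shallow identity blocks realized via $x=\sigma(x)-\sigma(-x)$ at the end of the network; both operations leave the realized function unchanged while filling the architecture out to the prescribed size. I do not foresee any genuinely hard step; the argument is essentially bookkeeping, and the only mild subtlety — avoiding an implicit inversion of $x\mapsto x/\log_2 x$ — is cleanly sidestepped by the elementary bound $N\ge (N+3)/4$.
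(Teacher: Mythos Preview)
Your proposal is correct and is essentially identical to the paper's own argument, which is precisely the ``displayed calculation preceding the corollary'' that you reference: choose $N,L$ maximal subject to the width/depth constraints, use maximality to get $\tilde N<C_1(N+3)\log_2(8N+8)$ and the analogous bound for $L$, and then apply $N\ge (N+3)/4$ together with $N\le\tilde N$ to replace $\log_2(8N+8)$ by $\log_2(8\tilde N+8)$. The architectural padding you mention is not even needed under the paper's convention, since ``width $\tilde N$ and depth $\tilde L$'' already means width at most $\tilde N$ and depth at most $\tilde L$.
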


Theorem~\ref{thm:Main} and Corollary~\ref{cor:approxSmoothFun} characterize the approximation  error in terms of  total number of neurons (with an arbitrary distribution in width and depth) and the smoothness of the target function to be approximated. 
The only result in this direction we are aware of in the literature is Theorem~$4.1$ of \cite{yarotsky2019}.
It shows that ReLU FNNs with width $2d+10$ and depth $L$ achieve a nearly optimal error $\calO((\tfrac{L}{\ln L})^{-2s/d})$ for sufficiently large $L$ when approximating functions in the unit ball of $C^s([0,1]^d)$. This result is essentially a special case of Corollary~\ref{cor:approxSmoothFun} by setting $\tildeN=\calO(1)$ and $\tildeL$ sufficiently large.

\subsection{Contributions and related work}

Our key contributions can be summarized as follows.

\begin{enumerate}[(i)]

    \item \textbf{Upper bound}: 
    We provide a \textbf{quantitative} and \textbf{non-asymptotic} approximation error  $\calO(\|f\|_{C^s([0,1]^d)} N^{-2s/d}L^{-2s/d})$ when the ReLU FNN has width $\calO(N\ln N)$ and depth $\calO(L\ln L)$ for functions in $C^s([0,1]^d)$ in Theorem~\ref{thm:Main}. In real applications, the first question is to decide the network width and depth since they are two required hyper-parameters. The approximation error  as a function of width and depth in this paper can directly answer this question, while the approximation results in terms of the total number of parameters in the literature cannot, because there are many architectures sharing the same number of parameters. Actually, an immediate corollary of our theorem as we shall discuss can also describe our theory in terms of the total number of parameters. 
    Furthermore, our results contain approximation error  estimates for both wide networks with fixed finite depth and deep networks with fixed finite width.
    
    \item \textbf{Lower bound}: 
    Through the Vapnik-Chervonenkis (VC) dimension upper bound of ReLU FNNs in \cite{pmlr-v65-harvey17a}, 
    we prove a lower bound 
    \[C\big(N^2L^2 (\ln N)^3 (\ln L)^3 \big)^{-s/d}\quad \tn{for some positive constant $C$}\]
    for the approximation error  of the functions in the unit ball of $C^s([0,1]^d)$ approximated by ReLU FNNs with width $\calO(N\ln N)$ and depth $\calO(L\ln L)$
    in Section~\ref{sec:optimalityOfMainThm}. Thus, the approximation error  $\calO(N^{-2s/d}L^{-2s/d})$ in Theorem~\ref{thm:Main} is nearly optimal for the unit ball of $C^s([0,1]^d)$.
    
    \item \textbf{Approximation of polynomials}: 
    It is proved by construction in Proposition~\ref{prop:approxPolynomial} that ReLU FNNs with width $\calO(N)$ and depth $\calO(L)$ can approximate polynomials on $[0,1]^d$ with an approximation error  $\calO(N^{-L})$.  This is a non-trivial extension of the result $\calO(2^{-L})$ for polynomial approximation by fixed-width ReLU FNNs with depth $L$ in \cite{yarotsky2017}.

    \item \textbf{Uniform approximation}: 
    The approximation error  in this paper is measured in the $L^\infty([0,1]^d)$-norm as a result of Theorem~\ref{thm:Gap}. To achieve this, given a ReLU FNN  approximating the target function $f$ uniformly well on $[0,1]^d$ except for a small region,  we develop a technique to construct a new ReLU FNN with a similar size to approximate $f$ \textbf{uniformly} well  on $[0,1]^d$ in Theorem~\ref{thm:Gap}. This technique can be applied to improve approximation errors from the $L^p$-norm to the $L^\infty$-norm for other function spaces in general, e.g., the continuous function space in \cite{2019arXiv190605497S}, which is of independent interest. 
\end{enumerate}

In particular, if we denote the best approximation error  of functions in $C^s_u([0,1]^d)$ approximated by ReLU FNNs with width $\tildeN$ and depth $\tildeL$ as 
\begin{equation*}
    \varepsilon_{s,d}(\tildeN,\tildeL) \coloneqq \sup_{f\in C^s_u([0,1]^d)} \Big(\inf_{\phi\in \NNF(\NNwidth\le \tildeN;\,\NNdepth \le \tildeL)} \|\phi-f\|_{L^\infty([0,1]^d)}\Big)\quad \tn{for any $\tildeN,\tildeL\in\N^+$,}
\end{equation*}
where $C^s_u([0,1]^d)$ denotes the unit ball of $C^s([0,1]^d)$ defined by
\begin{equation*}
\Csub\coloneqq \big\{f\in C^s([0,1]^d):\|\partial^\bmalpha f\|_{L^\infty([0,1]^d)}\le 1,\ \tn{for all }\bmalpha\in \N^d \tn{ with } \|\bmalpha\|_1\le s\big\}.
\end{equation*}

By combining the upper and lower bounds stated above, we have 
\begin{equation*}
\underbrace{C_1(s,d)\cdot \Big(\tildeN^2\tildeL^2 {\color{black}\ln (\tildeN\tildeL)}\Big)^{-s/d}\ \le}_{\tn{proved in Section~\ref{sec:optimalityOfMainThm}}}
\   \varepsilon_{s,d}(\tildeN,\tildeL) \ 
 \underbrace{ \le\  C_2(s,d)\cdot\Big(\tfrac{\tildeN^2\tildeL^2}{{\color{black}(\ln \tildeN\ln \tildeL)^2} }\Big)^{-s/d} }_{\tn{shown in Corollary~\ref{cor:approxSmoothFun}}}, 
\end{equation*}
where $C_1(s,d)$ and $C_2(s,d)$ are two positive constants in $s$ and $d$, and $C_2(s,d)$ can be \textbf{explicitly} represented by $s$ and $d$.

The expressiveness of deep neural networks has been studied extensively from many perspectives, e.g., in terms of combinatorics \cite{NIPS2014_5422}, topology \cite{ 6697897}, VC-dimension \cite{Bartlett98almostlinear,Sakurai,pmlr-v65-harvey17a}, fat-shattering dimension \cite{Kearns,Anthony:2009}, information theory \cite{PETERSEN2018296}, and classical approximation theory \cite{Cybenko1989ApproximationBS,HORNIK1989359,barron1993,yarotsky18a,yarotsky2017,doi:10.1137/18M118709X,ZHOU2019,10.3389/fams.2018.00014,2019arXiv190501208G,2019arXiv190207896G,suzuki2018adaptivity,Ryumei,Wenjing,Bao2019ApproximationAO,2019arXiv191210382L,MO,shijun6}. In the early works of approximation theory for neural networks, the universal approximation theorem  \cite{Cybenko1989ApproximationBS,HORNIK1991251,HORNIK1989359} without approximation errors showed that, given any $\varepsilon>0$, there exists a sufficiently large neural network approximating a target function in a certain function space within an error $\varepsilon$. For one-hidden-layer neural networks and functions with integral representations, Barron \cite{barron1993,barron2018approximation} showed an asymptotic approximation error  $\calO(\frac{1}{\sqrt{N}})$ in the $L^2$-norm, leveraging an idea that is similar to Monte Carlo sampling for high-dimensional integrals. For very deep ReLU neural networks with width fixed as $\calO(d)$ and depth $\calO(L)$, Yarotsky \cite{yarotsky18a,yarotsky2019} showed that the nearly optimal approximation errors for Lipschitz continuous functions and functions in the unit ball of $C^s([0,1]^d)$  are $\calO(L^{-2/d})$ and $\calO((L/\ln L)^{-2s/d})$, respectively. Note that the  results are asymptotic in the sense that $L$ is required to be sufficiently large and the prefactors of these rates are unknown. To obtain a generic result that characterizes the approximation error  for arbitrary width and depth with known prefactors to guide applications, the authors of \cite{2019arXiv190605497S} demonstrated  that the nearly optimal approximation error  for ReLU FNNs with width $\calO(N)$ and depth $\calO(L)$ to approximate Lipschitz continuous functions on $[0,1]^d$ is $\calO(N^{-2/d}L^{-2/d})$.
Such a nearly optimal error is further improved to an optimal one, $\calO\big((N^2L^2\ln N)^{-1/d}\big)$, in a more recent paper \cite{shijun6}.
In this paper, we extend this generic framework to $C^s([0,1]^d)$ with a nearly optimal approximation error  $\calO(\|f\|_{C^s([0,1]^d)} N^{-2s/d}L^{-2s/d})$. 

Most related works are summarized in Table~\ref{tab:relatedWorks} for the comparison of our contributions in this paper and the results in the literature.

\begin{table}[!ht]    
    \caption{A summary of existing approximation errors of ReLU FNNs for $\tn{Lip}([0,1]^d)$ (the Lipschitz continuous function space) and $C^s_u([0,1]^d)$ (the unit ball of $C^s([0,1]^d)$).} 
    \label{tab:relatedWorks}
    \centering  
    \resizebox{0.985\textwidth}{!}{ 
        \begin{tabular}{ccccccccc} 
            \toprule
            paper    &      function class &  width & depth &  approximation error & $L^p([0,1]^d)$-norm & tightness & valid for \\
            \midrule
            \cite{yarotsky2017}& polynomial &$\calO(1)$  & $\calO(L)$ &  $\calO(2^{-L})$ & $p=\infty$ & &  any $L\in \N^+$\\
            
            this paper & polynomial & $\calO(N)$ & $\calO(L)$   & $\calO(N^{-L})$ & $p=\infty$ & & any $N,L\in\N^+$\\
            
            \midrule
            \cite{SHEN201974} & $\tn{Lip}([0,1]^d)$ & $\calO(N)$ & $3$  & $\calO(N^{-2/d})$ & $p\in [1,\infty)$ & nearly tight in $N$ & any $N\in \N^+$\\
            
            \cite{yarotsky18a} & $\tn{Lip}([0,1]^d)$ & $2d+10$ & $\calO(L)$  & $\calO(L^{-2/d})$ &  $p=\infty$&  nearly tight in $L$ & large $L\in \N^+$\\
            
            \cite{2019arXiv190605497S} & $\tn{Lip}([0,1]^d)$ & $\calO(N)$ & $\calO(L)$  & $\calO(N^{-2/d}L^{-2/d})$ &  $p\in[1,\infty]$&nearly tight in $N$ and $L$ & any $N,L\in \N^+$\\

            \cite{shijun6} & $\tn{Lip}([0,1]^d)$ & $\calO(N)$ & $\calO(L)$  &$\calO\big((N^2L^2\ln N)^{-1/d}\big)$
            &  $p\in[1,\infty]$&  tight in $N$ and $L$ & any $N,L\in \N^+$
            \\
            
            \midrule

            \cite{yarotsky2019} & $C^s_u([0,1]^d)$ & $2d+10$& $\calO(L)$  & $\calO\big((L/\ln L)^{-2s/d}\big)$ & $p=\infty$& neatly tight in $L$ & large $L\in \N^+$\\
            
            this paper & $C^s_u([0,1]^d)$ & $\calO(N\ln N)$ & $\calO(L\ln L)$  & $\calO(N^{-2s/d}L^{-2s/d})$ & $p=\infty$&nearly tight in $N$ and $L$ & any $N,L\in \N^+$\\    
            
            this paper & $C^s_u([0,1]^d)$ & $\calO(N)$ & $\calO(L)$  & $\calO\big((N/\ln N)^{-2s/d}(L/\ln L)^{-2s/d}\big)$ & $p=\infty$&nearly tight in $N$ and $L$ & any $N,L\in \N^+$\\   
            \bottomrule
        \end{tabular} 
    }
\end{table}

\subsection{Discussion}

We will discuss the comparison of our theory with existing works and the application scope  in machine learning.

\subsubsection*{Approximation errors in $\calO(N)$ and $\calO(L)$ versus $\calO(W)$}

It is fundamental and indispensable to characterize deep network approximation in terms of width $\calO(N)$\footnote{For simplicity, we omit $\calO(\cdot)$ in the following discussion.} and depth $\calO(L)$ simultaneously in realistic applications, while the approximation in terms of the number of nonzero parameters $W$ is probably only of interest in theory. First, networks used in practice are specified via width and depth and, therefore, Theorem~\ref{thm:Main} can provide an error  bound for such networks. However, existing results in $W$ cannot serve this purpose because they may be only valid for networks with other widths and depths. Theories in terms of $W$ essentially have a single variable to control the network size in three types of structures: 1) a fixed width $N$ and a varying depth $L$; 2) a fixed depth $L$ and a varying width $N$; 3) both the width and depth are controlled by the target error $\epsilon$ (e.g., $N$ is a polynomial of $\frac{1}{\epsilon^d}$ and $L$ is a polynomial of $\ln(\frac{1}{\epsilon})$). Therefore, given a network with arbitrary width $N$ and depth $L$, there might not be a known theory in terms of $W$ to quantify the performance of this structure. Second, the error  characterization in terms of $N$ and $L$ is more useful than that in terms of $W$, because most existing optimization and generalization analyses are based on $N$ and $L$ \cite{Arthur18,Yuan1,Chen1,Arora2019FineGrainedAO,AllenZhu2019LearningAG,Weinan2019,Weinan2019APE,Ji2020PolylogarithmicWS}, to the best of our knowledge. Approximation results in terms of $N$ and $L$ are more consistent with optimization and generalization analysis tools to obtain a full error  analysis.

Most existing approximation theories for deep neural networks so far focus on the approximation error  in the number of parameters $W$ \cite{Cybenko1989ApproximationBS,HORNIK1989359,barron1993,DBLP:journals/corr/LiangS16,yarotsky2017,poggio2017,DBLP:journals/corr/abs-1807-00297,PETERSEN2018296,10.3389/fams.2018.00014,yarotsky18a,Ryumei,2019arXiv190501208G,2019arXiv190207896G,Wenjing,2019arXiv191210382L,suzuki2018adaptivity,Bao2019ApproximationAO,Opschoor2019,yarotsky2019,doi:10.1137/18M118709X,Hadrien,doi:10.1002/mma.5575,ZHOU2019,MO,bandlimit}. Controlling two variables $N$ and $L$ in our theory is more challenging than controlling one variable $W$ in the literature. The characterization of deep network approximation in terms of $N$ and $L$ can imply an approximation error in terms of $W$, while this may not be true the other way around, e.g., our theorems cannot be derived from results in \cite{yarotsky2019}. Let us discuss the first type of structure mentioned in the previous paragraph, which includes the best-known result for a nearly optimal approximation error, $\calO((W/\ln W)^{-2s/d})$, for functions in the unit ball of $C^s([0,1]^d)$ using ReLU FNNs with $W$ parameters \cite{yarotsky2019}. As an example to show how Theorem~\ref{thm:Main} in terms of $N$ and $L$ can be applied to show a similar result in terms of $W$. The main idea is to specify the value of $N$ and $L$ in Theorem~\ref{thm:Main} to show the desired corollary. For example, if we let 
$N=\calO(1)$ in Theorem~\ref{thm:Main}, then we have the following corollary, which is essentially equivalent to Theorem~$4.1$ of  \cite{yarotsky2019}.

\begin{corollary}
    \label{coro:parameters}
    Given any function $f$ in the unit ball of $C^s([0,1]^d)$ with $s\in \N^+$,  there exists a function $\phi$ implemented by a ReLU FNN with $W$ parameters such that
    \begin{equation*}
    \|\phi-f\|_{L^\infty([0,1]^d)}\le \calO\big((\tfrac{W}{\ln W})^{-2s/d}\big)\quad \tn{for large $W\in\N^+$}.
    \end{equation*}
\end{corollary}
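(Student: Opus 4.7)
The plan is to specialize Theorem~\ref{thm:Main} to a network of constant width and then convert the resulting depth-indexed error estimate into one indexed by the total parameter count $W$. First, I would fix $N=1$ in Theorem~\ref{thm:Main}. For any $L\in\N^+$ and any $f$ in the unit ball of $C^s([0,1]^d)$, this yields a ReLU FNN $\phi$ of width at most $w_0 := 3C_1\log_2 8$ (a constant depending only on $s$ and $d$) and depth at most $D_L := C_2(L+2)\log_2(4L)+2d$ satisfying
\begin{equation*}
\|\phi-f\|_{L^\infty([0,1]^d)}\ \le\ C_3\,\|f\|_{C^s([0,1]^d)}\,L^{-2s/d}\ \le\ C_3\,L^{-2s/d},
\end{equation*}
where the second inequality uses $\|f\|_{C^s([0,1]^d)}\le 1$.

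Second, I would count the parameters. A ReLU FNN with width at most $w_0$ and depth at most $D_L$ carries at most $(w_0+1)(w_0+2)\,D_L$ weights and biases across consecutive layers. Since $w_0$ depends only on $s$ and $d$ and $D_L = \calO(L\ln L)$, the total number of parameters satisfies $W\le c_{s,d}\,L\ln L$ for some constant $c_{s,d}>0$ depending only on $s$ and $d$.

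Third, I would invert this relation. Given any sufficiently large $W$, take $L$ to be the largest integer with $c_{s,d}\,L\ln L \le W$; then $L\to\infty$ as $W\to\infty$, and the trivial estimate $\ln L\le \ln W$ gives $L\ge W/(c_{s,d}\ln W)$. Plugging this lower bound on $L$ back into the error estimate produces
\begin{equation*}
\|\phi-f\|_{L^\infty([0,1]^d)}\ \le\ C_3\,L^{-2s/d}\ =\ \calO\bigl((W/\ln W)^{-2s/d}\bigr),
\end{equation*}
which is exactly Corollary~\ref{coro:parameters}. The only step requiring even a moment's thought is the asymptotic inversion of $x\mapsto x\ln x$, and that is a standard one-line estimate, so there is no real obstacle: the entire content sits in the earlier Theorem~\ref{thm:Main}, and this corollary is a pure bookkeeping reformulation.
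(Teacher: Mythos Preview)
Your proposal is correct and follows exactly the approach the paper indicates: set $N=\calO(1)$ (you take $N=1$) in Theorem~\ref{thm:Main}, observe that the resulting network has constant width and depth $\calO(L\ln L)$, hence $\calO(L\ln L)$ parameters, and then invert to express the error in terms of $W$. The paper gives no further details beyond this sketch, so your write-up is in fact more explicit than the paper's own treatment.
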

As we can see in this example,  it is simple to derive Corollary~\ref{coro:parameters} above and Theorem~$4.1$ of \cite{yarotsky2019} using Theorem~\ref{thm:Main} in this paper. However, Theorem~\ref{thm:Main} cannot be derived from any existing result that characterizes approximation errors in terms of the number of parameters. 
Therefore, Theorem~\ref{thm:Main} goes beyond existing results on the approximation of deep neural networks.

Note that the logarithmic term in the approximation error   is not significant  in the case of $s>1$ since it can be cancelled out in the sense that  $\big(\tfrac{W}{\ln W}\big)^{-2s/d}\lesssim W^{-2\tilde{s}/d}$ for any $\tilde{s}\in (1,s)$. We remark that
Theorem~$3.3$ of \cite{yarotsky2019} provides a better approximation error by a logarithmic term: ReLU FNNs with $W$ nonzero parameters can approximate a function $f$ in the unit ball of $C^s([0,1]^d)$ within an error $\calO(W^{-2s/d})$. However, the network architecture therein is relatively complex and $s$-dependent as stated by the authors of \cite{yarotsky2019}. In fact, it contains many $s$-dependent blocks (sub-networks), making it difficult to implement if $s$ is not known in applications.
In contrast, our network architecture in Corollary~\ref{cor:approxSmoothFun} is simple and can be pre-specified once the width $\tildeN$ and depth $\tildeL$ therein are given.

\subsubsection*{Continuity of the weight selection}

We would like to discuss the continuity of the weight selection as a map $\Sigma:F_{s,d}\rightarrow \mathbb{R}^W$, where $F_{s,d}$ denotes the unit ball of the $d$-dimensional Sobolev space with smoothness $s$. For a fixed network architecture with a fixed number of parameters $W$, let $g:\mathbb{R}^W\rightarrow C([0,1]^d)$ be the map of realizing a ReLU FNN from a given set of parameters in $\mathbb{R}^W$ to a function in $C([0,1]^d)$. Suppose that the map $\Sigma$ is continuous such that $\|f-g(\Sigma(f))\|_{L^\infty([0,1]^d)}\leq \epsilon$ for all $f\in F_{s,d}$. Then $W\geq c \epsilon^{-d/s}$ with some constant $c$ depending only on $s$. This conclusion is given in Theorem 3 of \cite{yarotsky2017}, which is a corollary of Theorem~4.2 of \cite{Devore89optimalnonlinear} in a more general form.  These theorems mean that the weight selection map $\Sigma$ corresponding to our constructive proof in Theorem~\ref{thm:Main} in this paper is not continuous, since our error is better than $\calO(W^{-s/d})$. Theorem~4.2 of \cite{Devore89optimalnonlinear} is essentially a min-max criterion to evaluate weight selection maps maintaining continuity: the approximation error  obtained by minimizing over all continuous selections $\Sigma$ and network realizations $g$ and maximizing over all target functions is bounded below by $\calO(W^{-s/d})$. In the worst case, a continuous weight selection cannot enjoy an approximation error  beating $\calO(W^{-s/d})$. However, Theorem~4.2 of \cite{Devore89optimalnonlinear} does not exclude the possibility that most functions of interest in practice may still enjoy a continuous weight selection with the approximation error  in Theorem~\ref{thm:Main}. 
It would be interesting in future work to investigate whether continuous weight selection is possible for many functions commonly encountered in real applications.

\subsubsection*{Application scope of our theory in machine learning}

In deep learning, given a target function $f$, the final goal is to train a function $\phi(\bm{x};\bm{\theta})$ approximating $f$ well, where
 $\phi(\bm{x};\bm{\theta})$ is a function in $\bmx\in \calX$ realized by a network architecture parameterized with $\bmtheta\in\R^W$. To get the best solution, one needs to identify the expected risk minimizer
 \begin{equation*}
	\bm{\theta}_{\mathcal{D}}\coloneqq \argmin_{\bm{\theta}\in\R^W}R_{\mathcal{D}}(\bm{\theta}),\quad \tn{where}\ R_{\mathcal{D}}(\bm{\theta})=
	\mathbb{E}_{\bm{x}\sim U(\calX)} \left[\ell\big( \phi(\bm{x};\bm{\theta}),f(\bm{x})\big)\right]
\end{equation*}
with a loss function usually taken as $\ell(y,y')=\frac{1}{2}|y-y'|^2$ and an unknown data distribution $U(\calX)$.

In practice, only data samples $\{( \bm{x}_i,f(\bm{x}_i){ )}\}_{i=1}^n$ instead of $f$ and $U(\calX)$ are available. Thus,  
the empirical risk minimizer $\bm{\theta}_{\mathcal{S}}$ is used to model/approximate the expected risk minimizer $\bm{\theta}_{\mathcal{D}}$, where
\begin{equation}\label{eqn:emloss}
	\bm{\theta}_{\mathcal{S}}\coloneqq\argmin_{\bm{\theta}\in\R^W}R_{\mathcal{S}}(\bm{\theta}),\quad \tn{where}\ R_{\mathcal{S}}(\bm{\theta}):=
	\frac{1}{n}\sum_{i=1}^n \ell\big( \Phi(\bm{x}_i,\bm{\theta}),f(\bm{x}_i)\big).
\end{equation}

 In real applications, only a numerical solution (denoted as $\bm{\theta}_{\mathcal{N}}$) is achieved when a numerical optimization method is applied to solve \eqref{eqn:emloss}.
 Hence,   the actually learned function 
 generated by the network  is  $\phi(\bm{x};\bm{\theta}_{\mathcal{N}})$.
Since $R_{\mathcal{D}}(\bm{\theta}_{\mathcal{N}})$ is the expected inference error  over all possible data samples, it can quantify how good  $\phi(\bm{x};\bm{\theta}_{\mathcal{N}})$ is. Note that
\begin{align}\label{eqn:gen}
		\myMathResize[0.8]{
		R_{\mathcal{D}}(\bm{\theta}_{\mathcal{N}})
		}
		&\myMathResize[0.8]{
		=\underbrace{[R_{\mathcal{D}}(\bm{\theta}_{\mathcal{N}})-R_{\mathcal{S}}(\bm{\theta}_{\mathcal{N}})]}_{\tn{GE}}
		+\underbrace{[R_{\mathcal{S}}(\bm{\theta}_{\mathcal{N}})-R_{\mathcal{S}}(\bm{\theta}_{\mathcal{S}})]}_{\tn{OE}}   +\underbrace{[R_{\mathcal{S}}(\bm{\theta}_{\mathcal{S}})-R_{\mathcal{S}}(\bm{\theta}_{\mathcal{D}})]}_{\tn{$\le 0$ by \eqref{eqn:emloss}}} +\underbrace{[R_{\mathcal{S}}(\bm{\theta}_{\mathcal{D}})-R_{\mathcal{D}}(\bm{\theta}_{\mathcal{D}})]}_{\tn{GE}}
		+\underbrace{R_{\mathcal{D}}(\bm{\theta}_{\mathcal{D}})}_{\tn{AE}}
		}
		\nonumber \\
			&\myMathResize[0.8]{
			\le \underbrace{R_{\mathcal{D}}(\bm{\theta}_{\mathcal{D}})}_{\tn{\color{blue}Approximation error  (AE)}} 
			\ +\  
			\underbrace{[R_{\mathcal{S}}(\bm{\theta}_{\mathcal{N}})-R_{\mathcal{S}}(\bm{\theta}_{\mathcal{S}})]}_{\tn{\color{blue}Optimization error  (OE)}}
			\  + \  
			\underbrace{[R_{\mathcal{D}}(\bm{\theta}_{\mathcal{N}})-R_{\mathcal{S}}(\bm{\theta}_{\mathcal{N}})]
			+[R_{\mathcal{S}}(\bm{\theta}_{\mathcal{D}})-R_{\mathcal{D}}(\bm{\theta}_{\mathcal{D}})]}_{\tn{\color{blue}Generalization error  (GE)}}
			}.
\end{align}

Constructive approximation provides an upper bound of $R_{\mathcal{D}}(\bm{\theta}_{\mathcal{D}})$ in terms of the network size. For example, Theorem~\ref{thm:Main} and its corollaries provide an upper bound $\calO(\|f\|_{C^s([0,1]^d)}N^{-2s/d}L^{-2s/d})$ of $R_{\mathcal{D}}(\bm{\theta}_{\mathcal{D}})$ for $C^s([0,1]^d)$. The second term of \eqref{eqn:gen} is bounded by the optimization error  of the numerical algorithm applied to solve the empirical loss minimization problem in \eqref{eqn:emloss}. The study of the bounds for the third and fourth terms is referred to as the generalization error  analysis of neural networks. 
	
One of  the key targets in  the area of deep learning is to develop algorithms to reduce  $R_\calD{(\bmtheta_\calN)}$.
Our analysis here provides an upper bound of the approximation  error  $R_{\mathcal{D}}(\bm{\theta}_{\mathcal{D}})$ for smooth functions, which is crucial to control $R_\calD{(\bmtheta_\calN)}$.   Instead of deriving an approximator to attain  the  error  bound,  deep learning algorithms aim to identify a solution $\phi(\bm{x};\bm{\theta}_{\mathcal{N}})$ reducing the generalization and optimization errors in \eqref{eqn:gen}.  Solutions minimizing both generalization and optimization errors will lead to a good solution only if we also have a good upper bound estimate of $R_{\mathcal{D}}(\bm{\theta}_{\mathcal{D}})$ as shown in \eqref{eqn:gen}.   Independent of whether our analysis here  leads to a good approximator, which is an interesting topic to pursue,  the  theory here does provide a key ingredient in the error  analysis of deep learning algorithms.

We would like to emphasize that the introduction of the ReLU activation function to image classification is one of the key techniques that boost the performance of deep learning \cite{NIPS2012_c399862d} with surprising generalization, which is the main reason that we focus on ReLU FNNs in this paper.

\vspace{8pt}
\textbf{Organization}: The rest of the present paper is organized as follows.
In Section~\ref{sec:approxSmoothFun}, we prove Theorem~\ref{thm:Main} by combining two theorems (Theorems~\ref{thm:Gap} and \ref{thm:MainGap}) that will be proved later. We will also discuss the optimality of Theorem~\ref{thm:Main} in Section~\ref{sec:approxSmoothFun}. Next, Theorem~\ref{thm:Gap} will be proved in Section~\ref{sec:badRegionRemoval} while Theorem~\ref{thm:MainGap} will be shown in Section~\ref{sec:4}. Several propositions supporting Theorem~\ref{thm:MainGap} will be presented in Section~\ref{sec:proofOfProposition}. Finally, Section~\ref{sec:conclusion} concludes this paper with a short discussion.

\section{Approximation of smooth functions}
\label{sec:approxSmoothFun}

In this section, we will prove the quantitative approximation error  in Theorem~\ref{thm:Main} by construction and discuss its tightness. 
Notation throughout the proof will be
summarized in Section~\ref{sec:notation}. The proof of Theorem~\ref{thm:Main} is mainly based on Theorems~\ref{thm:Gap} and \ref{thm:MainGap}, which will be proved in Sections~\ref{sec:badRegionRemoval} and \ref{sec:4}, respectively. To show the tightness of Theorem~\ref{thm:Main}, we will introduce the VC-dimension in Section~\ref{sec:optimalityOfMainThm}.

\subsection{Notation}
\label{sec:notation}

Now let us summarize the main notation of this paper as follows.
\begin{itemize}
\item Let $\R$, $\Q$, and $\Z$ denote the set of real numbers, rational numbers, and integers, respectively.
    
    \item Let $\N$ and $\N^+$ denote the set of natural numbers and positive natural numbers, respectively.  That is,
    $\N^+=\{1,2,3,\cdots\}$ and $\N=\N^+\bigcup\{0\}$.

	\item 
	Vectors and matrices are denoted in a bold font.  Standard vectorization is adopted in matrix and vector computation. For example, a scalar plus a vector means adding the scalar to each entry of the vector. Additionally, ``['' and ``]''  are used to  partition matrices (vectors) into blocks, e.g., $\bmA=\left[\begin{smallmatrix}\bmA_{11}&\bmA_{12}\\ \bmA_{21}&\bmA_{22}\end{smallmatrix}\right]$ and $\bm{v}=\left[\def\arraystretch{0.748}\begin{array}{c}
		v_1  \\
		\vdots \\
		v_d
	\end{array}\right]=[v_1,\cdots,v_d]^T\in \R^d$.
	
    \item Let $\one_{S}$ be the characteristic (indicator) function on a set $S$; i.e., $\one_{S}$ is equal to $1$ on $S$ and $0$ outside $S$.
    \item Let $\ball(\bmx,r)\subseteq \R^d$ be the closed ball with a center $\bmx\subseteq \R^d$ and a radius $r\ge 0$. 
    
    \item  Similar to ``$\min$'' and ``$\max$'', let $\middleValue(x_1,x_2,x_3)$ be the middle value of three inputs $x_1$, $x_2$, and $x_3$\footnote{``$\middleValue$'' can be defined via $\middleValue(x_1,x_2,x_3)=x_1+x_2+x_3-\max(x_1,x_2,x_3)-\min(x_1,x_2,x_3)$, which can be implemented by a ReLU FNN.}. For example, $\middleValue(2,1,3)=2$ and $\middleValue(3,2,3)=3$.
    
    \item The set difference of two sets $A$ and $B$ is denoted by $A\backslash B:=\{x:x\in A,\ x\notin B\}$. 
    
    	\item	For a real number $p\in[1,\infty)$, the $p$-norm of $\bmx=[x_1,x_2,\cdots,x_d]^T\in \R^d$ is defined by
    \begin{equation*}
    	\|\bmx\|_p\coloneqq \big(|x_1|^p+|x_2|^p+\cdots+|x_d|^p\big)^{1/p}.
    \end{equation*}

    \item For any $x\in \R$, let $\lfloor x\rfloor:=\max \{n: n\le x,\ n\in \Z\}$ and $\lceil x\rceil:=\min \{n: n\ge x,\ n\in \Z\}$.
    
    \item Assume $\bm{n}\in \N^d$; then $f(\bm{n})=\mathcal{O}(g(\bm{n}))$ means that there exists positive $C$ independent of $\bm{n}$, $f$, and $g$ such that $ f(\bm{n})\le C g(\bm{n})$ when all entries of $\bm{n}$ go to $+\infty$.
    
    \item 
    The modulus of continuity of a continuous function $f\in C([0,1]^d)$ is defined as 
    \begin{equation*}
    \omega_f(r)\coloneqq \sup\big\{|f(\bmx)-f(\bmy)|: \|\bmx-\bmy\|_2\le r,\ \bmx,\bmy\in [0,1]^d\big\}\quad\tn{for any $r\ge0$.}
    \end{equation*}

    \item 
    A $d$-dimensional multi-index is a $d$-tuple
    $\bmalpha=[\alpha_1,\alpha_2,\cdots,\alpha_d]^T\in \N^d.$
    Several related notation are listed below.
    \begin{itemize}
        \item  $\|\bmalpha\|_1=|\alpha_1|+|\alpha_2|+\cdots+|\alpha_d|$;
        \item $\bmx^\bmalpha=x_1^{\alpha_1}  x_2^{\alpha_2} \cdots x_d^{\alpha_d}$, where $\bmx=[x_1,x_2,\cdots,x_d]^T$;
        \item $\bmalpha!=\alpha_1!\alpha_2!\cdots \alpha_d!$;
        \item  $
        \partial^\bmalpha=\tfrac{\partial^{\alpha_1}}{\partial x_1^{\alpha_1}}\tfrac{\partial^{\alpha_2}}{\partial x_2^{\alpha_2}}\cdots \tfrac{\partial^{\alpha_d}}{\partial x_d^{\alpha_d}}
        $.
    \end{itemize}

 	\item For any closed cube $Q\subseteq \R^d$ and a real number $r>0$, let $rQ$ denote the closed cube which shares the same center of $Q$ and whose sidelength is the product of $r$ and the sidelength of $Q$.

    \item 
    Given any $K\in N^+$ and $\delta\in (0,\tfrac{1}{K})$, define a trifling region  $\Omega([0,1]^d,K,\delta)$ of $[0,1]^d$ as 
    \begin{equation}
    \label{eq:badRegionDef}
    \Omega([0,1]^d,K,\delta)\coloneqq\bigcup_{i=1}^{d} \Big\{\bmx=[x_1,x_2,\cdots,x_d]^T\in[0,1]^d: x_i\in \cup_{k=1}^{K-1}(\tfrac{k}{K}-\delta,\tfrac{k}{K})\Big\}.
    \end{equation}
    In particular, $\Omega([0,1]^d,K,\delta)=\emptyset$ if $K=1$. See Figure~\ref{fig:region} for two examples of the trifling region.
    \begin{figure}[ht!]        
        \centering
        \begin{minipage}{0.95\textwidth}
		\centering
		\begin{subfigure}[b]{0.4\textwidth}
			\centering
			\includegraphics[width=0.9\textwidth]{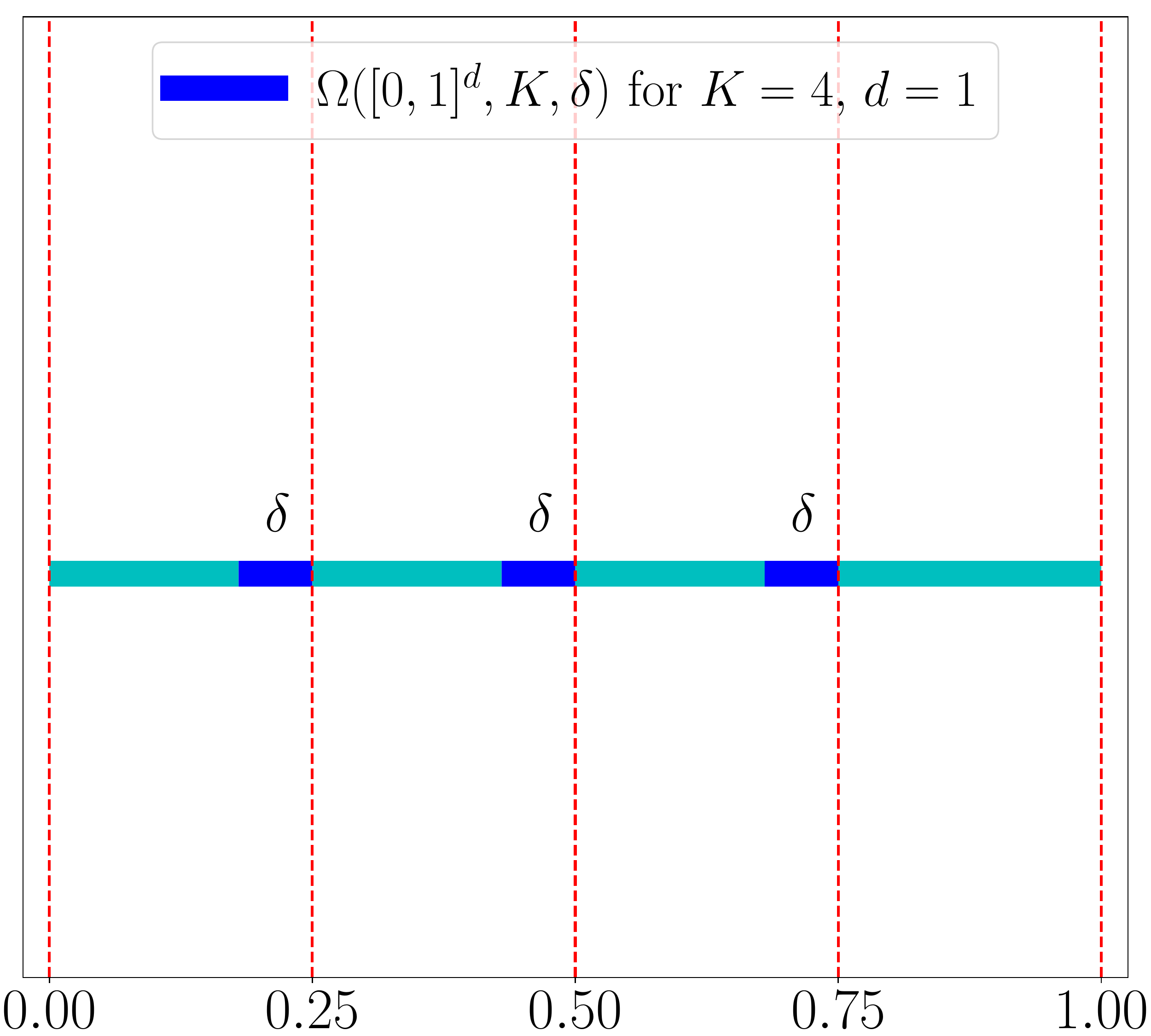}
			\subcaption{}
		\end{subfigure}
			\begin{minipage}{0.01\textwidth}
				\
			\end{minipage}
		\begin{subfigure}[b]{0.4\textwidth}
			\centering
			\includegraphics[width=0.9\textwidth]{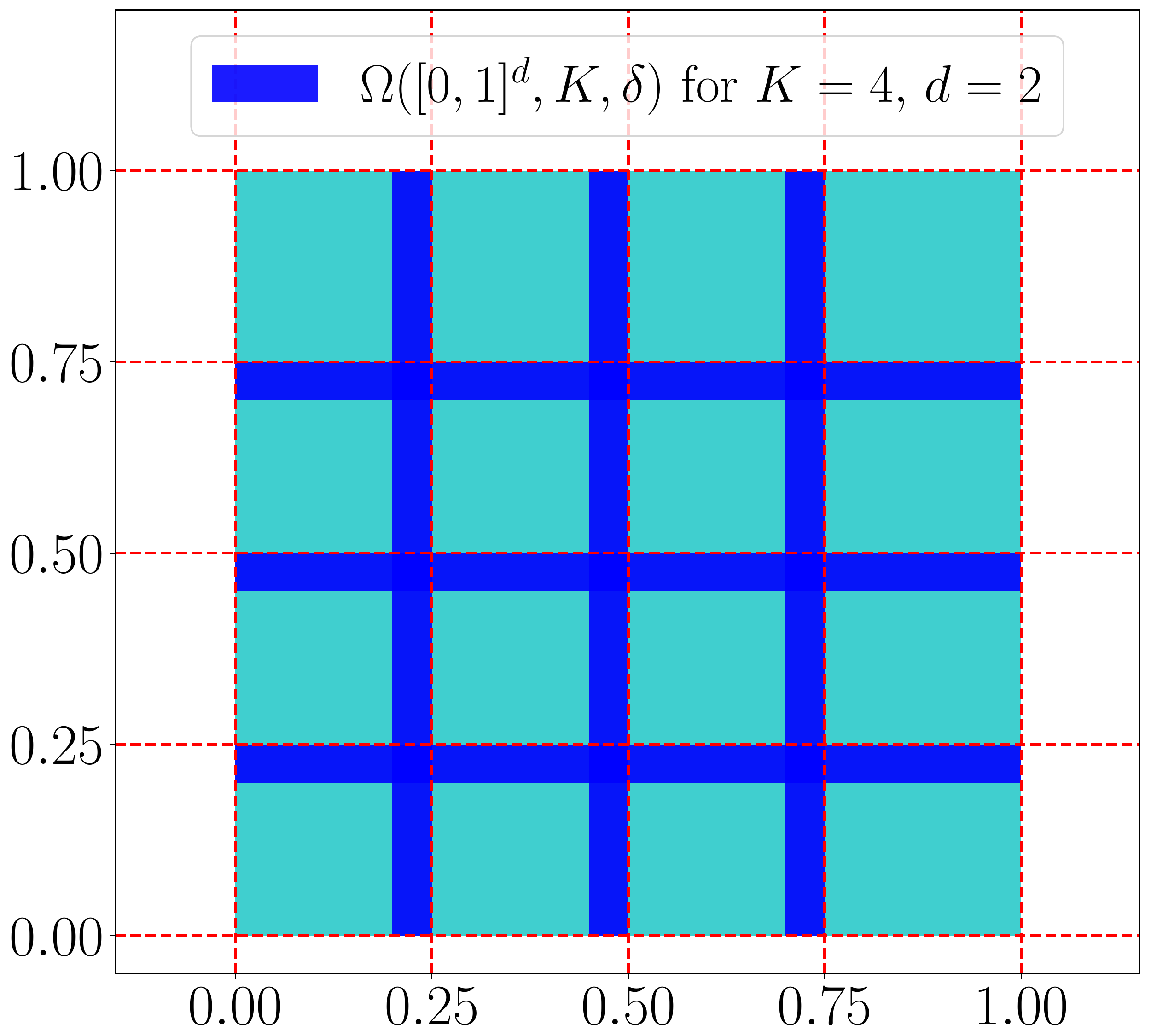}
			\subcaption{}
		\end{subfigure}
	\end{minipage}
        \caption{Two examples of the trifling region. (a)  $K=4,d=1$. (b) $K=4,d=2$.}
        \label{fig:region}
    \end{figure}

    \item
    Given $E\subseteq \R^d$, let $C^s(E)$ denote the set containing all  functions, all $k$-th order partial derivatives of which exist and are continuous on $E$ for any $k\in \N$ with $0\le k\le s$. In particular,   $C^0(E)$, also denoted by $C(E)$, is the set of continuous functions on $E$. For the case $s=\infty$, $C^\infty(E)=\bigcap_{s=0}^\infty C^s(E)$.
    The $C^s$-norm is defined by
    \begin{equation*}
    	\|f\|_{C^s(E)}\coloneqq \max \big\{\|\partial^\bmalpha f\|_{L^\infty(E)}: \bmalpha\in \N^d \tn{ with } \|\bmalpha\|_1\le s\big\}.
    \end{equation*} 
    Generally, $E$ is assigned as $[0,1]^d$ in this paper. In particular, the closed unit ball of $C^s([0,1]^d)$ is denoted by
    \[\Csub\coloneqq \big\{f\in C^s([0,1]^d):\|f\|_{C^s([0,1]^d)}\le 1\big\}.\]
    
    \item We use ``$\NNF$" to mean ``functions implemented by ReLU FNNs'' for short and use Python-type notation to specify a class of functions implemented by ReLU FNNs with several conditions. To be precise, we use $\NNF(\tn{c}_1\NNspace \tn{c}_2\NNspace \cdots\NNspace \tn{c}_m)$ to denote the function set containing all functions implemented by ReLU FNN architectures satisfying $m$ conditions given by $\{\tn{c}_i\}_{1\leq i\leq m}$,
    each of which may specify the number of inputs ($\NNinput$), the number of outputs ($\NNoutput$), the total number of nodes in all hidden layers ($\NNnode$), the number of hidden layers ($\NNdepth$), the number of total parameters ($\NNparameter$), and the width in each hidden layer ($\NNwidthvec$), the maximum width of all hidden layers ($\NNwidth$), etc. For example, if $\phi\in \NNF(\NNinput=2 \NNspace \NNwidthvec=[100,100]\NNspace\NNoutput=1)$, then $\phi$ is a function  satisfying the following conditions.
    \begin{itemize}
        \item $\phi$ maps from $\R^2$ to $\R$.
        \item $\phi$ is implemented by a ReLU FNN with two hidden layers and the number of nodes in each hidden layer being $100$.
    \end{itemize}

    \item Let $\sigma:\R\to \R$ denote the rectified linear unit (ReLU), i.e. $\sigma(x)=\max\{0,x\}$. With the abuse of notation, we define $\sigma:\R^d\to \R^d$ as $\sigma(\bmx)=\left[\begin{array}{c}
              \max\{0,x_1\}  \\
              \vdots \\
              \max\{0,x_d\}
     \end{array}\right]$ for any $\bmx=[x_1,\cdots,x_d]^T\in \R^d$.

    \item For a function $\phi\in \NNF(\NNinput=d\NNspace\NNwidthvec=[N_1,N_2,\cdots,N_L]\NNspace\NNoutput=1)$, if we set $N_0=d$ and $N_{L+1}=1$, then the architecture of the network implementing $\phi$ can be briefly described as follows:
    \begin{equation*}
    	\begin{aligned}
    		\bm{x}=\widetilde{\bm{h}}_0 \myto{2.2}^{\bm{W}_0,\ \bm{b}_0} \bm{h}_1\myto{1.12}^{\sigma} \tilde{\bm{h}}_1 \quad \cdots\quad
    		\myto{2.7}^{\bm{W}_{L-1},\ \bm{b}_{L-1}} \bm{h}_L\myto{1.12}^{\sigma} \tilde{\bm{h}}_L \myto{2.2}^{\bm{W}_{L},\ \bm{b}_{L}} \bm{h}_{L+1}=\phi(\bm{x}),
    	\end{aligned}
    \end{equation*}
    where $\bm{W}_i\in \R^{N_{i+1}\times N_{i}}$ and $\bm{b}_i\in \R^{N_{i+1}}$ are the weight matrix and the bias vector in the $i$-th affine linear transform $\calL_i$ in $\phi$, respectively, i.e., 
    \[\bm{h}_{i+1} =\bm{W}_i\cdot \tilde{\bm{h}}_{i} + \bm{b}_i\eqqcolon \calL_i(\tilde{\bm{h}}_{i})\quad \tn{for $i=0,1,\cdots,L$}\]  
    and
    \[
    \tilde{\bm{h}}_i=\sigma(\bm{h}_i)\quad \tn{for $i=1,2,\cdots,L$.}
    \]
    In particular, $\phi$ can be represented in a form of function compositions as follows
    \begin{equation*}
    	\phi =\calL_L\circ\sigma\circ\calL_{L-1}\circ \sigma\circ \ \cdots \  \circ \sigma\circ\calL_1\circ\sigma\circ\calL_0,
    \end{equation*}
    which has been illustrated in Figure~\ref{fig:ReLUeg}.
    \begin{figure}[!htp]        
    	\centering
    	\includegraphics[width=0.7\textwidth]{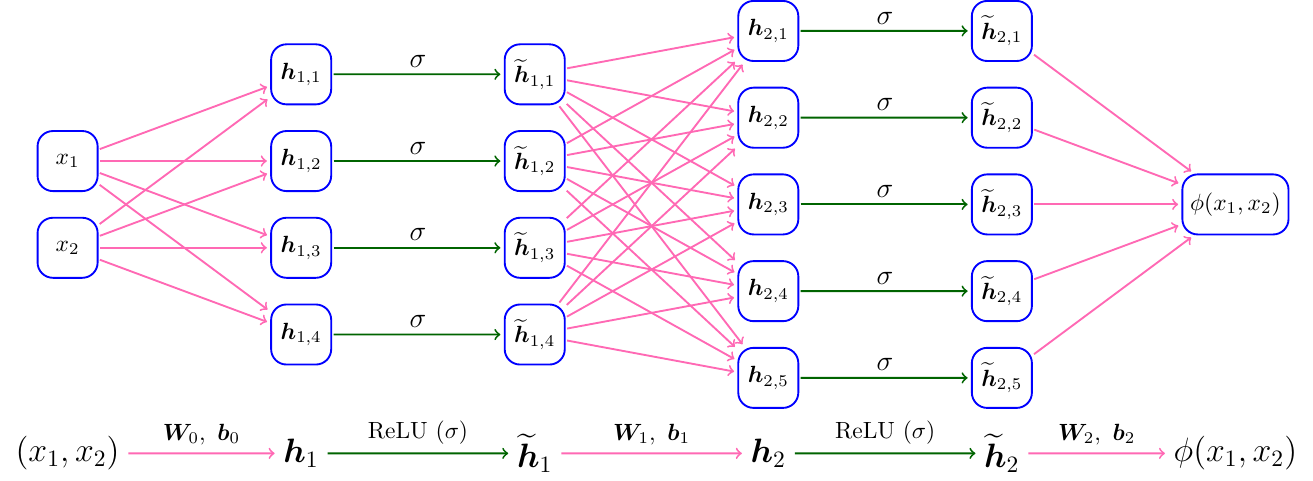}
    	\caption{An example of a ReLU FNN with width $5$ and depth $2$. }
    	\label{fig:ReLUeg}
    \end{figure}

    \item The expression ``a network (architecture) with (of) width $N$ and depth $L$'' means
    \begin{itemize}
    	\item The maximum width of this network (architecture) for all \textbf{hidden} layers  is no more than $N$.
    	\item The number of \textbf{hidden} layers of this network (architecture) is  no more than $L$.
    \end{itemize}

\item For any $\theta\in[0,1)$, suppose its binary representation is $\theta=\sum_{\ell=1}^{\infty}\theta_\ell2^{-\ell}$ with $\theta_\ell\in \{0,1\}$. We introduce a special notation $\bin   0.\theta_1\theta_2\cdots \theta_L$ to denote the $L$-term binary representation of $\theta$, i.e., $\bin 0.\theta_1\theta_2\cdots \theta_L\coloneqq \sum_{\ell=1}^{L}\theta_\ell2^{-\ell}\approx \theta$. 
\end{itemize}

\subsection{Proof of Theorem~\ref{thm:Main}}
\label{sec:proofOfMainThm}
The introduction of the trifling region $\Omega([0,1]^d,K,\delta)$ is due to the fact that ReLU FNNs cannot approximate a step function uniformly well (as the ReLU activation function is continuous), which is also the reason for the main difficulty in obtaining approximation errors in the $L^\infty([0,1]^d)$-norm in our previous papers \cite{SHEN201974,2019arXiv190605497S}. The trifling region is a key technique to simplify the proofs of theories in \cite{SHEN201974,2019arXiv190605497S} as well as the proof of Theorem~\ref{thm:Main}. 

First, we present Theorem~\ref{thm:Gap} to show that, as long as good uniform approximation by a ReLU FNN can be obtained outside the trifling region, the uniform approximation error  can also be well controlled inside the trifling region when the network size is slightly increased. Second, as a simplified version of Theorem~\ref{thm:Main} ignoring the approximation error  in the trifling region $\Omega([0,1]^d,K,\delta)$, Theorem~\ref{thm:MainGap} shows the existence of a ReLU FNN approximating a target smooth function uniformly well outside the trifling region. Finally, Theorems~\ref{thm:Gap} and \ref{thm:MainGap} immediately lead to Theorem~\ref{thm:Main}. Theorem~\ref{thm:Gap} can be applied to improve the theories in \cite{SHEN201974,2019arXiv190605497S} to obtain approximation errors in the $L^\infty([0,1]^d)$-norm.

\begin{theorem}
    \label{thm:Gap}
    Given any $\varepsilon>0$, $N,L,K\in \N^+$, and $\delta\in(0, \tfrac{1}{3K}]$,
    assume $f\in C([0,1]^d)$ and $\tildephi$ is a function implemented by a ReLU FNN with width $N$ and depth $L$. If 
    \begin{equation*}
    |\tildephi(\bmx)-f(\bmx)|\le \varepsilon\quad \tn{for any $\bmx\in [0,1]^d\backslash \Omega([0,1]^d,K,\delta)$,}
    \end{equation*}
    then there exists a new function  $\phi$ implemented by a ReLU FNN with width $3^d(N+4)$ and depth $L+2d$ such that 
    \begin{equation*}
    |\phi(\bmx)-f(\bmx)|\le \varepsilon+d\cdot\omega_f(\delta)\quad \tn{for any $\bmx\in [0,1]^d$.}
    \end{equation*}
\end{theorem}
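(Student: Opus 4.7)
The plan is to build $\phi$ by running many copies of $\tildephi$ on finely shifted inputs in parallel, and combining them with nested $\middleValue$ operations along each coordinate axis. The key point is that for each coordinate, at least two of three shifts $\{-\delta, 0, +\delta\}$ escape the 1D trifling set, and if at least two of three inputs to $\middleValue$ lie in an interval, then so does their middle value.

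First I would establish the combinatorial shift lemma: since $\delta \le 1/(3K)$, the 1D bad set $B \coloneqq \bigcup_{k=1}^{K-1}(k/K - \delta,\, k/K)$ consists of intervals of width $\delta$ separated by gaps of length at least $2\delta$, so for any $x \in [0, 1]$ at most one of the clipped points $\{x - \delta,\, x,\, x + \delta\}$ lies in $B$. The endpoints $0, 1$ are outside $B$, so clipping at the boundary of $[0,1]$ never creates a bad point.

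Next, define $\phi_0 = \tildephi$ and recursively, for $i = 1, \ldots, d$,
\[
\phi_i(\bmx) \coloneqq \middleValue\bigl(\phi_{i-1}(\bmx - \delta \bme_i),\ \phi_{i-1}(\bmx),\ \phi_{i-1}(\bmx + \delta \bme_i)\bigr),
\]
with each input clipped coordinatewise to $[0,1]^d$, and set $\phi \coloneqq \phi_d$. The central claim, proved by induction on $i$, is
\[
|\phi_i(\bmx) - f(\bmx)| \le \varepsilon + i\cdot \omega_f(\delta) \quad \text{whenever } x_j \notin B \text{ for all } j > i.
\]
The base case $i = 0$ is exactly the hypothesis on $\tildephi$ on $[0,1]^d \setminus \Omega([0,1]^d, K, \delta)$. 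For the inductive step, the shift lemma supplies two shifts $s \in \{-\delta, 0, \delta\}$ with $x_i + s \notin B$; since the restriction on $x_j$ for $j > i$ is unaffected by an $i$-th coordinate perturbation, the induction hypothesis gives two of the three values $\phi_{i-1}(\bmx + s \bme_i)$ within $\varepsilon + (i - 1)\omega_f(\delta)$ of $f(\bmx + s\bme_i)$, and hence within $\varepsilon + i\cdot\omega_f(\delta)$ of $f(\bmx)$ after using $|f(\bmx + s\bme_i) - f(\bmx)| \le \omega_f(\delta)$. The elementary fact that the median of three numbers, two of which lie in an interval, also lies in that interval, closes the step; taking $i = d$ yields the desired uniform bound on $[0,1]^d$.

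Finally, $\phi_i$ is implemented as three parallel copies of a subnetwork realizing $\phi_{i-1}$, each fed an affinely shifted and clipped input, whose outputs are combined by a constant-size ReLU block computing $\middleValue$ (doable in depth $2$ using $\sigma$-formulas for $\min$, $\max$, and $\middleValue(a,b,c) = a + b + c - \max(a,b,c) - \min(a,b,c)$). Unrolling yields $3^d$ parallel copies of $\tildephi$ followed by a ternary tree of $d$ middle-value layers; the affine shifts and clippings at the inputs can be absorbed into an enlarged first layer, giving total depth $L + 2d$ and width $3^d(N + 4)$. The main obstacle I anticipate is bookkeeping the induction invariant cleanly — the set of coordinates already known to be good must shrink by exactly one at each middle-value layer — together with verifying that the $\middleValue$ gadget and the input preprocessing fit within the stated $+4$ width overhead per copy of $\tildephi$.
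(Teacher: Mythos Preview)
Your proposal is correct and follows essentially the same approach as the paper: both iterate $\phi_i(\bmx)=\middleValue\bigl(\phi_{i-1}(\bmx-\delta\bme_i),\phi_{i-1}(\bmx),\phi_{i-1}(\bmx+\delta\bme_i)\bigr)$ coordinate by coordinate, prove by induction that $|\phi_i(\bmx)-f(\bmx)|\le \varepsilon+i\,\omega_f(\delta)$ on the set where $x_j\notin B$ for $j>i$, and realize $\middleValue$ as a width-$14$, depth-$2$ ReLU block to get width $3^d(N+4)$ and depth $L+2d$. One small point: your clipping to $[0,1]^d$ is unnecessary (and awkward to absorb into an affine first layer, since clipping is not affine); the paper simply exploits that a ReLU FNN is defined on all of $\R^d$ and never needs the approximation guarantee at the one shift that may fall outside the good set.
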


\begin{theorem}
	\label{thm:MainGap}
	Assume that $f\in C^s([0,1]^d)$ satisfies $\|\partial^\bmalpha f\|_{L^\infty([0,1]^d)}\le 1$ for any $\bmalpha\in\N^d$ with $\|\bmalpha\|_1\le s$. For any $N,L\in \N^+$, there exists a function $\phi$ implemented by a ReLU FNN with width $16s^{d+1}d(N+2)\log_2(8N)$ and depth $18s^2(L+2)\log_2(4L)$ such that
	\begin{equation*}
		|\phi(\bmx)-f(\bmx)|\le 84(s+1)^d8^sN^{-2s/d}L^{-2s/d}\quad \tn{for any $\bmx\in [0,1]^d\backslash \Omega([0,1]^d,K,\delta)$,}
	\end{equation*}
	where $K=\lfloor N^{1/d}\rfloor^2\lfloor L^{2/d}\rfloor$ and $\delta$ is an arbitrary number in  $(0,\tfrac{1}{3K}]$.
\end{theorem}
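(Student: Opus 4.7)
The plan is the classic local-Taylor-expansion argument: partition $[0,1]^d$ into a fine grid, replace $f$ by its order-$(s-1)$ Taylor polynomial on each cell, approximate each such polynomial by a ReLU subnet via Proposition~\ref{prop:approxPolynomial}, and glue the pieces together using a ReLU-realisable ``cube-locator'' whose unavoidable discontinuities are precisely what force us to discard the trifling region. Concretely, take $K=\lfloor N^{1/d}\rfloor^2\lfloor L^{2/d}\rfloor$ so that $K^{-s}\lesssim N^{-2s/d}L^{-2s/d}$, and index the sub-cubes $Q_\bmbeta\coloneqq \bmbeta/K+[0,1/K]^d$ by $\bmbeta\in\{0,\ldots,K-1\}^d$. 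Taylor's theorem yields, for $\bmx\in Q_\bmbeta$,
\[
f(\bmx)=\sum_{\|\bmalpha\|_1\le s-1}\tfrac{\partial^\bmalpha f(\bmbeta/K)}{\bmalpha!}(\bmx-\bmbeta/K)^\bmalpha+R_\bmbeta(\bmx),\qquad |R_\bmbeta(\bmx)|\le \tfrac{d^s}{s!}K^{-s},
\]
using $\|\partial^\bmalpha f\|_{L^\infty}\le 1$ for every $\|\bmalpha\|_1\le s$.

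Next, locate the cube. A standard one-dimensional ReLU ``staircase'' realises $t\mapsto \lfloor Kt\rfloor$ exactly on every sub-interval $[k/K,(k+1)/K-\delta]$; applied in $d$ parallel copies it produces a subnet $\bmpsi:[0,1]^d\to\{0,\ldots,K-1\}^d$ with $\bmpsi(\bmx)=\lfloor K\bmx\rfloor\eqqcolon \bmbeta(\bmx)$ for every $\bmx\notin \Omega([0,1]^d,K,\delta)$. The hypothesis $\delta\le 1/(3K)$ is what keeps this construction well-defined, and the mismatch of $\bmpsi$ with the true floor is confined to $\Omega$.

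The hard part is point fitting. For every multi-index $\bmalpha$ with $\|\bmalpha\|_1\le s-1$ I need a ReLU subnet $\tildeg_\bmalpha$ that, evaluated at the integer vector $\bmbeta\in\{0,\ldots,K-1\}^d$, returns $\partial^\bmalpha f(\bmbeta/K)$ to accuracy $\calO(N^{-L})$. Since there are $K^d\approx N^2L^2$ prescribed values but the allotted budget is only width $\calO(N\log_2 N)$ and depth $\calO(L\log_2 L)$, a naive one-hot storage (width $\Omega(K^d)$) is forbidden. The remedy is the bit-extraction / point-fitting machinery developed in Section~\ref{sec:proofOfProposition}, which spreads $K^d$ scalars across the full product of depth and width at the cost of the logarithmic overheads $\log_2 N$ and $\log_2 L$ that surface in the final width and depth bounds. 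This is the single delicate step; the rest is Taylor calculus and constant-chasing.

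Finally, assemble and tally. By Proposition~\ref{prop:approxPolynomial}, each monomial $\bmz\mapsto \bmz^\bmalpha$ with $\|\bmalpha\|_1\le s-1$ is realised on $[0,1/K]^d$ by a subnet $\widetilde{P}_\bmalpha$ of width $\calO(N)$, depth $\calO(L)$, and uniform error $\calO(N^{-L})$. Set
\[
\phi(\bmx)\coloneqq \sum_{\|\bmalpha\|_1\le s-1}\tfrac{1}{\bmalpha!}\,\tildeg_\bmalpha(\bmpsi(\bmx))\cdot\widetilde{P}_\bmalpha\big(\bmx-\bmpsi(\bmx)/K\big),
\]
executing the $\le (s+1)^d$ summands in parallel. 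Multiplying per-term width and depth by this count and carrying through the constants from the previous two steps reproduces the stated $16s^{d+1}d(N+2)\log_2(8N)$ and $18s^2(L+2)\log_2(4L)$. Outside $\Omega$ we have $\bmpsi(\bmx)/K=\bmbeta(\bmx)/K$, so $|\phi(\bmx)-f(\bmx)|$ decomposes into the Taylor remainder $\tfrac{d^s}{s!}K^{-s}$, a monomial-approximation contribution bounded by $\sum_\bmalpha\tfrac{1}{\bmalpha!}K^{-\|\bmalpha\|_1}\calO(N^{-L})$, and the analogous point-fitting contribution. Using $K^{-s}\le 2^{4s/d}N^{-2s/d}L^{-2s/d}$ and absorbing $d^s/s!$, $(s+1)^d$, and $8^s$ into a single constant produces the advertised bound $84(s+1)^d 8^s N^{-2s/d}L^{-2s/d}$.
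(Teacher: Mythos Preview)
Your outline matches the paper's proof: same grid with $K=\lfloor N^{1/d}\rfloor^2\lfloor L^{2/d}\rfloor$, same Taylor expansion, same three building blocks (the ReLU staircase of Proposition~\ref{prop:approxStepFun} for $\bmpsi$, the bit-extraction point-fitter of Proposition~\ref{prop:pointsMatching} for the coefficients, and Proposition~\ref{prop:approxPolynomial} for the monomials), assembled in parallel over $\bmalpha$.

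There is, however, one genuine gap. Your final assembly
\[
\phi(\bmx)=\sum_{\|\bmalpha\|_1\le s-1}\tfrac{1}{\bmalpha!}\,\tildeg_\bmalpha(\bmpsi(\bmx))\cdot\widetilde{P}_\bmalpha\big(\bmx-\bmpsi(\bmx)/K\big)
\]
contains a true product of two real-valued subnet outputs, which a ReLU FNN cannot compute exactly. The paper handles this by inserting one more approximate multiplier $\varphi(x,y)\approx xy$ from Lemma~\ref{lem:xyApproxAB} (width $9(N+1)+1$, depth $2s(L+1)$, error $\scrE_1=216(N+1)^{-2s(L+1)}$) and defining $\phi(\bmx)=\sum_\bmalpha\varphi\big(\phi_\bmalpha(\bmPsi(\bmx))/\bmalpha!,\,P_\bmalpha(\bmx-\bmPsi(\bmx))\big)$. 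Without this step your $\phi$ is not a ReLU network at all, and your error decomposition is missing the term $\scrE_1$; moreover the multiplier's depth $2s(L+1)$ is one of the summands that produces the final depth bound $18s^2(L+2)\log_2(4L)$. A minor side note: the point-fitter of Proposition~\ref{prop:pointsMatching} delivers accuracy $N^{-2s}L^{-2s}$, not the $\calO(N^{-L})$ you claim (the number of extracted bits is $J\approx 2s\log_2(NL)$, not $L\log_2 N$); this weaker bound is all that is needed, but it is the dominant contribution among $\scrE_1,\scrE_2,\scrE_3$ and is what actually drives the final constant.
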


We first prove Theorem~\ref{thm:Main} by assuming Theorems~\ref{thm:Gap} and \ref{thm:MainGap} are true. The proofs of Theorems~\ref{thm:Gap} and \ref{thm:MainGap} can be found in Sections~\ref{sec:badRegionRemoval} and \ref{sec:4}, respectively. 
\begin{proof}[Proof of Theorem~\ref{thm:Main}]
	 We may assume $\|f\|_{C^s([0,1]^d)}>0$ since $\|f\|_{C^s([0,1]^d)}=0$ is a trivial case. Define $\tildef\coloneqq\tfrac{f}{\|f\|_{C^s([0,1]^d)}}\in \Csub$.
Set $K= \lfloor N^{1/d}\rfloor ^2 \lfloor L^{2/d}\rfloor$ and choose a small $\delta\in (0,\tfrac{1}{3K}]$ such that
\[d\cdot\omega_\tildef(\delta)\le N^{-2s/d}L^{-2s/d}.\]

Clearly, $\|\partial^\bmalpha \tildef\|_{L^\infty([0,1]^d)}\le 1$ for any $\bmalpha\in\N^d$ with $\|\bmalpha\|_1\le  s$.
By Theorem~\ref{thm:MainGap}, there exists a function $\widehat{\phi}$ implemented by a ReLU FNN with width $16s^{d+1}d(N+2)\log_2(8N)$ and depth $18s^2(L+2)\log_2(4L)$ such that
\begin{equation*}
	|\widehat{\phi}(\bmx)-\tildef(\bmx)|\le 84(s+1)^d8^sN^{-2s/d}L^{-2s/d}\eqqcolon \varepsilon\quad \tn{for any $\bmx\in [0,1]^d\backslash \Omega([0,1]^d,K,\delta)$.}
\end{equation*}
By Theorem~\ref{thm:Gap}, there exists a new function $\tildephi$ implemented by a ReLU FNN with width \[3^d\big(16s^{d+1}d(N+2)\log_2(8N)+4\big)\le 17s^{d+1}3^d d(N+2)\log_2(8N)\] and depth $18s^2(L+2)\log_2(4L)+2d$ such that 
\begin{equation*}
	\begin{split}
		\|\tildephi-\tildef\|_{L^\infty([0,1]^d)}
		\le \varepsilon+ d\cdot\omega_\tildef(\delta)
		&= 84(s+1)^d8^sN^{-2s/d}L^{-2s/d}+d\cdot\omega_\tildef(\delta)\\
		&\le 85(s+1)^d8^sN^{-2s/d}L^{-2s/d}.
	\end{split}
\end{equation*}
Finally, set $\phi=\|f\|_{C^s([0,1]^d)}\cdot \tildephi$; then
\begin{equation*}
	\begin{split}
		\|{\phi}-{f}\|_{L^\infty([0,1]^d)}
		&=\|f\|_{C^s([0,1]^d)}\cdot\|\tildephi-\tildef\|_{L^\infty([0,1]^d)}\\
		&\le  85(s+1)^d8^s\|f\|_{C^s([0,1]^d)}N^{-2s/d}L^{-2s/d},
	\end{split}
\end{equation*}
and $\phi$ can also be implemented by a ReLU FNN with width $17s^{d+1}3^d d(N+2)\log_2(8N)$ and depth $18s^2(L+2)\log_2(4L)+2d$.
So we finish the proof.
\end{proof}

\subsection{Optimality of Theorem~\ref{thm:Main}}
\label{sec:optimalityOfMainThm}
In this section, we will show that the approximation error  in Theorem~\ref{thm:Main} is nearly tight in terms of VC-dimension. The key is the VC-dimension upper bound of ReLU FNNs  in \cite{pmlr-v65-harvey17a} will lead to a contradiction if our approximation is not optimal. This idea was used in \cite{yarotsky2017} to prove its tightness for ReLU FNNs of width $\calO(d)$ and depth sufficiently large to approximate smooth functions. 

Let us first present the definitions of VC-dimension and related concepts. Let  $H$ be a class of functions mapping from a general domain $\mathcal{X}$ to $\{0,1\}$. 
We say $H$ shatters the set $\{\bmx_1,\bmx_2,\cdots,\bmx_m\}\subseteq \mathcal{X}$ if
\begin{equation*}
\Big| \Big\{\big[h(\bmx_1),h(\bmx_2),\cdots,h(\bmx_m)\big]^T\in \{0,1\}^m: h\in H\Big\}\Big|=2^m,
\end{equation*}
where $|\cdot|$ means the size of a set. This equation means, given any $\theta_i\in \{0,1\}$ for $i=1,2,\cdots,m$, there exists $h\in H$ such that
$h(\bmx_i)=\theta_i$ for all $i$. For a general function set $\scrF$ mapping from $\mathcal{X}$ to $\R$, we say $\scrF$ shatters $\{\bmx_1,\bmx_2,\cdots,\bmx_m\}\subseteq \mathcal{X}$ if $\calT\circ \scrF$ does, where \begin{equation*}
		\calT(t)\coloneqq \genfrac{\{}{.}{0pt}{0}{1,\ t\ge 0,}{0, \ t< 0\phantom{,}} \quad \tn{and}\quad \calT\circ \scrF\coloneqq \{\calT\circ f: f\in \scrF\}.
\end{equation*}

For any $m\in \N^+$, we define the growth function of $H$ as 
\begin{equation*}
\Pi_H(m)\coloneqq \max_{\bmx_1,\bmx_2,\cdots,\bmx_m\in \mathcal{X}} \Big| \Big\{\big[h(\bmx_1),h(\bmx_2),\cdots,h(\bmx_m)\big]^T\in \{0,1\}^m: h\in H\Big\}\Big|.
\end{equation*}
	
\begin{definition}[VC-dimension]
     Let $H$ be a class of functions from $\mathcal{X}$ to $\{0,1\}$.
    The VC-dimension of $H$, denoted by  $\vcd(H)$, is the size of the largest shattered set, namely, 
    \begin{equation*}
        \vcd(H)\coloneqq \sup \Big(\{0\}\bigcup\big\{ m\in\N^+ : \Pi_H(m)=2^m\big\}\Big).
    \end{equation*}
		Let $\scrF$ be a class of functions from $\mathcal{X}$ to $\R$. The VC-dimension of $\scrF$, denoted by $\vcd(\scrF)$, is defined by $\vcd(\scrF)\coloneqq\vcd(\calT\circ\scrF)$, where
	\begin{equation*}
	    \calT(t)\coloneqq \genfrac{\{}{.}{0pt}{0}{1,\ t\ge 0,}{0, \ t< 0\phantom{,}} \quad \tn{and}\quad \calT\circ \scrF\coloneqq \{\calT\circ f: f\in \scrF\}.
	\end{equation*}
	In particular, the expression ``VC-dimension of a network (architecture)'' means the VC-dimension of the function set that consists of all functions implemented by this network (architecture).
\end{definition}

Recall that  $\Csub$ denotes the unit ball of $C^s([0,1]^d)$.
Theorem~\ref{thm:linkVcdRate2} below shows that the best possible approximation error  of functions in $\Csub$ approximated by functions in $\scrF$ is bounded by a formula characterized by $\vcd(\scrF)$.

\begin{theorem}
	\label{thm:linkVcdRate2}
	Given any $s,d\in \N^+$, there exists a (small) positive constant $C_{s,d}$ determined by $s$ and $d$ such that: For any $\varepsilon>0 $ and a function set $\scrF$ with all elements defined on $[0,1]^d$, if $\vcd(\scrF)\ge 1$ and
	\begin{equation}
	\label{eq:distLipScrF2}
	\inf_{\phi\in \scrF}\|\phi-f\|_{L^\infty([0,1]^d)}\le \varepsilon\quad \tn{for any $f\in \Csub$,}
	\end{equation}
	then $\vcd(\scrF)\ge C_{s,d}\, \varepsilon^{-d/s}$.\,\footnote{In fact, $C_{s,d}$ can be expressed by $s$ and $d$ with a \textbf{explicitly} formula as we remark in the proof of this theorem. However, the formula may be very complicated.}
\end{theorem}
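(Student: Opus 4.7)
The plan is an explicit packing argument inside $\Csub$: I will construct, for each sufficiently small $\epsilon$, a family $\{f_\theta:\theta\in\{0,1\}^{m}\}\subset \Csub$ of size $2^m$ with $m$ of order $\epsilon^{-d/s}$, such that the values of $f_\theta$ at a prescribed set of $m$ points differ across the two bit choices by more than $2\epsilon$. The hypothesis~\eqref{eq:distLipScrF2} then forces a $2^m$-element subfamily of $\scrF$ which, after thresholding by $\calT$, shatters these $m$ points, so $\vcd(\scrF)\ge m$.

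Concretely, I would fix once and for all a bump $\psi\in C_c^\infty(\R^d)$ supported in $(0,1)^d$ with $\psi(\tfrac12,\ldots,\tfrac12)=c_0>0$ and $\|\psi\|_{C^s(\R^d)}\le 2$ (simply rescale any smooth bump). For $N\in\N^+$, partition $[0,1]^d$ into $N^d$ subcubes $Q_i$ of side $1/N$ with centers $x_i^*$, and set $\psi_i(x):=N^{-s}\psi(Nx-i)$, so that $\psi_i$ is supported in $Q_i$, $\psi_i(x_i^*)=N^{-s}c_0$, and by the chain rule $\|\psi_i\|_{C^s([0,1]^d)}\le 2$ uniformly in $N$ and $i$ (the factor $N^{|\alpha|-s}$ from differentiating the dilation is at most $1$ for $|\alpha|\le s$ and $N\ge 1$). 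For each label $\theta\in\{0,1\}^{N^d}$ set $f_\theta:=\sum_{i}(\theta_i-\tfrac12)\psi_i$. The essentially disjoint supports yield $\|f_\theta\|_{C^s([0,1]^d)}\le 1$, so $f_\theta\in \Csub$; and at the center $x_i^*$ only one summand survives, giving $f_\theta(x_i^*)=(\theta_i-\tfrac12)N^{-s}c_0$.

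Next I would pick $N:=\lfloor(c_0/(2\epsilon))^{1/s}\rfloor$, which is at least $1$ once $\epsilon\le c_0/2$, and, for each $\theta$, select $\phi_\theta\in\scrF$ with $\|\phi_\theta-f_\theta\|_{L^\infty([0,1]^d)}<\epsilon+\eta$ for an auxiliary $\eta>0$ chosen small enough that $\epsilon+\eta<\tfrac12 N^{-s}c_0$; such a $\phi_\theta$ exists by~\eqref{eq:distLipScrF2} and the definition of infimum. Then $\phi_\theta(x_i^*)\ge 0$ iff $\theta_i=1$, so $\calT\circ\scrF$ shatters $\{x_i^*:i\in\{0,\ldots,N-1\}^d\}$, forcing $\vcd(\scrF)\ge N^d$. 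For $\epsilon\le c_0/2^{s+1}$ one has $(c_0/(2\epsilon))^{1/s}\ge 2$ and hence $N\ge \tfrac12(c_0/(2\epsilon))^{1/s}$, yielding $N^d\ge C_{s,d}\epsilon^{-d/s}$ for a suitable constant; for $\epsilon> c_0/2^{s+1}$, the bound $\vcd(\scrF)\ge 1\ge C_{s,d}\epsilon^{-d/s}$ follows from the standing hypothesis $\vcd(\scrF)\ge 1$ after shrinking $C_{s,d}$ if necessary.

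The main obstacle is choosing the correct scaling of the bump: the prefactor $N^{-s}$ on $\psi_i$ must precisely absorb the $N^{|\alpha|}$ produced by differentiating the dilation $\psi(Nx-i)$ up to order $s$, so that $\|\psi_i\|_{C^s}$ stays bounded uniformly in $N$, while the on-center value $\psi_i(x_i^*)=N^{-s}c_0$ simultaneously sets the size of the distinguishable gap. Matching the gap $\tfrac12 N^{-s}c_0$ against the tolerance $\epsilon$ then forces $N$ of order $\epsilon^{-1/s}$ and hence $N^d$ of order $\epsilon^{-d/s}$, yielding the advertised rate. The remaining work is pure bookkeeping of constants (tracking $c_0$, $\|\psi\|_{C^s}$, and the edge-case threshold $c_0/2^{s+1}$) to make $C_{s,d}$ uniform across all $\epsilon>0$ and, if desired, explicitly computable as promised in the footnote.
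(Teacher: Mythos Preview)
Your argument is essentially the same as the paper's: both place a grid of $K^d$ (your $N^d$) rescaled bumps $K^{-s}g(K(x-x_\bmbeta))$ with signs $\pm$, verify membership in $\Csub$ via the chain rule, choose $K\sim \varepsilon^{-1/s}$ so that the bump height beats the approximation tolerance, and conclude that the approximants in $\scrF$ must recover the sign pattern at one point per cube, hence shatter $K^d$ points.

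There is one technical gap you skipped that the paper handles explicitly. The hypothesis only gives $\|\phi_\theta-f_\theta\|_{L^\infty([0,1]^d)}<\varepsilon+\eta$, which is an \emph{essential} supremum: elements of $\scrF$ are arbitrary functions, not assumed continuous, so the pointwise inequality $|\phi_\theta(x_i^*)-f_\theta(x_i^*)|<\varepsilon+\eta$ can fail on a null set that may well contain your prescribed centers $x_i^*$. The paper fixes this by taking the union $\calH$ of the (finitely many, one per label $\theta$) exceptional null sets, then using continuity of $f_\theta$ to choose each test point $\bmx_\bmbeta$ inside a small neighborhood $rQ_\bmbeta$ of the center but outside $\calH$; since $\mu(rQ_\bmbeta\setminus\calH)>0$ such points exist. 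With that adjustment your proof goes through and matches the paper's.
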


This theorem demonstrates the connection between the VC-dimension of $\scrF$ and the approximation error  using elements of $\scrF$ to approximate functions in $\Csub$.  
To be precise, the best possible approximation error  is controlled by $\vcd(\scrF)^{-s/d}$ up to a constant.
It is shown in \cite{pmlr-v65-harvey17a} that the VC-dimension of ReLU FNNs with a fixed architecture with $W$ parameters and $L$ layers has an upper bound $\calO(WL\ln W)$. It follows that the VC-dimension of ReLU FNNs with width $N$ and depth $L$ is bounded by $\calO(N^2L\cdot L \cdot\ln(N^2L))\le\calO(N^2L^2\ln(NL))$. 
That is, $\vcd(\scrF)\le  \calO(N^2L^2\ln(NL)) $, where 
\[\scrF=\NNF(\NNinput=d\NNspace\NNwidth \le N\NNspace\NNdepth\le L \NNspace\NNoutput=1).\]
Hence, the approximation error  of functions in $\Csub$, approximated by ReLU FNNs with width $N$ and depth $L$, has a lower bound
\[C(s,d)\cdot\big(N^2L^2 \ln (NL) \big)^{-s/d}\] 
for some positive constant $C(s,d)$ determined by $s$ and $d$.
When the width and depth become $\calO(N\ln N)$ and $\calO(L\ln L)$, respectively, the lower bound of the approximation error  becomes 
\[C(s,d)\cdot\big(N^2L^2 (\ln N)^3 (\ln L)^3 \big)^{-s/d} \]
for some positive constant $C(s,d)$ determined by $s$ and $d$.
These two lower bounds mean that our approximation errors in Theorem~\ref{thm:Main} and Corollary~\ref{cor:approxSmoothFun} are nearly optimal. 

Now let us present the detailed proof of  Theorem~\ref{thm:linkVcdRate2}.
\begin{proof}[Proof of Theorem~\ref{thm:linkVcdRate2}]
	To find a subset of $\scrF$ shattering $\calO(\varepsilon^{-d/s})$ points in $[0,1]^d$, we divided the proof into two steps.
	\begin{itemize}
		\item Construct $\{f_\chi:\chi\in \scrX\}\subseteq\Csub$ that scatters $\calO(\varepsilon^{-d/s})$ points, where $\scrX$ is a function set defined later.
		\item Design $\phi_\chi\in \scrF$, for each $\chi\in \scrX$, based on $f_\chi$ and Equation~\eqref{eq:distLipScrF2} such that	$\{\phi_\chi:\chi\in\scrX\}\subseteq\scrF$ also shatters $\calO(\varepsilon^{-d/s})$ points.
	\end{itemize}  
	The details of these two steps can be found below.
	
	\mystep{1}{Construct $\{f_\chi:\chi\in \scrX\}\subseteq\Csub$ that scatters $\calO(\varepsilon^{-d/s})$ points.}
	
	Let $K=\calO(\varepsilon^{-1/s})$ be an integer determined later and divide $[0,1]^d$ into $K^d$ non-overlapping sub-cubes $\{Q_{\bmbeta}\}_{\bmbeta}$ as follows: 
	\[Q_{\bmbeta}\coloneqq \big\{{\bm{x}}=[x_1,x_2,\cdots,x_d]^T\in[0,1]^d: x_i\in [\tfrac{\beta_i}{K},\tfrac{\beta_i+1}{K}]\tn{ for } i=1,2,\cdots,d\big\}\]
	for any index vector  ${\bmbeta}= [\beta_1,\beta_2,\cdots,\beta_d]^T\in \{0,1,\cdots,K-1\}^d$.

	There exists $\tildeg\in C^\infty(\R^d)$ such that $\tildeg(\bmzero)=1$ and $\tildeg(\bmx)=0$ for $\|\bmx\|_2\ge 1/3$.\footnote{In fact, such a function $\tildeg$ is called ``bump function''. An example can be attained by setting $\tildeg(\bmx)=C\exp(\tfrac{1}{\|3\bmx\|_2^2-1})$ if $\|\bmx\|_2< 1/3$ and $ \tildeg(\bmx)=0$ if $\|\bmx\|_2\ge 1/3$, where $C$ is a proper constant such that $\tildeg(\bmzero)=1$.} Then, $g\coloneqq \tildeg/\tildeC_{s,d}\in \Csub$ by setting $\tildeC_{s,d}\coloneqq \|\tildeg\|_{C^s([0,1]^d)}>0$.
	
	Define
	\begin{equation*}
		\scrX\coloneqq \big\{\chi: \chi \tn{   is  a map from   } \{0,1,\cdots,K-1\}^d \tn{   to   } \{-1,1\}\big\}
	\end{equation*}
	and \[g_\bmbeta\coloneqq K^{-s}g\big(K (\bmx-\bmx_{Q_\bmbeta})\big)\quad \tn{ for each $\bmbeta\in \{0,1,\cdots,K-1\}^d$,}\]
	where $\bmx_{Q_\beta}$ is the center of $Q_\bmbeta$.
	
	Next, for each $\chi\in \scrX$, we can define $f_\chi$ via
	\begin{equation*}
		f_\chi (\bm{x})\coloneqq \sum_{{\bm{\beta}}\in \{0,1,\cdots,K-1\}^d} \chi (\bmbeta)g_\bmbeta(\bmx).
	\end{equation*}
	Then $f_\chi\in \Csub$ for each $\chi\in \scrX$, since it satisfies the following two conditions. 
	\begin{itemize}
		\item By the definition of $g_\bmbeta$ and $\chi$, we have
		\begin{equation*}
			\big\{\bmx: \chi(\bmbeta)g_\bmbeta(\bmx)\neq 0\big\}\subseteq \ball(\bmx_{Q_\bmbeta},\tfrac{1}{3K}) \subseteq\tfrac23 Q_\bmbeta\quad \tn{for each $\bmbeta\in \{0,1,\cdots,K-1\}^d$,} 
		\end{equation*}
		 which implies that $f_\chi\in C^\infty([0,1]^d)$.
		
		\item For any $\bmx\in Q_\bmbeta$,  $\bmbeta\in \{0,1,\cdots,K-1\}^d$, and $\bmalpha\in \N^d$ with $\|\bmalpha\|_1\le s$, 
		\begin{equation*}
			\partial ^\bmalpha f_\chi(\bmx)=\chi(\bmbeta)\partial^\bmalpha g_\bmbeta(\bmx)=K^{-s}\chi(\bmbeta)K^{\|\alpha\|_1}\partial^\bmalpha g\big(K(\bmx-\bmx_\bmbeta)\big),
		\end{equation*} 
		from which we
		 deduce $|\partial^\bmalpha f_\chi(\bmx)|=\big|K^{-(s-\|\alpha\|_1)} \partial^\bmalpha g\big(K(\bmx-\bmx_\bmbeta)\big)\big|\le 1$.
	\end{itemize}
	It is easy to check that  $ \{f_\chi:\chi\in \scrX\}\subseteq \Csub$ 
	can shatter
	$K^d=\calO(\varepsilon^{-d/s})$
	points in $[0,1]^d$.

	\mystep{2}{Construct $\{\phi_\chi:\chi\in\scrX\}$ that also scatters $\calO(\varepsilon^{-d/s})$ points.}

	By Equation~\eqref{eq:distLipScrF2}, for each $\chi\in\scrX$, there exists $\phi_\chi \in \scrF$ such that 
	\begin{equation*}
		\|\phi_\chi-f_\chi\|_{L^\infty([0,1]^d)}\le \varepsilon+\varepsilon/2.
	\end{equation*}
	Let $\mu(\cdot)$ denote the Lebesgue measure of a set. Then, for each $\chi\in \scrX$, there exists $\calH_\chi\subseteq [0,1]^d$ with $\mu(\calH_\chi)=0$ such that
	\begin{equation*}
		|\phi_\chi(\bmx)-f_\chi(\bmx)|\le \tfrac{3}{2}\varepsilon\quad \tn{for any $\bmx\in [0,1]^d\backslash \calH_\chi$}.
	\end{equation*}
	Set $\calH=\bigcup_{\chi\in \scrX} \calH_\chi$; then we have $\mu(\calH)=0$ and 
	\begin{equation}
		\label{eq:f-Phi}
		|\phi_\chi(\bmx)-f_\chi(\bmx)|\le \tfrac{3}{2}\varepsilon\quad \tn{for any $\chi\in \scrX$ and $\bmx\in [0,1]^d\backslash \calH$}.
	\end{equation}

	Clearly, there exists $r\in (0,1)$ such that
	\begin{equation*}
		g_\bmbeta(\bmx)\ge \tfrac12 g_\bmbeta(\bmx_{Q_\bmbeta})>0\quad \tn{for any $\bmx\in rQ_\bmbeta$,}
	\end{equation*}
	where $\bmx_{Q_\bmbeta}$ is the center of $Q_\bmbeta$.
	
	Note that $(rQ_\bmbeta)\backslash\calH$ is not empty, since $\mu\big((rQ_\bmbeta)\backslash\calH\big)>0$ for each $\bmbeta$. Then, for any $\chi\in \scrX$ and $\bmbeta\in\{0,1,\cdots,K-1\}^d$, there exists $\bmx_\bmbeta\in (rQ_\bmbeta)\backslash\calH$ such that
	\begin{equation}
		\label{eq:fLB2}
		|f_\chi (\bm{x}_\bmbeta)|=|g_{\bmbeta}(\bm{x}_\bmbeta)|\ge \tfrac{1}{2}|g_{\bmbeta} (\bm{x}_{Q_{\bmbeta}})|=\tfrac{1}{2}K^{-s}g(\bmzero)=\tfrac{1}{2}K^{-s}/\tildeC_{s,d}\ge 2\varepsilon,
	\end{equation}
	where the last inequality is attained by setting $K=\lfloor (4 \varepsilon \tildeC_{s,d})^{-1/s}\rfloor$. Note that it is necessary to verify $K\neq 0$; we do this later in the proof.
	
	By Equations~\eqref{eq:f-Phi} and \eqref{eq:fLB2},  we have, for each $\bmbeta\in \{0,1,\cdots,K-1\}^d$ and each $\chi\in \scrX$,	
	\begin{equation*}
		|f_\chi (\bmx_{\bmbeta})|\ge 2 \varepsilon> \tfrac{3}{2}\varepsilon\ge  |f_\chi (\bmx_{\bmbeta})-\phi_\chi (\bmx_{\bmbeta})|,
	\end{equation*}		
	implying  $f_\chi (\bmx_{\bmbeta})$ and $\phi_\chi (\bmx_{\bmbeta})$ have the same sign. Then  $ \{\phi_\chi:\chi\in\scrX\}$ shatters $\big\{\bmx_\bmbeta:\bmbeta\in \{0,1,\cdots,K-1\}^d\big\}$ since $\{f_\chi:\chi\in\scrX\}
	$ shatters $\big\{\bmx_\bmbeta:\bmbeta\in \{0,1,\cdots,K-1\}^d\big\}$. Hence, 
	\begin{equation*}
		\tn{VCDim}(\scrF)\ge \tn{VCDim}\big( \{\phi_\chi:\chi\in\scrX\} \big)\geq K^d=\lfloor (4\varepsilon \tildeC_{s,d})^{-1/s}\rfloor^d\ge 2^{-d}(4\varepsilon \tildeC_{s,d})^{-d/s},
	\end{equation*}
	where the last inequality comes from the fact that $\lfloor x\rfloor \ge x/2$ for any $x\in [1,\infty)$. 
	
	Finally, by setting 
	\[C_{s,d}= 2^{-d}(4\tildeC_{s,d})^{-d/s}=2^{-d}\big(4\|\tildeg\|_{C^s([0,1]^d)}\big)^{-d/s},\] 
	we have \[\tn{VCDim}(\scrF)
	\ge 2^{-d}(4\varepsilon \tildeC_{s,d})^{-d/s}
	=2^{-d}(4 \tildeC_{s,d})^{-d/s}\varepsilon^{-d/s}
	= C_{s,d} \varepsilon^{-d/s}\]
	and 
	\[K=\lfloor (4 \varepsilon \tildeC_{s,d})^{-1/s}\rfloor=
	\lfloor \varepsilon^{-1/s} (4  \tildeC_{s,d})^{-1/s}\rfloor
	=\lfloor  \varepsilon ^{-1/s} (2^d C_{s,d})^{1/d}\rfloor\ge 1,\] where the last inequality comes from the assumption $\varepsilon\le (2^dC_{s,d})^{s/d}$. Such an assumption is reasonable since 
	$\varepsilon> (2^dC_{s,d})^{s/d}$ is a trivial case, which implies
	\begin{equation*}
	    \tn{VCDim}(\scrF)\ge 1\ge 2^{-d}=  C_{s,d} \Big((2^dC_{s,d})^{s/d}\Big)^{-d/s}> C_{s,d} \varepsilon^{-d/s}. 
	\end{equation*}
	So we finish the proof.  
\end{proof}

\section{Proof of Theorem~\ref{thm:Gap}}
\label{sec:badRegionRemoval}
Intuitively speaking, Theorem~\ref{thm:Gap} shows that if a ReLU FNN can implement a function $g$ approximating the target function $f$ well except for the trifling region, then  we  can design a new ReLU network with a similar size to approximate $f$ well on the whole domain. For example, if $g$ approximates a one-dimensional continuous function $f$ well except for a region in $\R$ with a sufficiently small measure $\delta$, then $\middleValue\big(g(x+\delta),g(x),g(x-\delta)\big)$ can approximate $f$ well on the whole domain, where $\middleValue(\cdot,\cdot,\cdot)$ is a function returning the middle value of three inputs and can be implemented via a ReLU FNN as shown in Lemma~\ref{lem:approxMid}. This key idea is called the horizontal shift (translation) of $g$ in this paper.

\begin{lemma}
	\label{lem:approxMid}
	The middle value function $\middleValue(x_1,x_2,x_3)$ can be implemented by  a ReLU FNN  with width $14$ and depth $2$.
\end{lemma}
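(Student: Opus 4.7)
The plan is to build the middle--value function from the algebraic identity given in the footnote, $\middleValue(x_1,x_2,x_3)=x_1+x_2+x_3-\max(x_1,x_2,x_3)-\min(x_1,x_2,x_3)$. This reduces the task to implementing the three ingredients (the sum, the maximum of three reals, and the minimum of three reals) in parallel inside a single depth--$2$ ReLU FNN and combining them linearly at the output layer.

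For the maximum, I would use the elementary identity $\max(a,b)=\sigma(a-b)+b$ iterated as
\begin{equation*}
\max(x_1,x_2,x_3)=\max\bigl(\max(x_1,x_2),x_3\bigr)=\sigma\bigl(\sigma(x_1-x_2)+x_2-x_3\bigr)+x_3,
\end{equation*}
which is naturally a two-hidden-layer computation: the first hidden layer produces $\sigma(x_1-x_2)$ together with pass-through copies of $x_2$ and $x_3$, realized through the standard encoding $t=\sigma(t)-\sigma(-t)$; the second hidden layer applies $\sigma$ to the appropriate linear combination of these nodes, and also re-encodes $x_3$ so that it can be recovered as a linear combination of second-layer outputs (using that $\sigma\circ\sigma(\pm x_3)=\sigma(\pm x_3)$). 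The minimum is obtained from $\min(x_1,x_2,x_3)=-\max(-x_1,-x_2,-x_3)$ by an entirely analogous subnetwork, which can reuse the pass-through nodes for $x_1,x_2,x_3$ in the first hidden layer.

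I would then stack the max-subnetwork, the min-subnetwork, and a small ``sum'' subnetwork (which is again just pass-through nodes encoding each $x_i$ as $\sigma(x_i)-\sigma(-x_i)$) side by side, sharing common nodes where possible. Concretely, the first hidden layer will consist of the six pass-through nodes $\sigma(\pm x_i)$, $i=1,2,3$, together with the two comparison nodes $\sigma(x_1-x_2)$ and $\sigma(x_2-x_1)$ needed for max/min respectively, plus a few auxiliary ReLU nodes that keep the widths of the two hidden layers balanced; the second hidden layer contains one ReLU node for the outer $\sigma$ in each of max and min, plus pass-through nodes encoding $x_1,x_2,x_3$ that can be linearly combined at the output. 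A careful accounting shows that both hidden layers can be realized with at most $14$ nodes. The output layer then simply assembles
\begin{equation*}
\phi(x_1,x_2,x_3)=x_1+x_2+x_3-\max(x_1,x_2,x_3)-\min(x_1,x_2,x_3)
\end{equation*}
as a fixed linear combination of the second-layer outputs.

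There is no real mathematical obstacle here; the only work is combinatorial bookkeeping. The main thing to check is that every quantity needed at the output (the three $x_i$'s, the max, and the min) is realizable as a linear combination of outputs of the \emph{second} hidden layer, which forces the use of the ``double-ReLU'' trick $\sigma\circ\sigma(\pm x_i)=\sigma(\pm x_i)$ as a pass-through, and that the parallel design respects the width budget of $14$ in both hidden layers simultaneously.
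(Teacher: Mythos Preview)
Your approach is correct and arrives at the same decomposition $\middleValue=x_1+x_2+x_3-\max-\min$ as the paper, but you compute $\max$ and $\min$ differently. The paper uses the symmetric identity $\max(x,y)=\tfrac{1}{2}(x+y+|x-y|)$ together with $|t|=\sigma(t)+\sigma(-t)$, which yields $\max(x_1,x_2,x_3)\in\NNF(\NNwidthvec=[6,4])$ (and likewise for $\min$); stacking these with two pass-through nodes for the sum $x_1+x_2+x_3=\sigma(s)-\sigma(-s)$ gives the explicit widthvec $[14,10]$. You instead iterate $\max(a,b)=\sigma(a-b)+b$, obtaining $\max(x_1,x_2,x_3)=\sigma(\sigma(x_1-x_2)+x_2-x_3)+x_3$ and the analogous expression for $\min$. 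Your route is in fact more economical: with the six shared pass-through nodes $\sigma(\pm x_i)$ and the two comparison nodes $\sigma(\pm(x_1-x_2))$ in the first layer, and two computation nodes plus six pass-through nodes in the second, one gets width $8$ rather than $14$. The paper's construction is more explicit and avoids your ``careful accounting'' hand-wave, but your sketch is sound and comfortably within the stated budget.
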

\begin{proof}
	Recall the fact that
	\begin{equation}\label{eq:maxeq1}
		x=\sigma(x)-\sigma(-x)\quad \tn{and}\quad |x|=\sigma(x)+\sigma(-x)\quad \tn{for any $x\in\R$.}
	\end{equation}
	Therefore,
	\begin{equation}\label{eq:maxeq2}
		\begin{split}
			\max(x,y)&=\frac{x+y+|x-y|}{2}\\
			&=\tfrac12\sigma(x+y)-\tfrac12\sigma(-x-y)+\tfrac12\sigma(x-y)+\tfrac12\sigma(-x+y),
		\end{split}
	\end{equation}
	for any $x,y\in \R$.
	Thus, $\max(x_1,x_2,x_3)$ can be implemented by the network shown in Figure~\ref{fig:maxx1x2x3}.
	\begin{figure}[!htp]
		\centering
		\includegraphics[width=0.87\textwidth]{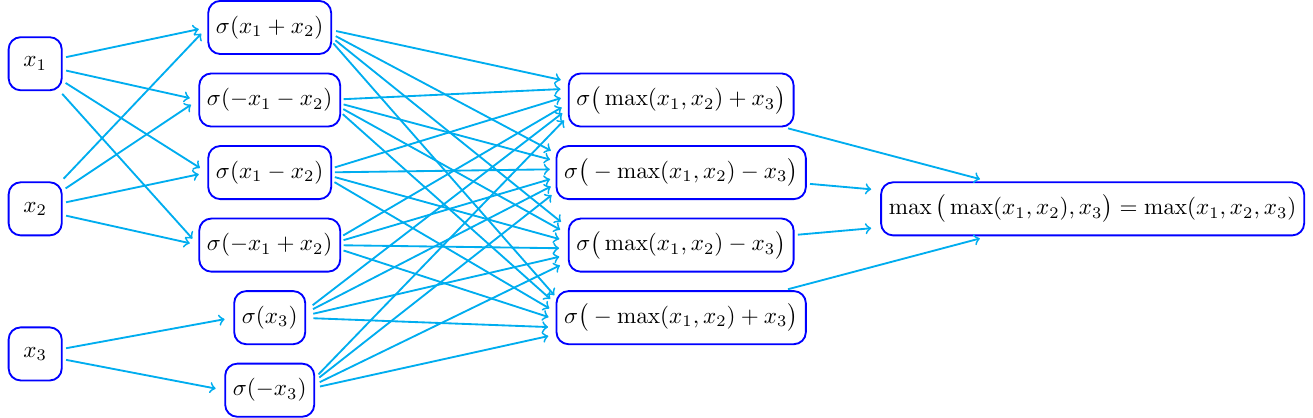}	
		\caption{An illustration of the network architecture implementing $\max(x_1,x_2,x_3)$ based on Equations~\eqref{eq:maxeq1} and \eqref{eq:maxeq2}. 
		}
		\label{fig:maxx1x2x3}
	\end{figure}

	Clearly, 
	\begin{equation*}
		\max(x_1,x_2,x_3)\in \NNF(\NNinput=3\NNspace\NNwidthvec=[6,4]).
	\end{equation*} 
	Similarly, we have
	\begin{equation*}
		\min(x_1,x_2,x_3)\in \NNF(\NNinput=3\NNspace\NNwidthvec=[6,4]).
	\end{equation*}
	
	It is easy to check that
	\begin{equation*}
		\begin{split}
			\middleValue(x_1,x_2,x_3)
			&=x_1+x_2+x_3-\max(x_1,x_2,x_3)-\min(x_1,x_2,x_3)\\
			&=\sigma(x_1+x_2+x_3)-\sigma(-x_1-x_2-x_3)-\max(x_1,x_2,x_3)-\min(x_1,x_2,x_3).
		\end{split}     
	\end{equation*}
	Hence, 
	\begin{equation*}
		\middleValue(x_1,x_2,x_3)\in \NNF(\NNinput=3\NNspace\NNwidthvec=[14,10]).
	\end{equation*}
	That is,
	$\middleValue(x_1,x_2,x_3)$ can be implemented by a ReLU FNN with width $14$ and depth $2$. So we finish the proof.
\end{proof}

The next lemma shows a simple but useful property of the $\middleValue(x_1,x_2,x_3)$ function that helps to exclude poor approximation in the trifling region.

\begin{lemma}
    \label{lem:middleValue}
    For any $\varepsilon>0$, if at least two elements of $\{x_1,x_2,x_3\}$ are in $\ball(y,\varepsilon)$, then 
    $\middleValue(x_1,x_2,x_3)\in \ball(y,\varepsilon)$.
\end{lemma}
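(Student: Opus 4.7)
The plan is a direct case analysis based on the observation that $\middleValue(x_1,x_2,x_3)$ always lies in the closed interval $[\min(x_1,x_2,x_3),\max(x_1,x_2,x_3)]$, which is routine. Without loss of generality relabel so that $x_1,x_2\in \ball(y,\varepsilon)$, i.e.\ both lie in the interval $I\coloneqq [y-\varepsilon,y+\varepsilon]$, and further assume $x_1\le x_2$. The remaining variable $x_3$ is arbitrary and need not belong to $I$.

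I would then split on the position of $x_3$ relative to $x_1$ and $x_2$. If $x_3\le x_1$, the sorted order is $x_3\le x_1\le x_2$, so $\middleValue(x_1,x_2,x_3)=x_1\in I$. Symmetrically, if $x_3\ge x_2$, the middle value is $x_2\in I$. Finally, if $x_1\le x_3\le x_2$, then $\middleValue(x_1,x_2,x_3)=x_3$, and since $x_3\in [x_1,x_2]\subseteq I$, we again conclude that the middle value lies in $I=\ball(y,\varepsilon)$.

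There is essentially no obstacle here; the only thing to be careful about is to handle all three orderings of $x_3$ with respect to the two close values, and to note that in the middle case the interval $[x_1,x_2]$ is automatically contained in $\ball(y,\varepsilon)$ because $\ball(y,\varepsilon)$ is convex. The statement then follows immediately, and this lemma will be used in conjunction with Lemma~\ref{lem:approxMid} to implement the horizontal-shift trick for eliminating the trifling region in the proof of Theorem~\ref{thm:Gap}.
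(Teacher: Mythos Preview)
Your proof is correct and essentially identical to the paper's own argument: both assume without loss of generality that $x_1,x_2\in\ball(y,\varepsilon)$ with $x_1\le x_2$, and then perform the same three-case split on the position of $x_3$ relative to $x_1$ and $x_2$. The only cosmetic difference is that the paper uses strict inequalities for the outer cases ($x_3<x_1$ and $x_2<x_3$), but the conclusion is the same.
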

\begin{proof}
    Without loss of generality, we may assume $x_1,x_2\in \ball(y,\varepsilon)$ and $x_1\le x_2$.
    Then the proof can be divided into three cases.
    \begin{enumerate}
        \item If $x_3< x_1$, then $x_3<x_1\le x_2$, implying $\middleValue(x_1,x_2,x_3)=x_1\in \ball(y,\varepsilon)$.
        \item If $x_1\le x_3\le x_2$, then $\middleValue(x_1,x_2,x_3)=x_3\in \ball(y,\varepsilon)$ since $y-\varepsilon\le x_1\le x_3\le x_2\le y+\varepsilon$.
        \item If $x_2<x_3$, then $x_1\le x_2<x_3$, implying $\middleValue(x_1,x_2,x_3)=x_2\in \ball(y,\varepsilon)$.
    \end{enumerate}
    So we finish the proof.
\end{proof}

\vspace{10pt}

Next, given a function $g$ approximating $f$ well on $[0,1]$ except for the trifling region, Lemma~\ref{lem:oneDimExtension} below shows how to use the $\middleValue(x_1,x_2,x_3)$ function to construct a new function $\phi$ uniformly approximating $f$ well on $[0,1]$, leveraging the useful property of $\middleValue(x_1,x_2,x_3)$ in Lemma~\ref{lem:middleValue}.

\begin{lemma}
	\label{lem:oneDimExtension}
	Given any $\varepsilon>0$, $K\in \N^+$, and $\delta\in(0,\tfrac{1}{3K}]$,
	assume $f\in C([0,1])$ and $g:\R\to \R$ is a general function with
	\begin{equation}\label{eq:g:in:f:ball} 
		|g(x)-f(x)|\le \varepsilon, \  \tn{i.e.,}\ g(x)\in \ball\big(f(x),\varepsilon\big)   \quad \tn{for any $x\in [0,1]\backslash \Omega([0,1],K,\delta)$}.
	\end{equation}
	Then 
	\begin{equation*}
		|\phi(x)-f(x)|\le \varepsilon+\omega_f(\delta)\quad \tn{for any $x\in [0,1]$},
	\end{equation*}
	where 
	\begin{equation*}
		\phi(x)\coloneqq  \middleValue\big(g(x-\delta),g(x),g(x+\delta)\big)\quad \tn{for any $x\in \R$.}
	\end{equation*}
\end{lemma}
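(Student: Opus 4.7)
The plan is to apply Lemma~\ref{lem:middleValue} pointwise: for each fixed $x\in[0,1]$, it suffices to exhibit two elements of $\{g(x-\delta),g(x),g(x+\delta)\}$ that lie in $\ball\bigl(f(x),\ \varepsilon+\omega_f(\delta)\bigr)$. Lemma~\ref{lem:middleValue} then forces $\phi(x)=\middleValue(\ldots)$ into the same ball, yielding the desired bound uniformly in $x$.

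The passage from ``$g$ approximates $f$ outside $\Omega$'' to ``$g$ evaluated at a nearby shift approximates $f(x)$'' is routine: for any $y\in\{x-\delta,x,x+\delta\}$ that happens to lie in $[0,1]\setminus\Omega([0,1],K,\delta)$, hypothesis \eqref{eq:g:in:f:ball} and the definition of $\omega_f$ give
\[
|g(y)-f(x)|\ \le\ |g(y)-f(y)|+|f(y)-f(x)|\ \le\ \varepsilon+\omega_f(\delta),
\]
since $|y-x|\le\delta$. The whole problem therefore reduces to a purely combinatorial claim: for every $x\in[0,1]$, at least two of the three shifts $x-\delta,\ x,\ x+\delta$ belong to $[0,1]\setminus\Omega([0,1],K,\delta)$.

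To verify the combinatorial claim I will split on the location of $x$. For $x\in[\delta,1-\delta]$ all three shifts lie in $[0,1]$, so it is enough to show that at most one of them can sit in the trifling region. Each bad interval $(k/K-\delta,k/K)$ has length $\delta$, so two shifts cannot both fit in a single bad interval (the minimum mutual distance among the three shifts is $\delta$); and two distinct bad intervals are separated by $1/K-\delta\ge 2\delta$ (using $\delta\le 1/(3K)$), whereas the mutual distance of any two shifts is at most $2\delta$, which rules out the remaining configuration. For the boundary strips $x\in[0,\delta)$ and $x\in(1-\delta,1]$, one shift leaves $[0,1]$, but the other two then both live in $[0,2\delta]$ or $[1-2\delta,1]$ respectively; the hypothesis $3\delta\le 1/K$ shows these strips are contained in the extreme good components $[0,\tfrac{1}{K}-\delta]$ or $[\tfrac{K-1}{K},1]$, which are disjoint from $\Omega$.

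The only delicate point is the boundary case, where losing one shift to the complement of $[0,1]$ removes our margin of error; the constraint $\delta\le 1/(3K)$ is exactly what is needed to guarantee the two surviving shifts land in a single good component. Once this case is dispatched, the combinatorial claim follows and Lemma~\ref{lem:middleValue} closes the proof with no further work.
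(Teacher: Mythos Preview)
Your proposal is correct and follows essentially the same approach as the paper: both reduce the claim to Lemma~\ref{lem:middleValue} by showing that, for every $x\in[0,1]$, at least two of the shifts $x-\delta,\,x,\,x+\delta$ lie in $[0,1]\setminus\Omega([0,1],K,\delta)$, and then use $|g(y)-f(x)|\le\varepsilon+\omega_f(\delta)$ for such $y$. The only difference is cosmetic: the paper verifies the two-good-shifts claim via an explicit four-subinterval case analysis on each $Q_k$, whereas your metric argument (bad intervals are open of length $\delta$, consecutive bad intervals are more than $2\delta$ apart, and the boundary strips $[0,2\delta]$, $[1-2\delta,1]$ sit inside the extreme good components) reaches the same conclusion more compactly.
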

\begin{proof}
	Divide $[0,1]$ into $K$ small intervals denoted by $Q_k=[\tfrac{k}{K},\tfrac{k+1}{K}]$ for $k=0,1,\cdots,K-1$. For each $k\in \{0,1,\cdots,K-1\}$, we further divide $Q_k$ into four small closed intervals as shown in Figure~\ref{fig:Qk}, i.e.,
	\begin{equation*}
		Q_k=Q_{k,1}\bigcup Q_{k,2}\bigcup Q_{k,3}\bigcup Q_{k,4},
	\end{equation*} 
	where $Q_{k,1}=[\tfrac{k}{K},\tfrac{k}{K}+\delta],\ Q_{k,2}=[\tfrac{k}{K}+\delta,\tfrac{k+1}{K}-2\delta],\ Q_{k,3}=[\tfrac{k+1}{K}-2\delta,\tfrac{k+1}{K}-\delta],$
	and $Q_{k,4}=[\tfrac{k+1}{K}-\delta,\tfrac{k+1}{K}]$.	
	\begin{figure}[H]
		\centering
		\includegraphics[width=0.805\textwidth]{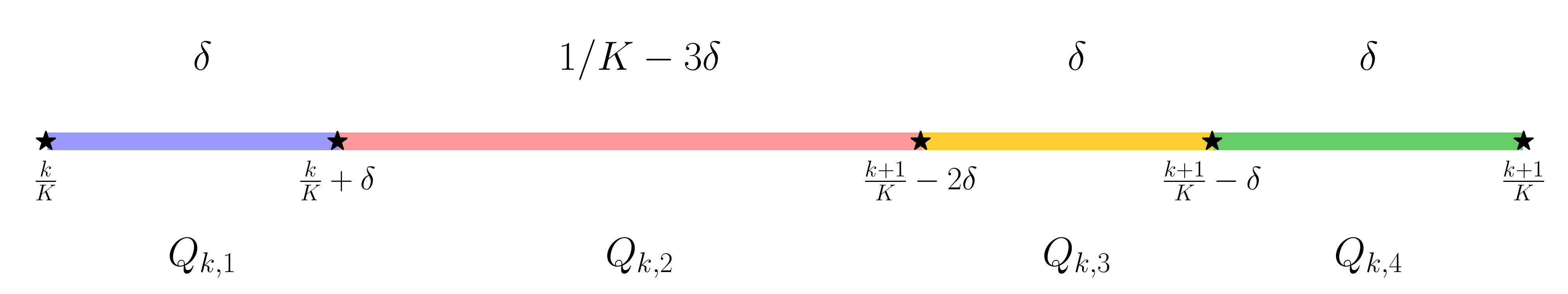}
		\caption{An illustration of $Q_{k,i}$ for $i=1,2,3,4$.}
		\label{fig:Qk}
	\end{figure}
	
	It is easy to verify that
	\begin{itemize}
	    \item $Q_{k,i}\subseteq [0,1]\backslash \Omega([0,1],K,\delta)$ for $k=0,1,\cdots,K-1$ and $i=1,2,3$;
	    \item $Q_{K-1,4}\subseteq [0,1]\backslash \Omega([0,1],K,\delta)$.
	\end{itemize}

	To estimate the difference between $\phi(x)$ and $f(x)$,  we consider the following four cases of $x$ in $[0,1]$ for each $k\in \{0,1,\cdots,K-1\}$.
	
	\mycase{1}{$x\in Q_{k,1}$.} 
	
	If $x\in Q_{k,1}$, then $x\in[0,1]\backslash \Omega([0,1],K,\delta)$ and 
	\[x+\delta\in Q_{k,2}\bigcup Q_{k,3}\subseteq [0,1]\backslash \Omega([0,1],K,\delta).\]
	It follows from Equation~\eqref{eq:g:in:f:ball}  that 
	\begin{equation*}
		g(x)\in \ball\big(f(x),\varepsilon\big)\subseteq \ball\big(f(x),\varepsilon+\omega_f(\delta)\big)
	\end{equation*} and 
	\begin{equation*}
		g(x+\delta)\in \ball\big(f(x+\delta),\varepsilon\big)\subseteq \ball\big(f(x),\varepsilon+\omega_f(\delta)\big).
	\end{equation*}
	By Lemma~\ref{lem:middleValue}, we get 
	\begin{equation*}
		\middleValue\big(g(x-\delta),g(x),g(x+\delta)\big)\in \ball\big(f(x),\varepsilon+\omega_f(\delta)\big).
	\end{equation*}
	
	\mycase{2}{$x\in Q_{k,2}$.} If $x\in Q_{k,2}$, then \begin{equation*}
	    x-\delta,x,x+\delta
	    \in Q_{k,1}\bigcup  Q_{k,2} \bigcup Q_{k,3}
	    \subseteq [0,1]\backslash \Omega([0,1],K,\delta). 
	\end{equation*}
	It follows  from Equation~\eqref{eq:g:in:f:ball}  that
	\begin{equation*}
		g(x-\delta)\in \ball \big(f(x-\delta),\varepsilon \big)\subseteq \ball\big(f(x),\varepsilon+\omega_f(\delta)\big),
	\end{equation*}
	\begin{equation*}
		g(x)\in \ball \big(f(x),\varepsilon \big)\subseteq \ball\big(f(x),\varepsilon+\omega_f(\delta)\big),
	\end{equation*}
	and 
	\begin{equation*}
		g(x+\delta)\in \ball \big(f(x+\delta),\varepsilon \big)\subseteq \ball\big(f(x),\varepsilon+\omega_f(\delta)\big).
	\end{equation*}
	Then,  by Lemma~\ref{lem:middleValue}, we have
	\begin{equation*}
		\middleValue\big(g(x-\delta),g(x),g(x+\delta)\big)\in \ball\big(f(x),\varepsilon+\omega_f(\delta)\big).
	\end{equation*}
	
	\mycase{3}{$x\in Q_{k,3}$.}  If $x\in Q_{k,3}$, then $x\in [0,1]\backslash \Omega([0,1],K,\delta)$ and \[x-\delta\in Q_{k,1}\bigcup Q_{k,2}\subseteq [0,1]\backslash \Omega([0,1],K,\delta).\]
	It follows  from Equation~\eqref{eq:g:in:f:ball}  that
	\begin{equation*}
		g(x)\in \ball\big(f(x),\varepsilon\big)\subseteq \ball\big(f(x),\varepsilon+\omega_f(\delta)\big)
	\end{equation*} and 
	\begin{equation*}
		g(x-\delta)\in \ball\big(f(x-\delta),\varepsilon\big)\subseteq \ball\big(f(x),\varepsilon+\omega_f(\delta)\big).
	\end{equation*}
	By Lemma~\ref{lem:middleValue}, we get 
	\begin{equation*}
		\middleValue\big(g(x-\delta),g(x),g(x+\delta)\big)\in \ball\big(f(x),\varepsilon+\omega_f(\delta)\big).
	\end{equation*}
	
	\mycase{4}{$x\in Q_{k,4}$.}  If $x\in Q_{k,4}$, we can divide this case into two sub-cases.
	\begin{itemize}
		\item   If $k\in \{0,1,\cdots,K-2\}$, then $x-\delta\in Q_{k,3}\in [0,1]\backslash \Omega([0,1],K,\delta)$ and $x+\delta\in Q_{k+1,1}\subseteq [0,1]\backslash \Omega([0,1],K,\delta)$. It follows  from Equation~\eqref{eq:g:in:f:ball}  that
		\begin{equation*}
			g(x-\delta)\in \ball\big(f(x-\delta),\varepsilon\big)\subseteq \ball\big(f(x),\varepsilon+\omega_f(\delta)\big)
		\end{equation*} and 
		\begin{equation*}
			g(x+\delta)\in \ball\big(f(x+\delta),\varepsilon\big)\subseteq \ball\big(f(x),\varepsilon+\omega_f(\delta)\big).
		\end{equation*}
		By Lemma~\ref{lem:middleValue}, we get 
		\begin{equation*}
			\middleValue\big(g(x-\delta),g(x),g(x+\delta)\big)\in \ball\big(f(x),\varepsilon+\omega_f(\delta)\big).
		\end{equation*}
		
		\item If $k=K-1$, then $x\in Q_{k,4}=Q_{K-1,4}\subseteq [0,1]\backslash \Omega([0,1],K,\delta)$ and $x-\delta\in Q_{k,3}\subseteq [0,1]\backslash \Omega([0,1],K,\delta)$. It follows  from Equation~\eqref{eq:g:in:f:ball}  that
		\begin{equation*}
			g(x)\in \ball\big(f(x),\varepsilon\big)\subseteq \ball\big(f(x),\varepsilon+\omega_f(\delta)\big)
		\end{equation*} and 
		\begin{equation*}
			g(x-\delta)\in \ball\big(f(x-\delta),\varepsilon\big)\subseteq \ball\big(f(x),\varepsilon+\omega_f(\delta)\big).
		\end{equation*}
		By Lemma~\ref{lem:middleValue}, we get 
		\begin{equation*}
			\middleValue\big(g(x-\delta),g(x),g(x+\delta)\big)\in \ball\big(f(x),\varepsilon+\omega_f(\delta)\big).
		\end{equation*}
	\end{itemize} 
	Since $[0,1]=\bigcup_{k=0}^{K-1}\Big(\bigcup_{i=1}^{4} Q_{k,i}\Big)$, we have 
	\begin{equation*}
		\middleValue\big(g(x-\delta),g(x),g(x+\delta)\big)\in \ball\big(f(x),\varepsilon+\omega_f(\delta)\big)\quad \tn{for any $x\in [0,1]$.}
	\end{equation*}
	Recall that $\phi(x)=\middleValue\big(g(x-\delta),g(x),g(x+\delta)\big)$. Then we have
	\begin{equation*}
		|\phi(x)-f(x)|\le \varepsilon+\omega_f(\delta)\quad \tn{for any $x\in [0,1]$.}
	\end{equation*}
	So we finish the proof.
\end{proof}

The next lemma below extend Lemma~\ref{lem:oneDimExtension} to the multidimensional case.
\begin{lemma}
	\label{lem:dDimExtension}
	Given any $\varepsilon>0$, $K\in \N^+$, and $\delta\in (0,\tfrac{1}{3K}]$,
	assume $f\in C([0,1]^d)$ and $g:\R^d\to\R$ is a general function with 
	\begin{equation*}
		|g(\bmx)-f(\bmx)|\le \varepsilon,\ \tn{i.e.,}\ g(\bmx)\in\ball\big(f(\bmx),\varepsilon\big)\quad \tn{for any $\bmx\in [0,1]^d\backslash \Omega([0,1]^d,K,\delta)$.}
	\end{equation*} 
	Then 
	\begin{equation*}
		|\phi(\bmx)-f(\bmx)|\le \varepsilon+d\cdot\omega_f(\delta)\quad \tn{for any $\bmx\in [0,1]^d$,}
	\end{equation*}
	where $\phi\coloneqq  \phi_d$ is defined by induction through
	\begin{equation}
		\label{eq:phiInduction}
		\phi_{i+1}(\bmx)\coloneqq  \middleValue\big(\phi_{i}(\bmx-\delta\bme_{i+1}),\phi_{i}(\bmx),\phi_{i}(\bmx+\delta\bme_{i+1})\big)\quad \tn{for $i=0,1,\cdots,d-1$,}
	\end{equation}
	where $\phi_0=g$ and $\{\bme_i\}_{i=1}^d$ is the standard basis in $\mathbb{R}^d$. 	 
\end{lemma}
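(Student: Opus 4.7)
The plan is to prove the lemma by induction on $i \in \{0, 1, \cdots, d\}$, establishing the progressively weaker exclusion set at each step. Concretely, I define the partial trifling regions
\[
\Omega^{(i)} \coloneqq \bigcup_{j=i+1}^{d} \Big\{\bmx \in [0,1]^d : x_j \in \bigcup_{k=1}^{K-1}\big(\tfrac{k}{K}-\delta,\tfrac{k}{K}\big)\Big\},
\]
so that $\Omega^{(0)} = \Omega([0,1]^d, K, \delta)$ and $\Omega^{(d)} = \emptyset$, and claim by induction that
\[
|\phi_i(\bmx) - f(\bmx)| \le \varepsilon + i \cdot \omega_f(\delta) \qquad \text{for every } \bmx \in [0,1]^d \setminus \Omega^{(i)}.
\]
The base case $i = 0$ is just the hypothesis on $g = \phi_0$, and the final case $i = d$ gives the desired conclusion since $\Omega^{(d)} = \emptyset$.

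The inductive step is the heart of the argument and reduces cleanly to the one-dimensional Lemma~\ref{lem:oneDimExtension} applied slice by slice. Fix $\bmx^* = [x_1^*, \cdots, x_d^*]^T \in [0,1]^d \setminus \Omega^{(i+1)}$, and introduce the univariate restrictions
\[
\tildef(t) \coloneqq f\big(x_1^*, \cdots, x_i^*, t, x_{i+2}^*, \cdots, x_d^*\big), \qquad \tildeg(t) \coloneqq \phi_i\big(x_1^*, \cdots, x_i^*, t, x_{i+2}^*, \cdots, x_d^*\big).
\]
The key observation is that any point of the form $(x_1^*, \cdots, x_i^*, t, x_{i+2}^*, \cdots, x_d^*)$ with $t \in [0,1] \setminus \Omega([0,1], K, \delta)$ lies in $[0,1]^d \setminus \Omega^{(i)}$: its $(i+1)$-th coordinate is in a good strip by the choice of $t$, while the coordinates $j > i+1$ inherit the good strip property from $\bmx^* \notin \Omega^{(i+1)}$. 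Hence by the induction hypothesis $|\tildeg(t) - \tildef(t)| \le \varepsilon + i \cdot \omega_f(\delta)$ on $[0,1] \setminus \Omega([0,1], K, \delta)$.

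Applying Lemma~\ref{lem:oneDimExtension} to the pair $(\tildef, \tildeg)$ with approximation error $\varepsilon + i \cdot \omega_f(\delta)$ and using $\omega_{\tildef}(\delta) \le \omega_f(\delta)$, I obtain
\[
\Big|\middleValue\big(\tildeg(t-\delta), \tildeg(t), \tildeg(t+\delta)\big) - \tildef(t)\Big| \le \varepsilon + (i+1)\cdot\omega_f(\delta) \qquad \text{for all } t \in [0,1].
\]
Evaluating at $t = x_{i+1}^*$ gives exactly $\phi_{i+1}(\bmx^*)$ on the left by the recursive definition \eqref{eq:phiInduction}, which closes the induction. The main subtlety, which I would expect to flag carefully, is verifying that the one-dimensional restriction really lives in $[0,1] \setminus \Omega([0,1], K, \delta)$ precisely when the multidimensional slice point lies in $[0,1]^d \setminus \Omega^{(i)}$; this hinges on the product structure of the trifling region together with the convention that $\phi_i$ (and hence $\tildeg$) is defined on all of $\R^d$ even though the approximation bound only holds on the good part of $[0,1]^d$, exactly as in the one-dimensional proof where shifts such as $x - \delta < 0$ are permitted because $\middleValue$ only needs two good inputs.
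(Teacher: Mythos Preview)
Your proposal is correct and follows essentially the same approach as the paper. The paper defines the nested sets $E_\ell \coloneqq \{\bmx\in[0,1]^d : x_i\in[0,1]\text{ for }i\le\ell,\ x_i\in[0,1]\setminus\Omega([0,1],K,\delta)\text{ for }i>\ell\}$, which are exactly your $[0,1]^d\setminus\Omega^{(\ell)}$, and runs the identical induction, freezing all coordinates but one and invoking the one-dimensional Lemma~\ref{lem:oneDimExtension} on the slice with $\omega_{\tildef}(\delta)\le\omega_f(\delta)$.
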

\begin{proof}
	For $\ell=0,1,\cdots,d$, we define 
	\begin{equation*}
		E_\ell\coloneqq  \bigg\{\bmx=[x_1,x_2,\cdots,x_d]^T: x_i\in \left\{\begin{smallmatrix*}[l]
			[0,1],&\tn{if } i\le \ell,\\
			[0,1]\backslash \Omega([0,1],K,\delta),&\tn{if } i>\ell
		\end{smallmatrix*}\right. \bigg\}.
	\end{equation*}
	Clearly, $E_0=[0,1]^d\backslash \Omega([0,1]^d,K,\delta)$ and $E_d=[0,1]^d$. See Figure~\ref{fig:Eell} for the illustrations of $E_\ell$ for $\ell=0,1,\cdots,d$ when $K=4$ and $d=2$.
	\begin{figure}[!htp]
		\centering
		\begin{subfigure}{0.32\textwidth}
			\centering
			\includegraphics[width=0.875\textwidth]{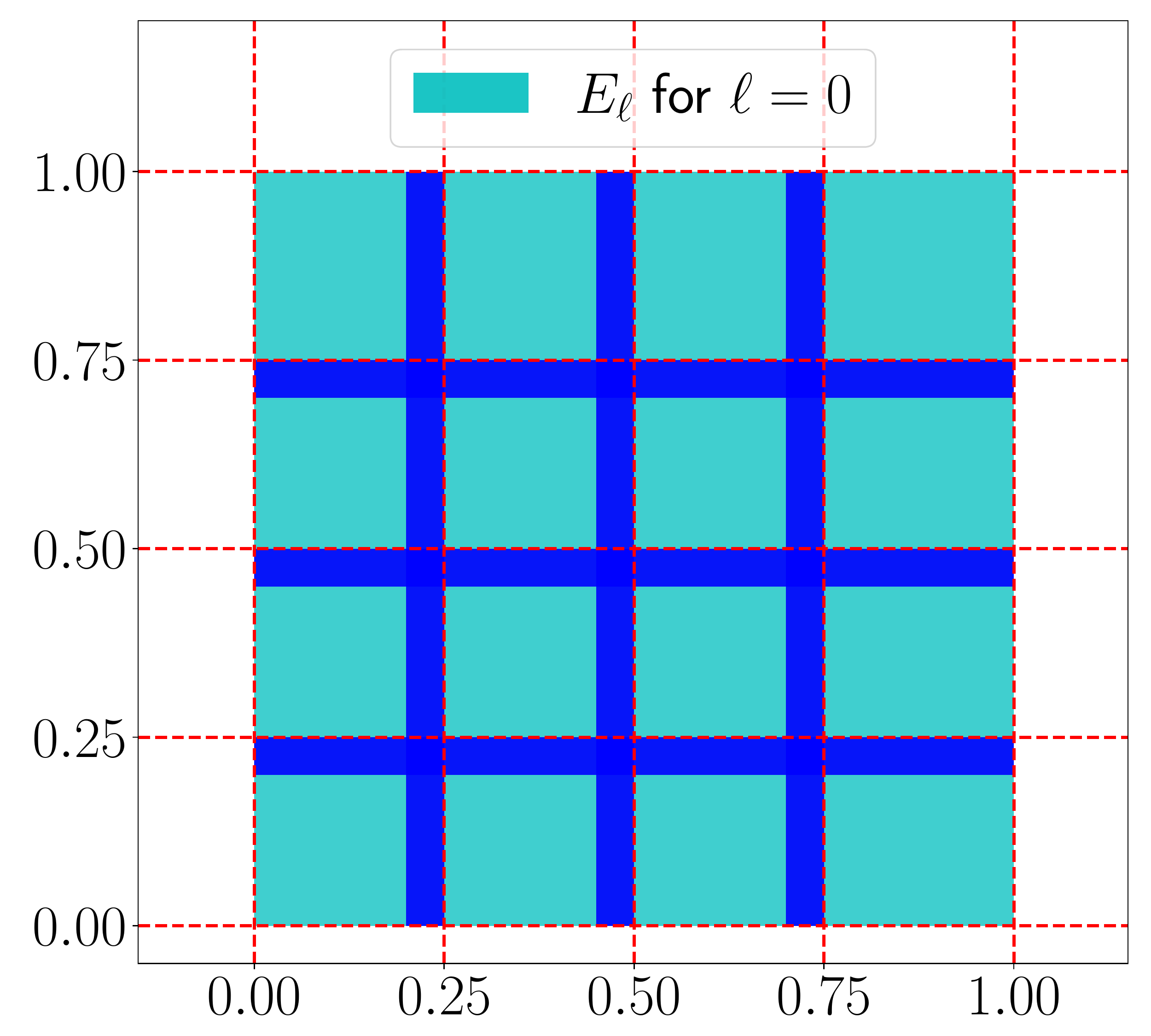}
		\end{subfigure}
		\begin{subfigure}{0.32\textwidth}
			\centering
			\includegraphics[width=0.875\textwidth]{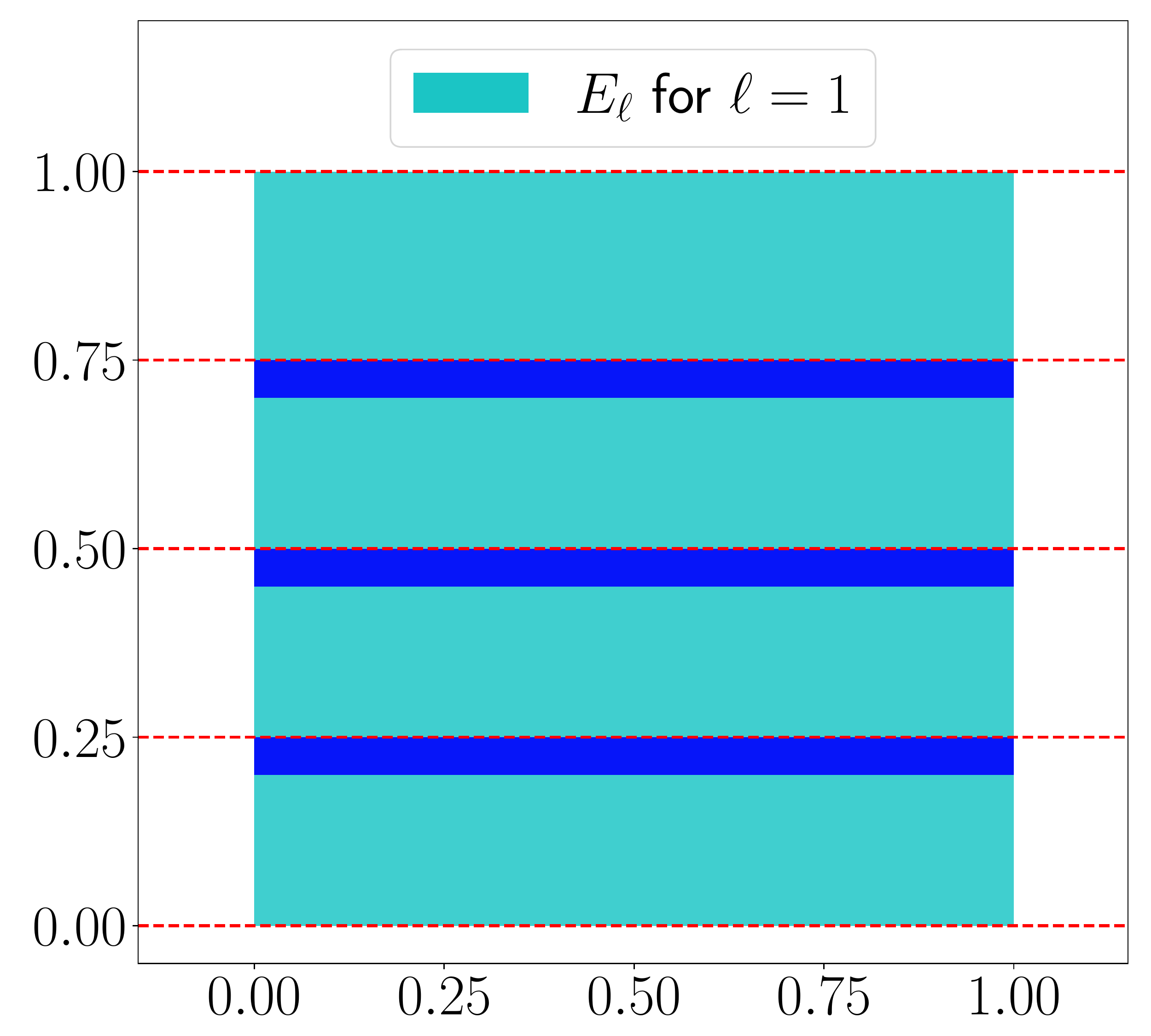}
		\end{subfigure}
		\begin{subfigure}{0.32\textwidth}
			\centering
			\includegraphics[width=0.875\textwidth]{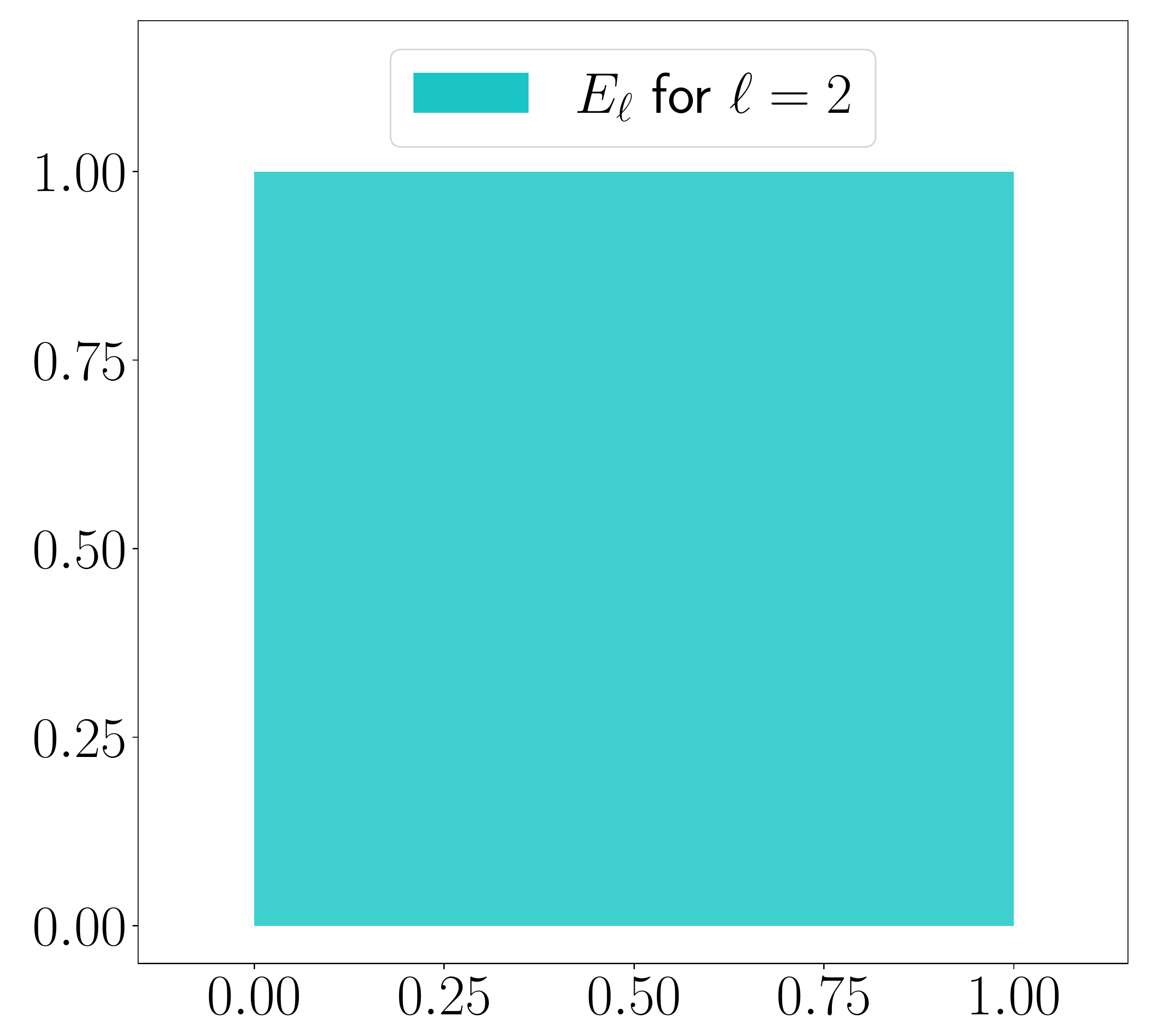}
		\end{subfigure}
		\caption{Illustrations of $E_\ell$ for $\ell=0,1,2$ when $K=4$ and $d=2$.}
		\label{fig:Eell}
	\end{figure}
	
	We would like to construct a sequence of functions $\phi_0,\phi_1,\cdots,\phi_{d}$ by induction, based on  Equation~\eqref{eq:phiInduction}, such that, for each  $\ell\in\{0,1,\cdots,d$\},
	\begin{equation}
		\label{eq:induction}
		\phi_\ell(\bmx)\in \ball\big(f(\bmx),\varepsilon+\ell\cdot\omega_f(\delta)\big)\quad \tn{for any $\bmx\in E_\ell$.}
	\end{equation}
	
	Let us first consider the case $\ell=0$. Note that $\phi_0=g$, $E_0= [0,1]^d\backslash \Omega([0,1]^d,K,\delta)$, and $|g(\bmx)-f(\bmx)|\le \varepsilon$ for any $\bmx\in [0,1]^d\backslash \Omega([0,1]^d,K,\delta)$. Then we have 
	\begin{equation*}
		\phi_0(\bmx)=g(\bmx)\in \ball\big(f(\bmx),\varepsilon\big)\quad \tn{for any $\bmx\in E_0$.}
	\end{equation*}
	That is, Equation~\eqref{eq:induction} is true for $\ell=0$.
	
	Now assume Equation~\eqref{eq:induction} is true for $\ell=i$. We will prove that it also holds for $\ell=i+1$. 
	By the hypothesis of induction, we have
	\begin{equation}
		\label{eq:ell=i}
		\phi_i(x_1,\cdots,x_i,t,x_{i+2},\cdots,x_d)\in \ball \big(f(x_1,\cdots,x_i,t,x_{i+2},\cdots,x_d),\varepsilon+i\cdot\omega_f(\delta)\big)
	\end{equation}
	for any  $x_1,\cdots,x_i\in [0,1]$ and $t,x_{i+2},\cdots,x_d\in [0,1]\backslash \Omega([0,1],K,\delta)$. 
	
	For fixed $x_1,\cdots,x_i\in [0,1]$ and $x_{i+2},\cdots,x_d\in [0,1]\backslash \Omega([0,1],K,\delta)$, denote \[\bmx^{[i]}:=[x_1,\cdots,x_i,x_{i+2},\cdots,x_d]^T\in[0,1]^{d-1}.\] Then define
	\begin{equation*}
		\psi_{\bmx^{[i]}}(t)\coloneqq  \phi_i(x_1,\cdots,x_i,t,x_{i+2},\cdots,x_d)\quad \tn{ for any $t\in \R$}
	\end{equation*}
	and
	\begin{equation*}
		f_{\bmx^{[i]}}(t)\coloneqq  f(x_1,\cdots,x_i,t,x_{i+2},\cdots,x_d)\quad \tn{ for any $t\in \R$.}
	\end{equation*}
	
	It follows from Equation~\eqref{eq:ell=i} that
	\begin{equation*}
		\psi_{\bmx^{[i]}}(t)\in \ball \big(f_{\bmx^{[i]}}(t),\varepsilon+i\cdot\omega_f(\delta)\big)\quad \tn{for any $t\in [0,1]\backslash \Omega([0,1],K,\delta)$.}
	\end{equation*}
	Then by Lemma~\ref{lem:oneDimExtension} (set $g=\psi_{\bmx^{[i]}}$ and $f=f_{\bmx^{[i]}}$ therein), we get, for any $t\in [0,1]$,
	\begin{equation*}
		\begin{split}
			\middleValue\big(\psi_{\bmx^{[i]}}(t-\delta),\psi_{\bmx^{[i]}}(t),\psi_{\bmx^{[i]}}(t+\delta)\big)
			&\in \ball \Big(f_{\bmx^{[i]}}(t),\varepsilon+i\cdot\omega_f(\delta)+\omega_{f_{\bmx^{[i]}}}(\delta)\Big)\\
			&\subseteq \ball \big(f_{\bmx^{[i]}}(t),\varepsilon+(i+1)\omega_f(\delta)\big).
		\end{split}
	\end{equation*}
	That is, for any $x_{i+1}=t\in [0,1]$, 
	\begin{equation*}
		\begin{split}
			&\quad \middleValue\Big(\phi_i(x_1,\cdots,x_i,x_{i+1}-\delta,x_{i+2},
				\cdots,x_d),\phi_i(x_1,\cdots,x_d),
				\phi_i(x_1,\cdots,x_i,x_{i+1}+\delta,x_{i+2},\cdots,x_d)\Big)
			\\
			&\in \ball \Big(f(x_1,\cdots,x_d),\varepsilon+(i+1)\omega_f(\delta)\Big).  
		\end{split}
	\end{equation*}
	Note that $x_1,\cdots,x_i\in [0,1]$, $x_{i+1}=t\in [0,1]$, and $x_{i+2},\cdots,x_d\in[0,1]\backslash \Omega([0,1],K,\delta)$ are arbitrary. Thus, for any $\bmx\in E_{i+1}$, we have
	\begin{equation*}
		\middleValue\big(\phi_i(\bmx-\delta\bme_{i+1}),\phi_i(\bmx),\phi_i(\bmx+\delta\bme_{i+1})\big)\in \ball \big(f(\bmx),\varepsilon+(i+1)\omega_f(\delta)\big),
	\end{equation*}
	which implies
	\begin{equation*}
		\phi_{i+1}(\bmx)\in \ball \big(f(\bmx),\varepsilon+(i+1)\omega_f(\delta)\big)\quad \tn{for any $\bmx\in E_{i+1}$.}
	\end{equation*}
	So Equation~\eqref{eq:induction} is true for $\ell=i+1$, which means we finish the process of mathematical induction.
	
	By the principle of induction, we have
	\begin{equation*}
		\phi(\bmx):=\phi_{d}(\bmx)\in \ball \big(f(\bmx),\varepsilon+d\cdot\omega_f(\delta)\big)\quad \tn{for any $\bmx\in E_{d}=[0,1]^d$.}
	\end{equation*}
	Therefore,
	\begin{equation*}
		|\phi(\bmx)-f(\bmx)|\le \varepsilon+d\cdot\omega_f(\delta)\quad \tn{for any $\bmx\in [0,1]^d$,}
	\end{equation*}
	which means we finish the proof.
\end{proof}

With Lemma~\ref{lem:dDimExtension} in hand, we are ready to prove Theorem~\ref{thm:Gap}. 
\begin{proof}[Proof of Theorem~\ref{thm:Gap}]
	Set $\phi_0=\tildephi$ and define $\phi_i$ for $i\in \{1,2,\cdots,d\}$ by induction as follows:
\begin{equation*}
	\phi_{i+1}(\bmx)\coloneqq  \middleValue\big(\phi_{i}(\bmx-\delta\bme_{i+1}),\phi_{i}(\bmx),\phi_{i}(\bmx+\delta\bme_{i+1})\big)\quad \tn{for $i=0,1,\cdots,d-1$,}
\end{equation*}
where $\{\bme_i\}_{i=1}^d$ is the standard basis in $\mathbb{R}^d$.
Then by Lemma~\ref{lem:dDimExtension} with $\phi=\phi_d$, we have
\begin{equation*}
	|\phi(\bmx)-f(\bmx)|\le \varepsilon+d\cdot \omega_f(\delta)\quad \tn{for any $\bmx\in [0,1]^d$.}
\end{equation*}
It remains to determine the network architecture implementing $\phi=\phi_d$.
Clearly, $\phi_0=\tildephi\in\NNF(\NNwidth\le N\NNspace\NNdepth\le L)$ implies 
\begin{equation*}
	\phi_{0}(\cdot-\delta\bme_{1}),\phi_{0}(\cdot),\phi_{0}(\cdot+\delta\bme_{1})\in \NNF(\NNwidth\le N\NNspace\NNdepth\le L).
\end{equation*}
By defining a vector-valued function $\bmPhi_0:\R^d\to \R^3$ as \[\bmPhi_0(\bmx)\coloneqq\big(\phi_{0}(\bmx-\delta\bme_{1}),\phi_{0}(\bmx),\phi_{0}(\bmx+\delta\bme_{1})\big)\quad \tn{for any $\bmx\in \R^d$,}\]
we have $\bmPhi_0\in \NNF(\NNinput=d\NNspace\NNwidth\le 3N\NNspace\NNdepth\le L\NNspace\NNoutput=3)$. 
Recall that  $\middleValue(\cdot,\cdot,\cdot)\in\NNF(\NNwidth\le 14\NNspace\NNdepth\le 2)$ by Lemma~\ref{lem:approxMid}.
Therefore, $\phi_1=\min(\cdot,\cdot,\cdot)\circ\bmPhi_0$ can be implemented by a ReLU FNN with width $\max\{3N,14\}\le 3(N+4)$ and depth $L+2$.
Similarly, $\phi=\phi_d$ can be implemented by a ReLU FNN with width $3^d(N+4)$ and depth $L+2d$.
So we finish the proof.
\end{proof}

\section{Proof of Theorem~\ref{thm:MainGap}}
\label{sec:4}

In this section, we prove Theorem~\ref{thm:MainGap}, a weaker version of the main theorem of this paper (Theorem~\ref{thm:Main}) targeting a ReLU FNN constructed to approximate a smooth function outside the trifling region.
The main idea is to construct ReLU FNNs through Taylor expansions of smooth functions. We first discuss  the proof sketch in Section~\ref{sec:sketchProofOfThmOld} and give the detailed proof in Section~\ref{sec:proofOfThmOld}.

\subsection{Proof sketch of Theorem~\ref{thm:MainGap}}
\label{sec:sketchProofOfThmOld}

Set $K=\calO(N^{2/d}L^{2/d})$ and 
let $\Omega([0,1]^d,K,\delta)$ partition $[0,1]^d$ into $K^d$ cubes $Q_\bmbeta$ for $\bmbeta\in \{0,1,\cdots,K-1\}^d$.
As we shall see later, the introduction of the trifling region $\Omega([0,1]^d,K,\delta)$ can reduce the difficulty in constructing ReLU FNNs to achieve the optimal approximation error  simultaneously in width and depth, since it is only required to uniformly control the approximation error  outside the trifling region and there is no requirement for the ReLU FNN inside the trifling region.
In particular,
for each $\bmbeta=[\beta_1,\beta_2,\cdots,\beta_d]^T\in \{0,1,\cdots,K-1\}^d$, we define  $\bmx_\bmbeta\coloneqq \bmbeta/K$ and 
\begin{equation*}
	Q_\bmbeta\coloneqq \big\{\bmx=[x_1,x_2,\cdots,x_d]^T: x_i\in [\tfrac{\beta_i}{K},\tfrac{\beta_i+1}{K}-\delta\cdot \one_{ \{\beta_i\le K-2\}}] \tn{ for } i=1,2,\cdots,d  \big\}.
\end{equation*}
Clearly,  $[0,1]^d=\Omega([0,1]^d,K,\delta)\bigcup \big(\cup_{\bmbeta\in\{0,1,\cdots,K-1\}^d} Q_\bmbeta\big)$ and $\bmx_\bmbeta$ is the vertex of $Q_\bmbeta$ with minimum $\|\cdot\|_1$ norm. See Figure~\ref{fig:QthetaExample} for the illustrations of $Q_\bmbeta$ and $\bmx_\bmbeta$.
\begin{figure}[!htp]
	\centering
	\begin{minipage}{0.95\textwidth}
		\centering
		\begin{subfigure}[b]{0.4\textwidth}
			\centering
			\includegraphics[width=0.9\textwidth]{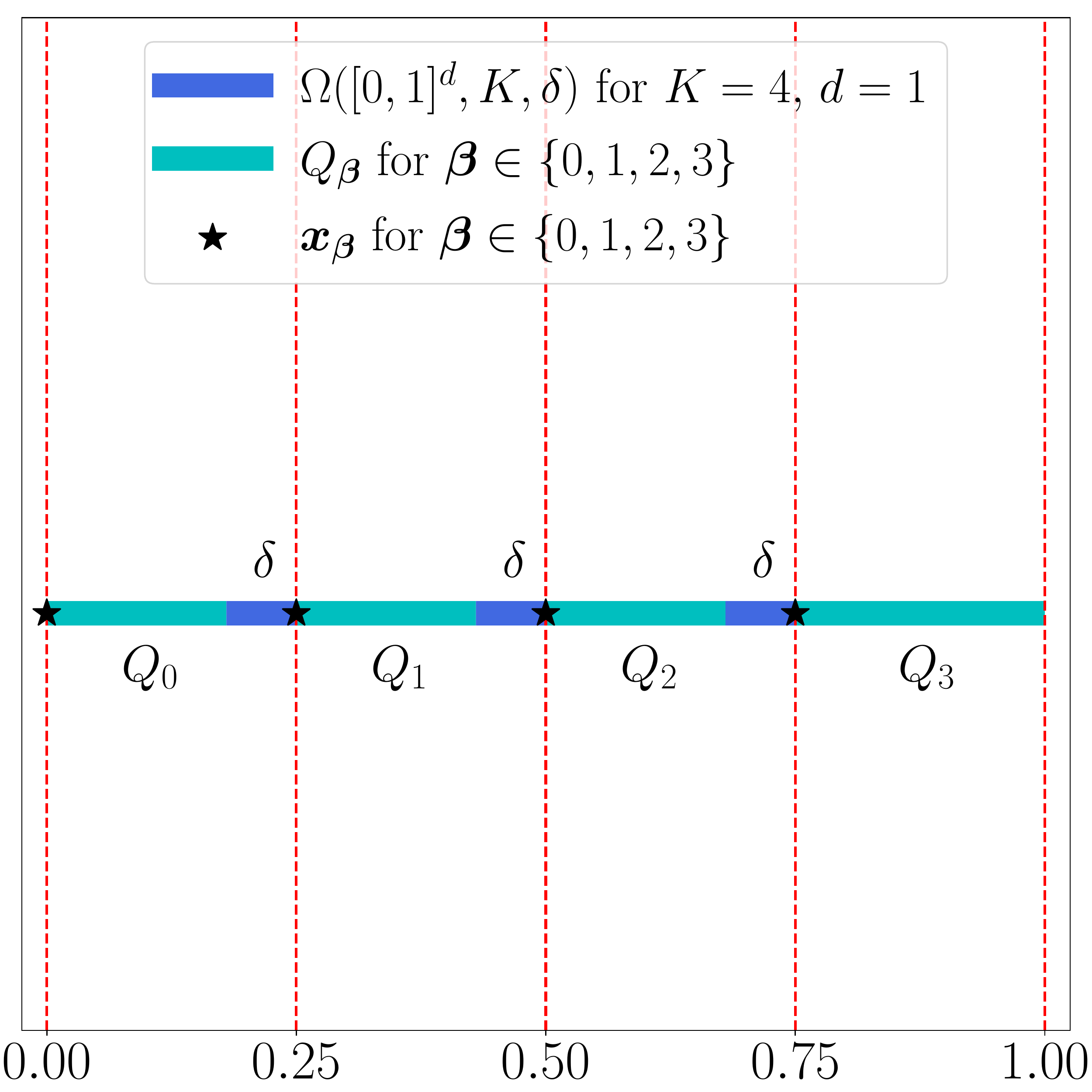}
			\subcaption{}
		\end{subfigure}
			\begin{minipage}{0.01\textwidth}
				\
			\end{minipage}
		\begin{subfigure}[b]{0.4\textwidth}
			\centering
			\includegraphics[width=0.9\textwidth]{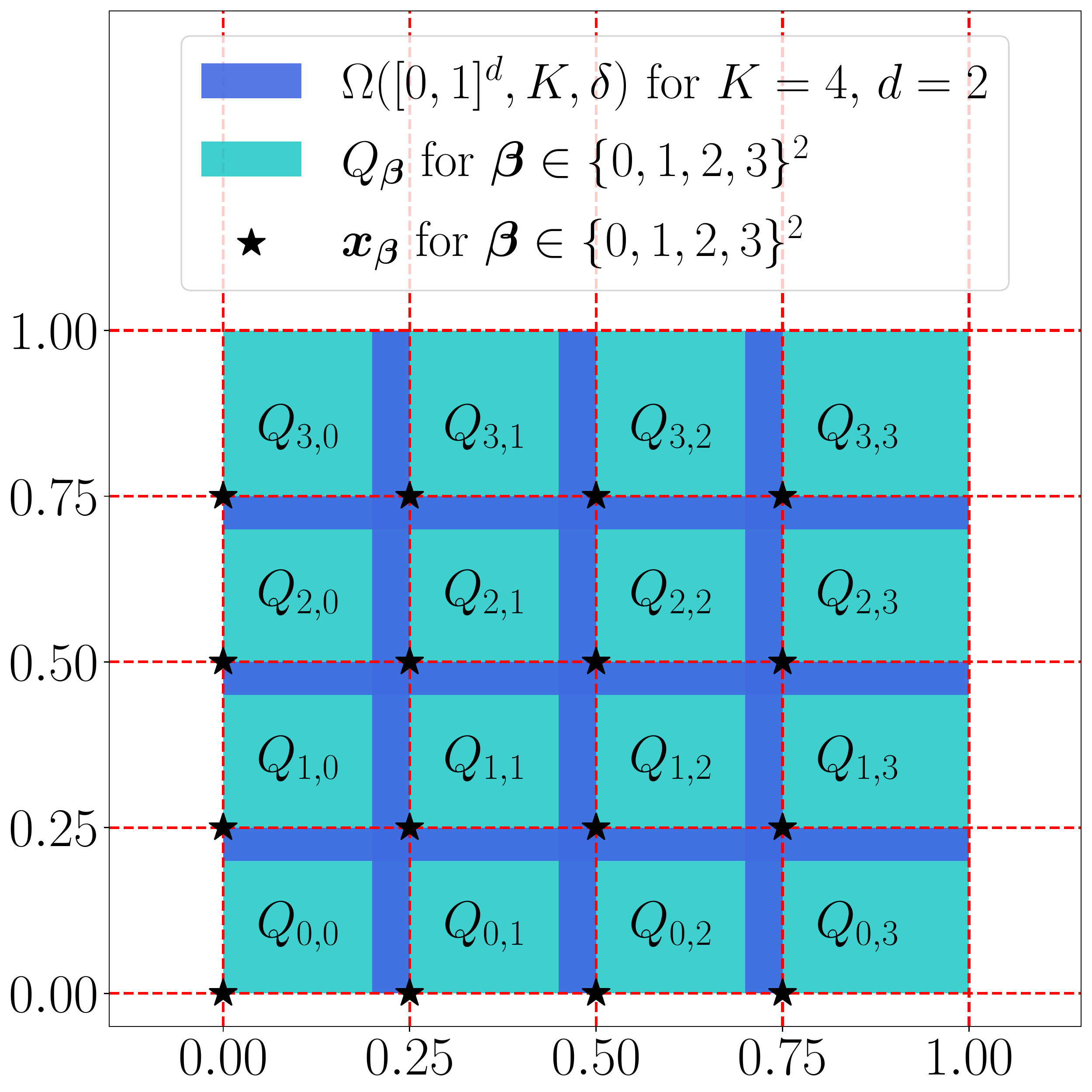}
			\subcaption{}
		\end{subfigure}
	\end{minipage}
	\caption{Illustrations of  $\Omega([0,1]^d,K,\delta)$,  $Q_\bmbeta$, and $\bmx_\bmbeta$ for $\bmbeta\in \{0,1,\cdots,K-1\}^d$. (a) $K=4$ and $d=1$. (b) $K=4$ and $d=2$. }
	\label{fig:QthetaExample}
\end{figure}

For any $\bmbeta\in \{0,1,\cdots,K-1\}^d$ and $\bmx\in Q_\bmbeta$,
there exists $\xi_\bmx\in (0,1)$ such that 
\begin{equation}\label{eqn:tle}
	f(\bmx)=\underbrace{\sum_{\|\bmalpha\|_1\le s-1} 
	\tfrac{\partial^\bmalpha f(\bmx_\bmbeta)}{\bmalpha !}\bmh^\bmalpha
	}_{\scrT_1}
	\   +  \
	\underbrace{\sum_{\|\bmalpha\|_1= s} 
	\tfrac{\partial^\bmalpha f(\bmx_\bmbeta+\xi_\bmx \bmh)}{\bmalpha !}\bmh^\bmalpha
	}_{\scrT_2}
	\eqqcolon \scrT_1+\scrT_2,\footnote{$\sum_{\|\bmalpha\|_1=s}$ is short for $\sum_{\|\bmalpha\|_1=s,\, \bmalpha\in \N^d}$. The same notation is used throughout this paper.}
\end{equation}
where $\bmh(\bmx)=\bmx-\bmx_\bmbeta=\bmx-{\bmbeta}/{K}$. Clearly, the magnitude of $\scrT_2$ is bounded by $\calO(K^{-s})=\calO(N^{-2s/d}L^{-2s/d})$. So we only need to construct a ReLU FNN with width $\calO(N\ln N)$ and depth $\calO(L\ln L)$ to approximate 
\[\scrT_1=\sum_{\|\bmalpha\|_1\le s-1} \tfrac{\partial^\bmalpha f(\bmx_\bmbeta)}{\bmalpha !}\bmh^\bmalpha\]
within an error  $\calO(N^{-2s/d}L^{-2s/d})$. To approximate $\scrT_1$ well by ReLU FNNs, we need three key steps as follows.
\begin{enumerate}[(i)]
    \item   \label{item:step1}
    Construct a ReLU FNN to implement a function $P_\bmalpha:\R^d\to \R$ approximating the polynomial $\bmh^\bmalpha$ well for each $\bmalpha\in\N^d$ with $\|\bmalpha\|_1\le s-1$. 

	\item \label{item:step2}
	Construct a ReLU FNN  to implement a vector-valued function $\bmPsi:\R^d\to\R^d$ projecting the whole cube $Q_\bmbeta$ to a point $\bmx_\bmbeta=\tfrac{\bmbeta}{K}$, i.e., $\bmPsi(\bmx)=\bmx_\bmbeta$ for any $\bmx\in Q_\bmbeta$ and  each $\bmbeta\in \{0,1,\cdots,K-1\}^d$.

	\item \label{item:step3}
	Construct a ReLU FNN to implement a function $\phi_\bmalpha:\R^d\to \R$ approximating $\partial ^\bmalpha f$ via solving a point fitting problem, i.e., $\phi_\bmalpha$ should fit $\partial ^\bmalpha f$ well at all points in $\big\{\bmx_\bmbeta:\bmbeta\in\{0,1,\cdots,K-1\}^d\big\}$ for each $\bmalpha\in\N^d$ with $\|\bmalpha\|_1\le s-1$. That is, for each $\bmalpha\in\N^d$ with $\|\bmalpha\|_1\le s-1$, we need to design $\phi_\bmalpha$ satisfying
	\begin{equation}\label{eq:step3:pointfit}
		\big|\phi_\bmalpha(\bmx_\bmbeta)-\partial ^\bmalpha f(\bmx_\bmbeta)\big|\le \calO(N^{-2s/d}L^{-2s/d})\quad \tn{for any $\bmbeta\in \{0,1,\cdots,K-1\}^d$.}
	\end{equation}
\end{enumerate}

We will establish three propositions corresponding to these three steps above. 
They will be applied to support the construction of the desired ReLU FNNs. Their proofs will be available in Section~\ref{sec:proofOfProposition}.

First, we establish  a general proposition, Proposition~\ref{prop:approxPolynomial} below, showing how to use ReLU FNNs to approximate multivariate  polynomials.
With Proposition~\ref{prop:approxPolynomial} in hand, Step~\eqref{item:step1} is straightforward.
\begin{proposition}
	\label{prop:approxPolynomial}
	Assume $P(\bmx)=\bmx^\bmalpha=x_1^{\alpha_1}x_2^{\alpha_2}\cdots x_d^{\alpha_d}$ for $\bmalpha\in \N^d$ with $\|\bmalpha\|_1\le k\in\N^+$. For any $N,L\in \N^+$, there exists a function $\phi$ implemented by a ReLU FNN with width $9(N+1)+k-1$ and depth $7k^2L$ such that
	\begin{equation*}
		|\phi(\bmx)-P(\bmx)|\le 9k(N+1)^{-7kL}\quad \tn{for any $\bmx\in [0,1]^d$.}
	\end{equation*}
\end{proposition}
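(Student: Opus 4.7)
The plan is to build $\phi$ from a small but robust ``scalar multiplication'' gadget, applied iteratively. Since $\|\bmalpha\|_1 \leq k$, we may write $\bmx^\bmalpha = z_1 z_2 \cdots z_k$, where each $z_j \in \{x_1,\dots,x_d,1\}$ (padding by $1$'s if $\|\bmalpha\|_1 < k$). Approximating such a length-$k$ product reduces to chaining $k-1$ approximate binary multiplications in the obvious left-to-right order. All intermediate products lie in $[0,1]$ because each factor does, so the error analysis stays on the unit interval where the basic gadgets are designed.

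The core sub-lemma I would prove (or cite from a later section) is: for any $M,L'\in\N^+$ there is a ReLU FNN of width $\calO(M)$ and depth $\calO(L')$ computing a function $\tildepsi:[0,1]\to[0,1]$ with $|\tildepsi(x)-x^2|\leq (M+1)^{-L'}$ in spirit. This is the natural width-and-depth strengthening of Yarotsky's sawtooth construction: a single sawtooth with $M$ teeth yields piecewise-linear interpolation of $x^2$ with error $\calO(M^{-2})$, and composing $L'$ such sawtooths produces interpolation at $\calO(M^{L'})$ nodes, driving the error down geometrically in $L'$ with base $\calO(M^2)$. Multiplication then follows from the polarization identity $xy=\tfrac12\bigl((x+y)^2-x^2-y^2\bigr)$, giving an approximate multiplier $\tildek:[0,1]^2\to[0,1]$ of width $\calO(M)$ and depth $\calO(L')$ with the same error order; clipping by $\sigma(\cdot)$ and $1-\sigma(1-\cdot)$ (both free in ReLU nets) keeps its output in $[0,1]$ so the chain never escapes the domain. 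Setting $M=N$ and $L' = 7kL$ gives a multiplier of width $\calO(N)$ and depth $7kL$ with error, say, at most $(N+1)^{-7kL}$ after absorbing constants into the exponent.

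To implement the length-$k$ product I would build the block $\Phi_i$ that maps $(y_{i-1}, z_{i+1}, z_{i+2}, \dots, z_k)$ to $(y_i, z_{i+2}, \dots, z_k)$, where $y_0 := z_1$ and $y_i := \tildek(y_{i-1}, z_{i+1})$. Each $\Phi_i$ uses the $9(N+1)$-wide, $7kL$-deep multiplier on the first two coordinates, while the remaining $k-i-1$ coordinates are simply propagated by identity channels (two extra ReLU channels per scalar, or a single channel if we assume inputs are already in $[0,1]$). Stacking $\Phi_1,\dots,\Phi_{k-1}$ gives a network of width $9(N+1) + (k-1)$ and depth $(k-1)\cdot 7kL \leq 7k^2 L$, matching the asserted bounds. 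For error control, an induction on $i$ shows $|y_i - z_1\cdots z_{i+1}|\leq (i+1)(N+1)^{-7kL}$: the inductive step uses $|\tildek(a,b)-ab|\leq (N+1)^{-7kL}$ and $|\tildek(a,b)-\tildek(a',b)|\leq |a-a'|$ (by Lipschitz continuity of $\tildek$ with constant $\leq 1$ on $[0,1]$, which follows from the explicit construction). At $i=k-1$ this yields $|\phi(\bmx)-\bmx^\bmalpha|\leq k(N+1)^{-7kL}$, and the factor $9$ absorbs the constants from the multiplier and the clipping.

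The main obstacle, and the part that needs the most care, is the sub-lemma on $x^2$: I need to track the \emph{exact} width-and-depth-to-error tradeoff for the $N$-teeth sawtooth construction, verify that the constants compose so that $k$ chained multiplications still fit in depth $7k^2L$ with error $9k(N+1)^{-7kL}$, and confirm Lipschitz control of $\tildek$ so that the per-multiplication errors add rather than multiply. A secondary subtlety is choosing between an ``identity carry'' via $\sigma(z)-\sigma(-z)$ (two neurons per scalar) and a single-neuron carry that exploits $z\in[0,1]$, since this is what allows the overall width to land at exactly $9(N+1)+k-1$ rather than $9(N+1)+2(k-1)$.
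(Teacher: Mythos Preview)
Your approach matches the paper's: Lemma~5.1 builds the $x^2$ approximator from sawtooths with width $3N$ and depth $L$, Lemma~5.2 gets $xy$ via polarization, and Lemma~5.3 chains $k-1$ binary multiplications with the remaining inputs carried alongside in single ReLU channels (exploiting $z_j\in[0,1]$, which is exactly your ``single-neuron carry''). The final step is just the observation that $\bmx^\bmalpha = z_1\cdots z_k$ after padding with $1$'s, precomposed with an affine map.

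There is one genuine gap. Your Lipschitz claim $|\tildek(a,b)-\tildek(a',b)|\le |a-a'|$ does not follow from the construction: from the polarization formula one has $\partial_a\tildek = \tildepsi'\big(\tfrac{a+b}{2}\big)-\tildepsi'\big(\tfrac{a}{2}\big)$, and for the piecewise-linear interpolant of $x^2$ this equals $b$ only up to a mesh-size error and can exceed $1$. The paper avoids this entirely by routing the triangle inequality through the \emph{exact} product rather than through $\tildek$ evaluated at the true partial product: the inductive step is
\[
|y_i-z_1\cdots z_{i+1}|\le \big|\tildek(y_{i-1},z_{i+1})-y_{i-1}z_{i+1}\big|+|z_{i+1}|\cdot\big|y_{i-1}-z_1\cdots z_i\big|,
\]
which uses only the approximation bound on $\tildek$ and the trivial fact $|z_{i+1}|\le 1$. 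For this to apply one needs $y_{i-1}$ to lie in the domain where $\tildek$ is accurate; the paper handles this not by clipping but by building the multiplier on the slightly enlarged square $[-0.1,1.1]^2$ (via Lemma~4.3), after checking that the accumulated error $9i(N+1)^{-7kL}\le 0.1$ keeps every $y_{i-1}$ inside that square. Your clipping variant is a legitimate alternative and keeps the width count the same, but you should rewrite the error propagation as above rather than relying on a Lipschitz bound for $\tildek$.
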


Proposition~\ref{prop:approxPolynomial} shows that  ReLU FNNs with width $\calO(N)$ and depth $\calO(L)$ are able to  approximate   polynomials with an error $\calO(N^{-L})$. This reveals the power of depth in ReLU FNNs for approximating polynomials, from the perspective of function compositions. 
The starting point of a  good approximation of functions is to approximate polynomials with high accuracy.  In classical approximation theory, the approximation power of any numerical scheme depends on the degree of 
polynomials that can be locally reproduced. 
Being able to approximate polynomials by ReLU FNNs with high
accuracy  plays a vital role in the proof of Theorem~\ref{thm:Main}.
It is interesting to study whether there is any other function space with reasonable size, besides polynomial space, having  an exponential error $\calO(N^{-L})$ when approximated by ReLU FNNs. Obviously, the space of smooth functions is too big  due to the optimality of Theorem~\ref{thm:Main} as shown in Section~\ref{sec:optimalityOfMainThm}. 

Proposition~\ref{prop:approxPolynomial} can be generalized to the case of polynomials defined on an arbitrary hypercube $[a,b]^d$. Let us give an example for the polynomial $xy$ below. Its proof will be provided later in Section~\ref{sec:approxPolynomial}.

\begin{lemma}
    \label{lem:xyApproxAB}
    For any $N,L\in \N^+$ and $a,b\in \R$ with $a<b$, there exists a function  $\phi$ implemented by a ReLU FNN with width $9N+1$ and depth $L$ such that
    \begin{equation*}
    |\phi(x,y)-xy|\le 6(b-a)^2N^{-L}\quad \tn{for any $x,y\in [a,b]$.}
    \end{equation*}
\end{lemma}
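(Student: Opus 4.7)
The plan is to reduce the problem on $[a,b]^2$ to the problem on the unit square $[0,1]^2$ via an affine change of variables, and then build the approximation of $uv$ on $[0,1]^2$ directly through a Yarotsky-style construction of $t^2$ plus the polarization identity. The rescaling is straightforward: setting $u=(x-a)/(b-a)$ and $v=(y-a)/(b-a)$ so that $u,v\in[0,1]$, one has
\[
xy=(b-a)^2 uv + a(x+y)-a^2,
\]
so if $\tildephi$ approximates $uv$ on $[0,1]^2$ with error $\varepsilon$, then $\phi(x,y):=(b-a)^2\tildephi(u,v)+a(x+y)-a^2$ approximates $xy$ on $[a,b]^2$ with error $(b-a)^2\varepsilon$. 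Thus it suffices to construct $\tildephi$ on $[0,1]^2$ with error at most $6N^{-L}$ using width $9N+1$ and depth $L$.

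For the core construction on $[0,1]^2$, the plan is to use the polarization identity $uv=\tfrac12\big((u+v)^2-u^2-v^2\big)$ (on the rescaled domain, after absorbing the factor $1/2$ into the range), so that the entire task reduces to approximating the one-dimensional function $t\mapsto t^2$ on $[0,1]$. The classical Yarotsky sawtooth construction, based on iterating the hat map $T(t)=2\sigma(t)-4\sigma(t-\tfrac12)+2\sigma(t-1)$, yields a piecewise linear approximant whose error decays exponentially with depth; by placing $N$ parallel hat units per layer rather than just one, the error becomes $\calO(N^{-L})$ with width $\calO(N)$ and depth $L$. Running three such sub-networks in parallel, fed by the affine inputs $u+v$, $u$, and $v$, and combining their outputs in a final affine layer, produces $\tildephi$.

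Once $\tildephi$ is in hand, the affine preprocessing $(x,y)\mapsto(u,v)$ can be absorbed into the weights and biases of the first hidden layer of $\tildephi$ at no cost. The affine postprocessing $(\,\cdot\,)\mapsto (b-a)^2(\,\cdot\,)+a(x+y)-a^2$ either requires an auxiliary channel piping $x+y$ through the network (contributing the ``$+1$'' neuron), or is realized equivalently by writing $xy=(x-a)(y-a)+a(x-a)+a(y-a)+a^2$ and merging the linear corrections with the output affine map. Either way, the total depth remains $L$ and the total width is at most $9N+1$.

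The main obstacle is the exact bookkeeping of width and depth: one must show that the three parallel copies of the $t^2$-approximator plus the auxiliary pipe for the linear correction fit within the ``$9N$'' and ``$+1$'' budget, and that the error constants coming from the polarization identity, the three summed sub-networks, and the base $t^2$-estimate multiply out to no more than $6$. The $(b-a)^2$ factor in the final bound is then automatic from the rescaling identity above, and Yarotsky's exponential-in-depth error estimate supplies the $N^{-L}$ factor once the base of the sawtooth is taken to be $N$ rather than $2$.
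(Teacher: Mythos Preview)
Your proposal is correct and follows essentially the same route as the paper: reduce to $[0,1]^2$ by the affine change of variables $u=(x-a)/(b-a)$, $v=(y-a)/(b-a)$ and the identity $xy=(b-a)^2uv+a(x+y)-a^2$, approximate $uv$ via the polarization identity and three parallel width-$3N$ sawtooth approximators of $t^2$ (giving width $9N$, depth $L$, error $6N^{-L}$), and use one extra neuron per layer to carry the linear correction term. The only detail to watch when you write it out is that piping $x+y$ through ReLU layers requires a nonnegative shift (the paper uses $\sigma(x+y+2|a|)$), which your alternative formulation $xy=(x-a)(y-a)+a(x-a)+a(y-a)+a^2$ already sidesteps.
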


Second, our goal is to construct a step function $\bmPsi$ mapping $\bmx\in Q_\bmbeta$ to $\bmx_\bmbeta=\tfrac{\bmbeta}{K}$ for any $\bmbeta\in\{0,1,\cdots,K-1\}^d$. We only need to approximate one-dimensional step functions, because in the multidimensional case we can simply set $\bmPsi(\bmx)=[\psi(x_1),\psi(x_2),\cdots,\psi(x_d)]^T$, where $\psi$ is a one-dimensional step function. Therefore, to implement Step~\eqref{item:step2}, we need to construct ReLU FNNs with width $\calO(N)$ and depth $\calO(L)$ to approximate one-dimensional step functions with $\calO(K)=\calO(N^{2/d}L^{2/d})$ ``steps'' as shown in Proposition~\ref{prop:approxStepFun} below.

\begin{proposition}
    \label{prop:approxStepFun}
	For any $N,L,d\in \N^+$ and $\delta\in(0, \tfrac{1}{3K}]$ with $K=\lfloor N^{1/d}\rfloor^2 \lfloor L^{2/d}\rfloor$, there exists a one-dimensional function $\phi$ implemented by a ReLU FNN  with width $4\lfloor N^{1/d}\rfloor +3$ and depth $4L+5$ such that
	\begin{equation*}
		\phi(x)=k\quad \tn{if $x\in [\tfrac{k}{K},\tfrac{k+1}{K}-\delta\cdot \one_{\{k\le K-2\}}]$\quad for $k=0,1,\cdots,K-1$.}
	\end{equation*}     
\end{proposition}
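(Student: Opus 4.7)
The plan is constructive: I will decompose the integer $k \in \{0, \ldots, K-1\}$ that $\phi(x)$ must equal in a mixed base, and build a subnetwork for each digit. Write $N_1 := \lfloor N^{1/d} \rfloor$ and $L_1 := \lfloor L^{2/d} \rfloor$, so that $K = N_1^2 L_1$. Every $k$ admits a unique representation $k = k_1 N_1 L_1 + k_2 L_1 + k_3$ with $k_1, k_2 \in \{0, \ldots, N_1 - 1\}$ and $k_3 \in \{0, \ldots, L_1 - 1\}$. If I can build subnetworks $\phi_1, \phi_2, \phi_3$ with $\phi_i(x) = k_i$ on every good interval $[\tfrac{k}{K}, \tfrac{k+1}{K} - \delta\,\one_{\{k \le K-2\}}]$, then $\phi := N_1 L_1 \cdot \phi_1 + L_1 \cdot \phi_2 + \phi_3$ meets the stated requirement.

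Two elementary building blocks suffice. First, a basic staircase $\Psi_M : \R \to \R$ of $M$ steps on $[0,1]$: summing sharp ReLU ramps $\tfrac{1}{\delta}\bigl[\sigma(x - \tfrac{j}{M} + \delta) - \sigma(x - \tfrac{j}{M})\bigr]$ over $j = 1, \ldots, M-1$ yields a piecewise-constant function equal to $j$ on each safe strip $[\tfrac{j}{M}, \tfrac{j+1}{M} - \delta]$, implementable in width $\mathcal{O}(M)$ and depth $1$; by construction every transition is confined to $[\tfrac{j}{M} - \delta, \tfrac{j}{M}]$, i.e., inside the trifling zone. Second, a zoom-in sawtooth $T_M(x) := M\bigl(x - \Psi_M(x)/M\bigr)$ rescales each such safe strip linearly onto $[0, 1 - M\delta]$, again in width $\mathcal{O}(M)$ and depth $\mathcal{O}(1)$. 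I would set $\phi_1 := \Psi_{N_1}$, $\phi_2 := \Psi_{N_1} \circ T_{N_1}$, and $\phi_3 := \Psi_{L_1} \circ T_{N_1} \circ T_{N_1}$, so that two successive zoom-ins cleanly separate the three digits.

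The delicate ingredient is $\Psi_{L_1}$: a naive single-layer staircase of $L_1 = \lfloor L^{2/d}\rfloor$ steps would need width $L_1$, which may vastly exceed the allotted $\mathcal{O}(N_1)$. I would instead realize $\Psi_{L_1}$ as a depth cascade of base-$N_1$ digit-extraction blocks (each a staircase-plus-sawtooth pair of width $\mathcal{O}(N_1)$ and depth $\mathcal{O}(1)$); after $\nu$ such blocks, the cascade distinguishes $N_1^\nu$ levels, so $\nu = \lceil \log_{N_1} L_1 \rceil$ suffices in general, and in the degenerate case $N_1 = 1$ one falls back on binary sawtooth compositions of depth $\mathcal{O}(\log_2 L_1) = \mathcal{O}(L)$. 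The main obstacle I anticipate is exactly this width-depth balance for $\Psi_{L_1}$: showing that the cascade depth plus the depths of the four other building blocks sums to at most $4L + 5$, and that the layer widths (including buffer channels routing $x$ and intermediate coordinates through the network) add to at most $4 N_1 + 3$, is a careful bookkeeping exercise that pins down the stated numerical constants. Finally, the hypothesis $\delta \in (0, \tfrac{1}{3K}]$ is precisely what makes every internal ReLU transition fall strictly inside a forbidden strip $(\tfrac{k}{K} - \delta, \tfrac{k}{K})$, yielding the exact equality $\phi(x) = k$ on each safe subinterval.
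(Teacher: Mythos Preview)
Your route is genuinely different from the paper's, and while the broad plan is sound, there is a gap in the cascade step and the exact constants are unlikely to fall out of your construction without substantial extra work.

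\textbf{What the paper does.} The paper never builds a naive one-hidden-layer staircase at all. It relies on two black-box lemmas from prior work: Lemma~\ref{lem:squarePointsLemma}, which says that $N_1(N_2+1)+1$ prescribed samples can be interpolated by a \emph{two}-hidden-layer ReLU FNN of width vector $[2N_1,\,2N_2+1]$, and Lemma~\ref{lem:widthReduction}, which trades width for depth, turning a network of width vector $[N,\,NL]$ into one of width $2N+2$ and depth $L+1$. For $d\ge 2$ the paper applies these two lemmas once with $N_1=\lfloor N^{1/d}\rfloor$ and $N_2=2\lfloor N^{1/d}\rfloor\lfloor L^{2/d}\rfloor-1$ to hit all $2K+1$ breakpoints of the staircase directly; no digit decomposition is needed, and the width $4\lfloor N^{1/d}\rfloor+3$ and depth $4L+5$ drop out immediately. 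For $d=1$ the paper uses a two-digit split $k=mL+\ell$ with $m<N^2L$: Lemmas~\ref{lem:squarePointsLemma} and~\ref{lem:widthReduction} build $\phi_1(x)=m$ in width $4N+2$ and depth $2L+1$, a second application on the residual gives $\phi_2(x-\tfrac{m}{M})=\ell$ in width $6$ and depth $2L+1$, and the two are chained.

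\textbf{Comparison and the gap.} Your three-digit decomposition $k=k_1N_1L_1+k_2L_1+k_3$ with elementary ReLU ramps is more explicit, but the delicate part you flag---realizing $\Psi_{L_1}$ as a depth cascade of base-$N_1$ blocks---has a concrete problem you did not address: after $\nu$ blocks the cascade naturally partitions $[0,1]$ into $N_1^\nu$ equal cells, whereas the $L_1$ steps you need sit at the points $\tfrac{j}{L_1}$, and in general $L_1\nmid N_1^\nu$, so the two grids do not align. Fixing this (e.g., with a non-uniform first block or a post-processing layer) is possible but is exactly the kind of work that eats into the tight width budget $4N_1+3$. By contrast, the paper's approach via Lemma~\ref{lem:squarePointsLemma} sidesteps the issue entirely: it fits arbitrary sample locations, so no cascade and no alignment argument are needed, and the stated constants follow with essentially no bookkeeping. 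What your approach would buy, if completed, is a self-contained proof that does not import Lemma~\ref{lem:squarePointsLemma}; what the paper's approach buys is brevity and the exact constants.
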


Next,  the aim of Step~\eqref{item:step3} is to construct  $\phi_\bmalpha$ implemented by a ReLU FNN such that Equation~\eqref{eq:step3:pointfit} holds for each $\bmalpha$. To this end, 
 we establish a proposition, Proposition~\ref{prop:pointsMatching} below,
to show that  ReLU FNNs with width $\calO(sN\ln N)$ and depth $\calO(L\ln L)$ can be constructed to fit $N^2L^2$ points within an error  $N^{-2s}L^{-2s}$.

\begin{proposition}
    \label{prop:pointsMatching}
	Given any $N,L,s\in \N^+$ and $\xi_i\in [0,1]$ for  $i=0,1,\cdots,N^2L^2-1$, there exists a function $\phi$ implemented by a ReLU FNN with width $16s(N+1)\log_2(8N)$ and depth $5(L+2)\log_2(4L)$ such that
	\begin{enumerate}[(i)]
		\item $|\phi(i)-\xi_i|\le N^{-2s}L^{-2s}$ for $i=0,1,\cdots,N^2L^2-1$;
		\item $0\le \phi(x)\le  1$ for any $x\in\R$.
	\end{enumerate}
\end{proposition}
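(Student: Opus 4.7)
The plan is to reduce the fitting problem to exact bit-level storage and retrieval. Set $J:=\lceil 2s\log_2(NL)\rceil$ and let $\widetilde\xi_i := \bin 0.b_{i,1}b_{i,2}\cdots b_{i,J}$ be the $J$-bit truncation of $\xi_i$, so that $|\xi_i-\widetilde\xi_i|\le 2^{-J}\le N^{-2s}L^{-2s}$. It thus suffices to build a ReLU network $\widetilde\phi$ with $\widetilde\phi(i)=\widetilde\xi_i$ for every $i\in\{0,1,\ldots,N^2L^2-1\}$ and follow it by the clipping layer $x\mapsto \sigma(x)-\sigma(x-1)$ to enforce condition~(ii).

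To store and access $N^2L^2$ arbitrary $J$-bit strings within width $\calO(sN\log N)$ and depth $\calO(L\log L)$, I would use a storage--selection--extraction paradigm. Split $i = i_1 L^2 + i_2$ with $i_1\in\{0,\ldots,N^2-1\}$ and $i_2\in\{0,\ldots, L^2-1\}$, and for each $i_1$ pack the $L^2J$ binary digits of $\widetilde\xi_{i_1L^2},\ldots,\widetilde\xi_{(i_1+1)L^2-1}$ into a single dyadic rational $y_{i_1}\in[0,1]$ by concatenation. Then build $\widetilde\phi$ as the composition of three sub-networks: (a) a \emph{selector} realising $i\mapsto (y_{i_1}, i_2)$, obtained by taking a linear combination of the indicator functions $\one_{\{\lfloor i/L^2\rfloor = i_1\}}$ produced by Proposition~\ref{prop:approxStepFun} with pre-computed weights $y_{i_1}$; (b) an \emph{index-dependent shift} that left-shifts $y_{i_1}$ by exactly $i_2J$ binary positions, placing the bits of $\widetilde\xi_i$ into the top $J$ positions; and (c) a \emph{bit extractor} of constant width and depth $\calO(J)$ that reads off those $J$ bits one at a time using the standard ReLU shift-and-strip construction (exact on dyadic inputs kept away from the break point $\tfrac12$ by a small safety offset), then recombines them with weights $2^{-\ell}$ to form $\widetilde\xi_i$.

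The main obstacle is stage~(b): a naive variable shift by as many as $(L^2-1)J$ positions would require depth $\calO(L^2\log(NL))$, busting the depth budget. The remedy is to refine the packing hierarchically. Further split $i_2 = k_1 + L k_2$ with $k_1,k_2\in\{0,\ldots,L-1\}$ and replace the single storage $y_{i_1}$ by $L$ parallel storages $y_{i_1,k_2}$, each carrying only $LJ$ bits; a second-level step-function selector (another use of Proposition~\ref{prop:approxStepFun}, contributing $\calO(L)$ to the width) then picks $y_{i_1,k_2}$ from $k_2$, so the remaining shift is by only $k_1 J\le LJ$ positions. This reduced shift can be implemented in depth $\calO(L\log L)$ by first producing the multiplier $2^{k_1J}\in\{2^0,\ldots,2^{(L-1)J}\}$ via a small lookup (again Proposition~\ref{prop:approxStepFun}) and then forming $y_{i_1,k_2}\cdot 2^{k_1J}$ with the approximate-product network of Lemma~\ref{lem:xyApproxAB}. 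Composing the three stages and carefully tracking widths and depths against the stated constants $16s(N+1)\log_2(8N)$ and $5(L+2)\log_2(4L)$ is the main technical work.
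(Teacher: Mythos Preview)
Your overall storage--selection--extraction picture is reasonable, but two steps break down against the stated width and depth budgets.

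First, stage~(b) as written cannot work: Lemma~\ref{lem:xyApproxAB} gives only an \emph{approximate} product, with error $6(b-a)^2N^{-L'}$, and here $b-a$ must accommodate multipliers up to $2^{(L-1)J}\approx (NL)^{2s(L-1)}$. To make the product accurate enough that the subsequent bit extractor (which is exquisitely sensitive---an error of $2^{-LJ}$ already corrupts a bit) still reads the correct bits, you would need depth $L'$ with $N^{L'}\gtrsim (NL)^{\Theta(sL)}$, i.e.\ depth of order $sL\log_N(NL)$. But the target depth $5(L+2)\log_2(4L)$ contains \emph{no} factor of~$s$, so this is out of budget. The variable shift must be realised by exact iterated doubling (as in Lemma~\ref{lem:bitExtractionOld}), not by an approximate product.

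Second, even granting an exact shift, your stage~(c) extractor has depth $\calO(J)=\calO(s\log_2(NL))$, which again carries an $s$ factor into the depth; for small $L$ (e.g.\ $L=1$, where the depth budget is a constant) and large $s$ or $N$ this fails outright. Relatedly, the ``$L$ parallel storages $y_{i_1,k_2}$'' each require a width-$\calO(N)$ lookup in $i_1$, giving total width $\calO(NL)$ rather than $\calO(sN\log N)$. The paper avoids all of this by never packing bits across different~$j$: it builds $J$ \emph{independent} single-bit networks $\phi_j$ via Lemma~\ref{lem:bitExtractionNew} (each of width $8N+6$ and depth $5L+7$) and then arranges them in an $m\times n$ grid with $m=2s\lceil\log_2(4N)\rceil$ copies in parallel and $n=\lceil\log_2(2L)\rceil$ in series, so that the factor $s\log N$ lands in the width (which has room for it) and only $\log L$ lands in the depth.
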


The proofs of Propositions~\ref{prop:approxPolynomial},   \ref{prop:approxStepFun}, and \ref{prop:pointsMatching} can be found in Sections~\ref{sec:approxPolynomial},  \ref{sec:approxStepFun}, and \ref{sec:pointsMatching}, respectively.
The main ideas of proving Theorem~\ref{thm:Main} are summarized in Table~\ref{tab:constructiveProofShort}.

\begin{table}[ht]   
	\caption{A list of sub-networks for approximating smooth functions.
		Recall that $\bmh=\bmx-\bmPsi(\bmx)=\bmx-\bmx_\bmbeta$ for $\bmx\in Q_\bmbeta$.}
	\label{tab:constructiveProofShort}
	\centering  
	\resizebox{0.985\textwidth}{!}{ 
		\begin{tabular}{ccccc} 
			\toprule
			target function & function implemented by network & width & depth & approximation error  \\
			
			\midrule[0.8pt]
			step function &  $\bmPsi(\bmx)$ & $\calO(N)$ & $\calO(L)$ & no error  outside $\Omega([0,1]^d,K,\delta)$\\
			
			\midrule
			$x_1x_2$ & $\varphi(x_1,x_2)$& $\calO(N)$ & $\calO(L)$  & $\scrE_1=216(N+1)^{-2s(L+1)}$\\

			\midrule
			$\bmh^\bmalpha$ & $P_\bmalpha(\bmh)$& $\calO(N)$ & $\calO(L)$  & $\scrE_2=9s(N+1)^{-7sL}$\\
			
			\midrule
			${\partial^\bmalpha f(\bmPsi(\bmx))}$ & $\phi_{\bmalpha}(\bmPsi(\bmx))$& $\calO(N\ln N)$ & $\calO(L\ln L)$  & $\scrE_3=2N^{-2s}L^{-2s}$\\

			\midrule
			$\sum\limits_{\|\bmalpha\|\le s-1}\tfrac{\partial^\bmalpha f(\bmPsi(\bmx))}{\bmalpha!}\bmh^\bmalpha$ & $\sum\limits_{\|\bmalpha\|\le s-1}\varphi\Big(\tfrac{\phi_{\bmalpha}(\bmPsi(\bmx))}{\bmalpha!},P_\bmalpha(\bmh)\Big)$& $\calO(N\ln N)$ & $\calO(L\ln L)$  & $\calO(\scrE_1+\scrE_2+\scrE_3)$\\
			
			\midrule
			
			$f(\bmx)$ & $\phi(\bmx)\coloneqq\sum\limits_{\|\bmalpha\|\le s-1}\varphi\Big(\tfrac{\phi_{\bmalpha}(\bmPsi(\bmx))}{\bmalpha!},P_\bmalpha(\bmx-\bmPsi(\bmx))\Big)$& $\calO(N\ln N)$ & $\calO(L\ln L)$  & $\begin{array}{c}
				\calO(\|\bmh\|_2^{-s}+\scrE_1+\scrE_2+\scrE_3)\\ \le \calO(K^{-s}) =\calO(N^{-2s/d}L^{-2s/d})
			\end{array}$\\
			
			\bottomrule
		\end{tabular} 
	}
\end{table}

Finally, we would like to compare our analysis with that in \cite{yarotsky2019}. Both \cite{yarotsky2019} and our analysis rely on local Taylor expansions as in Equation~\eqref{eqn:tle} to approximate the target function $f$. Both analysis methods construct ReLU FNNs to approximate polynomials and encode the Taylor expansion coefficients into ReLU FNNs. However, the way to localize the Taylor expansion (i.e., defining the local neighborhood such that the expansion is valid) and the approach to constructing ReLU FNNs are different. We will discuss the details as follows.

{\bf Localization.} In \cite{yarotsky2019},  a ``two-scale" partition procedure and a standard triangulation divide $[0,1]^d$ into simplexes and a partition of unity is constructed using compactly supported functions that are linear on each simplex, which implies that these functions in the partition of unity can be represented by ReLU FNNs. Taylor expansions of $f$ are constructed within each support of the functions in the partition of unity. In this paper, we simply divide the domain into small hypercubes of uniform size as visualized in Figure~\ref{fig:QthetaExample}. Taylor expansions of $f$ are constructed within each hypercube. The reader can understand our approach as a simple way to construct a partition of unity using piecewise constant functions with binary values. The introduction of the trifling region allows us to simply construct ReLU FNNs to approximate these piecewise constant functions without caring about the approximation error  within the trifling region. Hence, our construction can be much simplified and makes it easy to estimate all constant prefactors in our error  estimates, which is challenging in \cite{yarotsky2019}.  

{\bf ReLU FNNs for Taylor expansions.} In \cite{yarotsky2019}, very deep ReLU FNNs with  width $\calO(1)$ are constructed to approximate polynomials in local Taylor expansions, and hence, the optimal approximation error  in width was not explored in \cite{yarotsky2019}. In this paper, we construct ReLU FNNs with arbitrary width and depth to approximate polynomials in local Taylor expansions using Proposition~\ref{prop:approxPolynomial}, which allows us to explore the optimal approximation error  in width and is more challenging. In \cite{yarotsky2019}, the coefficients of adjacent local Taylor expansions, i.e., $\partial ^\bmalpha f$ in Equation~\eqref{eqn:tle}, are encoded into ReLU FNNs via bit extraction, which is the key to achieving a better approximation error  of ReLU FNNs to approximate $f$ than the original local Taylor expansions, since the number of coefficients can be significantly reduced via encoding. Actually, the error in depth by bit extraction is nearly optimal. In this paper, the approximation to $\partial ^\bmalpha f$ is reduced to a point fitting problem that can be solved by constructing ReLU FNNs using bit extraction as sketched out in the previous paragraphs. Hence, we can also achieve the optimal approximation error  in depth. The key to achieving the optimal approximation error  in width in the above approximation is the application of Lemma~\ref{lem:squarePointsLemma} that essentially fits $\calO(N^2)$ samples with ReLU FNNs of width $\calO(N)$ and depth $2$. Due to the simplicity of our analysis, we can construct ReLU FNNs with arbitrary width and depth to approximate $f$ and specify all constant prefactors in our approximation error.

\subsection{Constructive proof}
\label{sec:proofOfThmOld} 

According to the key ideas of proving Theorem~\ref{thm:MainGap} summarized in Section~\ref{sec:sketchProofOfThmOld}, let us present the detailed proof.
\begin{proof}[Proof of Theorem~\ref{thm:MainGap}]
	The detailed proof can be divided  into four steps as follows.
\mystep{1}{Set up.}
Set $K=\lfloor N^{1/d}\rfloor^2\lfloor L^{2/d}\rfloor$ and let $\Omega([0,1]^d,K,\delta)$ partition $[0,1]^d$ into $K^d$ cubes $Q_\bmbeta$ for $\bmbeta\in \{0,1,\cdots,K-1\}^d$. In particular,
for each $\bmbeta=[\beta_1,\beta_2,\cdots,\beta_d]^T\in \{0,1,\cdots,K-1\}^d$, we define  $\bmx_\bmbeta\coloneqq \bmbeta/K$ and 
\begin{equation*}
	Q_\bmbeta\coloneqq \big\{\bmx=[x_1,x_2,\cdots,x_d]^T: x_i\in [\tfrac{\beta_i}{K},\tfrac{\beta_i+1}{K}-\delta\cdot \one_{ \{\beta_i\le K-2\}}] \tn{ for } i=1,2,\cdots,d  \big\}.
\end{equation*}
Clearly,  $[0,1]^d=\Omega([0,1]^d,K,\delta)\bigcup \big(\cup_{\bmbeta\in\{0,1,\cdots,K-1\}^d} Q_\bmbeta\big)$ and $\bmx_\bmbeta$ is the vertex of $Q_\bmbeta$ with minimum $\|\cdot\|_1$ norm. See Figure~\ref{fig:QthetaExample} for the illustrations of $Q_\bmbeta$ and $\bmx_\bmbeta$.

By Proposition~\ref{prop:approxStepFun}, there exists  $\psi\in \NNF(\NNwidth\le 4N+3\NNspace\NNdepth \le 4N+5)$  such that
\begin{equation*}
	\psi(x)=k\quad \tn{if $x\in [\tfrac{k}{K},\tfrac{k+1}{K}-\delta\cdot \one_{\{k\le K-2\}}]$\quad for $k=0,1,\cdots,K-1$.}
\end{equation*}    
Then for each $\bmbeta\in \{0,1,\cdots,K-1\}^d$,  $\psi(x_i)={\beta_i}$ for all $\bmx\in Q_\bmbeta$ for $i=1,2,\cdots,d$.

Define 
\begin{equation*}
	\bmPsi(\bmx)\coloneqq \big[{\psi(x_1),\psi(x_2),\cdots,\psi(x_d)}\big]^T/K\quad \tn{for any $\bmx\in [0,1]^d$,}
\end{equation*}
then
\begin{equation*}
	\bmPsi(\bmx)=\bmbeta/K=\bmx_\bmbeta\quad \tn{if $\bmx\in Q_\bmbeta$\quad for $\bmbeta\in \{0,1,\cdots,K-1\}^d$.}
\end{equation*}

For any $\bmx\in Q_\bmbeta$ and $\bmbeta\in\{0,1,\cdots,K-1\}^d$, by the Taylor expansion, there exists $\xi_\bmx \in (0,1)$ such that
\begin{equation*}
	f(\bmx)=\sum_{\|\bmalpha\|_1\le s-1} \tfrac{\partial^\bmalpha f(\bmPsi(\bmx))}{\bmalpha !}\bmh^\bmalpha+ \sum_{\|\bmalpha\|_1= s} \tfrac{\partial^\bmalpha f(\bmPsi(\bmx)+\xi_\bmx \bmh)}{\bmalpha !}\bmh^\bmalpha,\quad \tn{where $\bmh=\bmx-\bmPsi(\bmx)$.} 
\end{equation*}

\mystep{2}{Construct the desired function $\phi$.}
By Lemma~\ref{lem:xyApproxAB}, there exists 
\[
\varphi\in \NNF\big(\NNwidth\le 9(N+1)+1\NNspace \NNdepth \le 2s(L+1)\big)\] 
such that
\begin{equation}
	\label{eq:error1}
	|\varphi(x_1,x_2)- x_1x_2|\le 216(N+1)^{-2s(L+1)}\eqqcolon \scrE_1\quad \tn{for any $x_1,x_2\in [-3,3]$.}
\end{equation}

For each $\bmalpha\in\N^d$ with  $\|\bmalpha\|_1\le s$, by Proposition~\ref{prop:approxPolynomial},  there exists  
\begin{equation*}
	\begin{split}
		P_\bmalpha\in \NNF\big(\NNwidth\le9(N+1)+s-1\NNspace \NNdepth\le 7s^2L\big)
	\end{split}
\end{equation*}
such that
\begin{equation}\label{eq:error2}
	|P_{\alpha}(\bmx)- \bmx^\bmalpha|\le 9s(N+1)^{-7sL}\eqqcolon \scrE_2\quad  \tn{for any $\bmx\in [0,1]^d $.}
\end{equation}

For each $i\in\{0,1,\cdots,K^d-1\}$, define
\[
\bmeta(i)=[\eta_1,\eta_2,\cdots,\eta_d]^T\in \{0,1,\cdots,K-1\}^d
\]
such that $\sum_{j=1}^d \eta_j K^{j-1}=i$. Such a map $\bmeta$ is a bijection from $\{0,1,\cdots,K^d-1\}$ to $\{0,1,\cdots,K-1\}^d$.
For each $\bmalpha\in \N^d$ with $ \|\bmalpha\|_1\le s-1$, define
\begin{equation*}
	\xi_{\bmalpha,i}=\big({\partial ^\bmalpha f(\tfrac{\bmeta(i)}{K})+1}\big)/{2}\quad \tn{for $i\in\{0,1,\cdots,K^d-1\}$.}
\end{equation*}
Then $\|\partial^\bmalpha f\|_{L^\infty([0,1]^d)}\le 1$ implies $\xi_{\bmalpha,i}\in [0,1]$ for $i=0,1,\cdots,K^d-1$ and each $\bmalpha$. 
 Note that $K^d=\big(\lfloor N^{1/d}\rfloor^2\lfloor L^{2/d}\rfloor\big)^d\le N^2L^2$. By Proposition~\ref{prop:pointsMatching}, there exists 
\begin{equation*}
	\tildephi_\bmalpha\in\NNF\big(\NNwidth \le 16s(N+1)\log_2(8N)\NNspace \NNdepth\le 5(L+2)\log_2(4L)\big)
\end{equation*}
such that,  for each $\bmalpha\in\N^d$ with $\|\bmalpha\|_1\le s-1$, we have
\begin{equation*}
	|\tildephi_\bmalpha(i)-\xi_{\bmalpha,i}|\le N^{-2s}L^{-2s}\quad \tn{for $i=0,1,\cdots,K^d-1$.}
\end{equation*}

For each $\bmalpha\in \N^d$ with $\|\bmalpha\|_1\le s-1$, define \begin{equation*}
	\phi_\bmalpha(\bmx)\coloneqq 2\tildephi_\bmalpha\big(\sum_{j=1}^d x_jK^{j-1}\big)-1\quad \tn{for any $\bmx=[x_1,x_2,\cdots,x_d]^T\in \R^d$.}
\end{equation*} 
It is easy to verify that
\begin{equation*}
	\phi_\bmalpha\in\NN\big(\NNwidth \le 16s(N+1)\log_2(8N)\NNspace \NNdepth\le 5(L+2)\log_2(4L)\big).
\end{equation*}
Then, for  each $\bmalpha\in \N^d$ with $\|\bmalpha\|_1\le s-1$ and each $\bmeta=\bmeta(i)=[\eta_1,\eta_2,\cdots,\eta_d]^T\in \{0,1,\cdots,K-1\}^d$ corresponding to $i=\sum_{j=1}^d \eta_j K^{j-1}\in \{0,1,\cdots,K^d-1\}$, we have
\begin{equation*}
	\begin{split}
		\big|\phi_\bmalpha(\tfrac{\bmeta}{K})-\partial^\bmalpha f(\tfrac{\bmeta}{K})\big|
		&=\Big|2\tildephi_\bmalpha\big(\sum_{j=1}^d \eta_jK^{j-1}\big)-1-(2\xi_{\bmalpha,i}-1)\Big|\\
		&=2|\tildephi_\bmalpha(i)-\xi_{\bmalpha,i}|\le 2N^{-2s}L^{-2s}.
	\end{split}
\end{equation*}
Therefore, for each $\bmbeta\in \{0,1,\cdots,K-1\}^d$ and each $\bmalpha\in \N^d$ with $\|\bmalpha\|_1\le s-1$, we have
\begin{equation}
	\label{eq:error3}
	\big|\phi_\bmalpha(\bmx_\bmbeta)-\partial^\bmalpha f(\bmx_\bmbeta)\big|=\big|\phi_\bmalpha(\tfrac{\bmbeta}{K})-\partial^\bmalpha f(\tfrac{\bmbeta}{K})\big|\le  2N^{-2s}L^{-2s}\eqqcolon \scrE_3.
\end{equation}

Now we can construct the desired function $\phi$ as
\begin{equation}
	\label{eq:phiDefSmoothFun}
	\phi(\bmx)\coloneqq \sum_{\|\bmalpha\|_1\le s-1} \varphi\Big(\tfrac{\phi_\bmalpha(\bmPsi(\bmx))}{\bmalpha !},P_\bmalpha\big(\bmx-\bmPsi(\bmx)\big)\Big)\quad \tn{for any $\bmx\in \R^d$.}
\end{equation}
It remains to estimate the approximation error  and determine the size of the network implementing $\phi$.

\mystep{3}{Estimate approximation error.}

Fix $\bmbeta\in \{0,1,\cdots,K-1\}^d$,
let us estimate the approximation error  for a fixed $\bmx\in Q_\bmbeta$. See Table \ref{tab:constructiveProofShort} for a summary of the approximation errors. 
Recall that $\bmPsi(\bmx)=\bmx_\bmbeta$ and $\bmh=\bmx-\bmPsi(\bmx)=\bmx-\bmx_\bmbeta$.
It is easy to check that $ |f(\bmx)-\phi(\bmx)|$ is bounded by
\begin{equation*}
	\begin{split}
		& \quad  \left|
		\sum_{\|\bmalpha\|_1\le s-1} \tfrac{\partial^\bmalpha f(\bmPsi(\bmx))}{\bmalpha !}\bmh^\bmalpha
		+ \sum_{\|\bmalpha\|_1= s} \tfrac{\partial^\bmalpha f(\bmPsi(\bmx)+\xi_\bmx \bmh)}{\bmalpha !}\bmh^\bmalpha
		-\sum_{\|\bmalpha\|_1\le s-1} \varphi\Big(\tfrac{\phi_\bmalpha(\bmPsi(\bmx))}{\bmalpha !},P_\bmalpha\big(\bmx-\bmPsi(\bmx)\big)\Big) 
		\right|\\
		&\le \underbrace{\sum_{\|\bmalpha\|_1=s} \Big|\tfrac{\partial^\bmalpha f(\bmx_\bmbeta+\xi_\bmx \bmh)}{\bmalpha !}\bmh^\bmalpha\Big|}_{\scrI_1} 
			\quad +  \quad
			\underbrace{\sum_{\|\bmalpha\|_1\le s-1} \Big|\tfrac{\partial^\bmalpha f(\bmx_\bmbeta)}{\bmalpha !}\bmh^\bmalpha- \varphi\big(\tfrac{\phi_\bmalpha(\bmx_\bmbeta)}{\bmalpha !},P_\bmalpha(\bmh)\big)\Big|}_{\scrI_2} 
			\eqqcolon \scrI_1 +\scrI_2.
	\end{split}
\end{equation*}
Recall the fact that
\begin{equation*}
    \sum_{\|\bmalpha\|_1= s}1
    =\big|\big\{\bmalpha\in\N^d:\|\bmalpha\|_1
    = s\big\}\big|\le (s+1)^{d-1}\ \footnote{In fact, we have $\big|\big\{\bmalpha\in\N^d:\|\bmalpha\|_1= s\big\}\big|=\binom{s+d-1}{d-1}$, implying $(s/d+1)^{d-1}\le \sum_{\|\bmalpha\|_1= s}1\le (s+1)^{d-1}$. Thus, the lower bound of the estimate is still exponentially large in $d$. To the best of our knowledge, we cannot avoid a constant prefactor that is exponentially large in $d$ when Taylor expansion is used in the analysis.}
\end{equation*}
and 
\[\sum_{\|\bmalpha\|_1\le s-1}1=\sum_{i=0}^{s-1}\bigg(\sum_{\|\bmalpha\|_1=i}1\bigg)\le\sum_{i=0}^{s-1}(i+1)^{d-1}\le s\cdot(s-1+1)^{d-1}= s^d.\]

For the first part $\scrI_1$, we have
\begin{equation*}
	\scrI_1=\sum_{\|\bmalpha\|_1=s} \Big|\tfrac{\partial^\bmalpha f(\bmx_\bmbeta+\xi_\bmx \bmh)}{\bmalpha !}\bmh^\bmalpha\Big| \le \sum_{\|\bmalpha\|_1=s} \Big|\tfrac{1}{\bmalpha !}\bmh^\bmalpha\Big|\le (s+1)^{d-1}K^{-s}.
\end{equation*}

For the second part $\scrI_2$, we have
\begin{equation*}
	\begin{split}
		\scrI_2 
		=\sum_{\|\bmalpha\|_1\le s-1} 
		\underbrace{ \Big|\tfrac{\partial^\bmalpha f(\bmx_\bmbeta)}{\bmalpha !}\bmh^\bmalpha
		- \varphi\big(\tfrac{\phi_\bmalpha(\bmx_\bmbeta)}{\bmalpha !},P_\bmalpha(\bmh)\big)\Big|
		}_{\scrI_{2}(\bmalpha)}
		\eqqcolon \sum_{\|\bmalpha\|_1\le s-1} \scrI_2(\bmalpha).
	\end{split}
\end{equation*}
Fix $\bmalpha\in\N^d$ with $\|\bmalpha\|_1\le s-1$, we have
\begin{equation*}
	\begin{split}
		\scrI_2(\bmalpha)
		&= \Big|\tfrac{\partial^\bmalpha f(\bmx_\bmbeta)}{\bmalpha !}\bmh^\bmalpha
		- \varphi\big(\tfrac{\phi_\bmalpha(\bmx_\bmbeta)}{\bmalpha !},P_\bmalpha(\bmh)\big)\Big| \\
		& \le \underbrace{ \Big|\tfrac{\partial^\bmalpha f(\bmx_\bmbeta)}{\bmalpha !}\bmh^\bmalpha
		- \varphi\big(\tfrac{\partial^\bmalpha f(\bmx_\bmbeta)}{\bmalpha !},P_\bmalpha(\bmh)\big)\Big|
		}_{\scrI_{2,1}(\bmalpha)}
			+
			\underbrace{\Big| \varphi\big(\tfrac{\partial^\bmalpha f(\bmx_\bmbeta)}{\bmalpha !},P_\bmalpha(\bmh)\big)
			-\varphi\big(\tfrac{\phi_\bmalpha(\bmx_\bmbeta)}{\bmalpha !},P_\bmalpha(\bmh)\big)\Big|
			}_{\scrI_{2,2}(\bmalpha)} 
			\\
		&\eqqcolon  \scrI_{2,1}(\bmalpha)+\scrI_{2,2}(\bmalpha).
	\end{split}
\end{equation*}

Note that $\scrE_2=9s(N+1)^{-7sL}\le 9s(2)^{-7s}\le 2$.
By  $\bmh^\bmalpha\in[0,1]$ and Equation~\eqref{eq:error2}, we have $P_\bmalpha(\bmh)\in [-2,3]\subseteq [-3,3]$. Then by  $\partial^\bmalpha f(\bmx_\bmbeta)\in [-1,1]$ and Equations~\eqref{eq:error1} and \eqref{eq:error2}, we have
\begin{equation*}
	\begin{split}
		\scrI_{2,1}(\bmalpha)
		&
		= \Big|\tfrac{\partial^\bmalpha f(\bmx_\bmbeta)}{\bmalpha !}\bmh^\bmalpha- \varphi\big(\tfrac{\partial^\bmalpha f(\bmx_\bmbeta)}{\bmalpha !},P_\bmalpha(\bmh)\big)\Big|
		\\
		&
		\le\Big|\tfrac{\partial^\bmalpha f(\bmx_\bmbeta)}{\bmalpha !}\bmh^\bmalpha- \tfrac{\partial^\bmalpha f(\bmx_\bmbeta)}{\bmalpha !}P_\bmalpha(\bmh)\Big|+
			\underbrace{\Big|\tfrac{\partial^\bmalpha f(\bmx_\bmbeta)}{\bmalpha !}P_\bmalpha(\bmh)-\varphi\big(\tfrac{\partial^\bmalpha f(\bmx_\bmbeta)}{\bmalpha !},P_\bmalpha(\bmh)\big)\Big|}_{\tn{$\le \scrE_1$ by Eq. \eqref{eq:error1}}}
			\\
		&
		\le  \tfrac{1}{\bmalpha !}\underbrace{\big|\bmh^\bmalpha
		- P_\bmalpha(\bmh)\big|}_{\tn{$\le \scrE_2$ by Eq. \eqref{eq:error2}}}+\scrE_1
			\le \tfrac{1}{\bmalpha !}\scrE_2+\scrE_1
			\le \scrE_1+\scrE_2.
			\\
	\end{split}
\end{equation*}

To estimate $\scrI_{2,2}(\bmalpha)$, we need the following fact derived from Equation~\eqref{eq:error1}: 
\begin{equation}\label{eq:error1:new}
	\begin{split}
		|\varphi(x_1,x_2)-\varphi(\tildex_1,x_2)|
		&\le 
		\underbrace{|\varphi(x_1,x_2)-x_1x_2|}_{\tn{$\le \scrE_1$ by Eq. \eqref{eq:error1}}}
		+\underbrace{|\varphi(\tildex_1,x_2)-\tildex_1x_2|}_{\tn{$\le \scrE_1$ by Eq. \eqref{eq:error1}}}+|x_1x_2-\tildex_1x_2|\\
		&\le 2\scrE_1+3|x_1-\tildex_1|,
	\end{split}
\end{equation}
for any $x_1,\tildex_1,x_2\in [-3,3]$.

Since $\scrE_3=2N^{-2s}L^{-2s}\le 2$ and $\partial^\bmalpha f(\bmx_\bmbeta)\in [-1,1]$, we have $\phi_\bmalpha(\bmx_\bmbeta)\in[-3,3]$ by Equation~\eqref{eq:error3}. Then by $P_\bmalpha(\bmh)\in[-3,3]$ and Equations~\eqref{eq:error1:new} and \eqref{eq:error3}, we have
\begin{equation*}
	\begin{split}
		\scrI_{2,2}(\bmalpha)
		&= 
		\Big| 
		\varphi\big(\tfrac{\partial^\bmalpha f(\bmx_\bmbeta)}{\bmalpha !},P_\bmalpha(\bmh)\big)
		-\varphi\big(\tfrac{\phi_\bmalpha(\bmx_\bmbeta)}{\bmalpha !},P_\bmalpha(\bmh)\big)
		\Big|\\
		& \le 2\scrE_1+3\underbrace{\Big|\tfrac{\partial^\bmalpha f(\bmx_\bmbeta)}{\bmalpha !}
		-\tfrac{\phi_\bmalpha(\bmx_\bmbeta)}{\bmalpha !} \Big|}_{\tn{$\le \scrE_3$ by Eq. \eqref{eq:error3}}}
		\le 2\scrE_1+3\scrE_3.
	\end{split}
\end{equation*}

Therefore, we get
\begin{equation*}
	\begin{split}
		|f(\bmx)-\phi(\bmx)|\le \scrI_1+\scrI_{2}
		&\le \scrI_1+ 
		\sum_{\|\bmalpha\|_1\le s-1}\scrI_{2}(\bmalpha) 
		\le\scrI_1+\sum_{\|\bmalpha\|_1\le s-1}\Big(\scrI_{2,1}(\bmalpha)+\scrI_{2,2}(\bmalpha)\Big)\\
		&\le (s+1)^{d-1}K^{-s}+s^d\Big((\scrE_1+\scrE_2)+(2\scrE_1+3\scrE_3)\Big)\\
		&\le (s+1)^d(K^{-s}+3\scrE_1+\scrE_2+3\scrE_3).
	\end{split}
\end{equation*}

Since $\bmbeta\in \{0,1,\cdots,K-1\}^d$ and  $\bmx\in Q_\bmbeta$ are arbitrary and
\[[0,1]^d
= \Omega([0,1]^d,K,\delta)\bigcup\Big( \cup_{\bmbeta\in\{0,1,\cdots,K-1\}^d}Q_\bmbeta\Big),\]
we have, \tn{for any $\bmx\in [0,1]^d\backslash \Omega([0,1]^d,K,\delta)$},
\begin{equation*}
	\begin{split}
		|f(\bmx)-\phi(\bmx)|\le (s+1)^d(K^{-s}+3\scrE_1+\scrE_2+3\scrE_3).
	\end{split}
\end{equation*}
Recall that  $K=\lfloor N^{1/d}\rfloor^2\lfloor L^{2/d}\rfloor\ge \tfrac{N^{2/d}L^{2/d}}{8}$ and
\[(N+1)^{-7sL}\le(N+1)^{-2s(L+1)}\le (N+1)^{-2s}2^{-2sL}\le N^{-2s}L^{-2s}.\]
 Then we have
\begin{equation*}
	\begin{split}
		&\quad (s+1)^d(K^{-s}+3\scrE_1+\scrE_2+3\scrE_3)\\
		&=(s+1)^d\Big(K^{-s}+648(N+1)^{-2s(L+1)}+9s(N+1)^{-7sL}+6N^{-2s}L^{-2s}\Big)\\
		&\le (s+1)^d\Big(8^s N^{-2s/d}L^{-2s/d}+(654+9s)N^{-2s}L^{-2s}\Big)\\
		&\le (s+1)^d(8^s+654+9s)N^{-2s/d}L^{-2s/d}\le 84(s+1)^d8^sN^{-2s/d}L^{-2s/d}.\\
	\end{split}
\end{equation*}

\mystep{4}{Determine the size of the network implementing $\phi$.}

It remains to estimate the width and depth of the network implementing $\phi$. 
Recall that,
for $\bmalpha\in \N^d$ with $\|\bmalpha\|_1\le s-1$, 

\begin{equation*}
	\left\{\begin{array}{l}
		\bmPsi\in \NNF\big(\NNwidth\le d(4N+3)\NNspace \NNdepth\le 4L+5\big),\\
		\phi_\bmalpha\in \NNF\big(\NNwidth\le 16s(N+1)\log_2(8N)\NNspace \NNdepth\le 5(L+2)\log_2(4L)\big),\\
		P_\bmalpha\in \NNF\big(\NNwidth\le 9(N+1)+s-1\NNspace \NNdepth\le 7s^2L\big),\\
		\varphi\in \NNF\big(\NNwidth\le 9(N+1)+1\NNspace \NNdepth\le 2s(L+1)\big).\\		
	\end{array}\right.
\end{equation*}

\begin{figure}[!htp]
	\centering
	\includegraphics[width=0.85\textwidth]{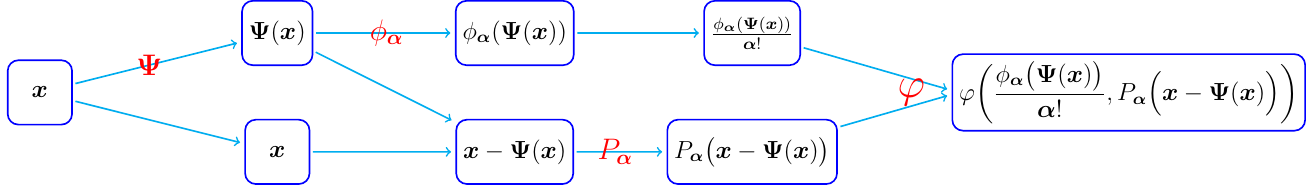}	
	\caption{An illustration of the sub-network architecture  implementing each component of $\phi$, $
		\varphi\Big(\tfrac{\phi_\bmalpha(\bmPsi(\bmx))}{\bmalpha !},P_\bmalpha\big(\bmx-\bmPsi(\bmx)\big)\Big)$ for each $\bmalpha\in\N^d$ with $\|\bmalpha\|\le s-1$. 
	}
	\label{fig:phiForAlpha}
\end{figure}

By Equation~\eqref{eq:phiDefSmoothFun} and Figure~\ref{fig:phiForAlpha}, it easy to verify that $\phi$ can be implemented by a ReLU FNN with width 
\[\begin{split}
	\sum_{\|\bmalpha\|_1\le s-1} 16sd(N+2)\log_2(8N)
	&\le s^d\cdot 16sd(N+2)\log_2(8N)\\
	&= 16s^{d+1}d(N+2)\log_2(8N)
\end{split}\] and depth 
\begin{equation*}
	\begin{split}
		(4L+5)+2s(L+1)+7s^2L+5(L+2)\log_2(4L) +3\le 
		18s^2(L+2)\log_2(4L)
	\end{split} 
\end{equation*} as desired. So we finish the proof.
\end{proof}

\section{Proofs of Propositions in Section~\ref{sec:sketchProofOfThmOld}}
\label{sec:proofOfProposition}
In this section, we will prove all propositions  in Section~\ref{sec:sketchProofOfThmOld}.

\subsection{Proof of Proposition~\ref{prop:approxPolynomial} for polynomial approximation}
\label{sec:approxPolynomial}

To prove Proposition~\ref{prop:approxPolynomial}, we will construct ReLU FNNs to approximate  multivariate polynomials following the  four steps below. 
\begin{itemize}
	\item $f(x)=x^2$. We approximate $f(x)=x^2$ by the combinations and compositions of ``sawtooth'' functions as shown in Figures~\ref{fig:toothFunctions} and \ref{fig:fs}.
	
	\item $f(x,y)=xy$. To approximate $f(x,y)=xy$, we use the result of the previous step and the fact that $xy=2\big((\tfrac{x+y}{2})^2-(\tfrac{x}{2})^2-(\tfrac{y}{2})^2\big)$.
	
	\item $f(x_1,x_2,\cdots,x_k)=x_1x_2\cdots x_k$. 
	We approximate $f(x_1,x_2,\cdots,x_k)=x_1x_2\cdots x_k$ for any $k\ge 2$
	via mathematical induction based on the result of the previous step.
	
	\item A general polynomial $P(\bmx)=\bmx^\bmalpha=x_1^{\alpha_1}x_2^{\alpha_2}\cdots x_d^{\alpha_d}$ with $\|\bmalpha\|_1\le k$. Any one-term polynomial  of degree $\le k$ can be written as $C z_1z_2\cdots z_k$ with some entries equaling $1$, where $C$ is a constant and $\bmz=[z_1,z_2,\cdots,z_k]^T$ can be attained via an affine linear map with $\bmx$ as the input. Then use the result of the previous step.    
\end{itemize}

The idea of using ``sawtooth'' functions (see Figure~\ref{fig:toothFunctions}) was first raised in \cite{yarotsky2017} for approximating $x^2$ using FNNs with width $6$ and depth $\calO(L)$ and achieving an error  $\calO(2^{-L})$; our construction is different from and more general than that in \cite{yarotsky2017}, working for ReLU FNNs of width $\calO(N)$ and depth $\calO(L)$ for any $N$ and $L$, and achieving an error  $\calO(N^{-L})$. As discussed below Proposition~\ref{prop:approxPolynomial}, this $\calO(N^{-L})$ approximation error  of polynomial functions shows the power of depth in ReLU FNNs via function composition.

First,  let us show how to construct  ReLU FNNs to approximate $f(x)=x^2$.
\begin{lemma}
	\label{lem:approxSquare}
	For any $N,L\in \N^+$, there exists a function  $\phi$ implemented by a ReLU FNN with width $3N$ and depth $L$ such that
	\begin{equation*}
		|\phi(x)-x^2|\le N^{-L}\quad \tn{for any $x\in [0,1]$.}
	\end{equation*}
\end{lemma}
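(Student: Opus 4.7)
The plan is to extend Yarotsky's binary sawtooth construction for $f(x)=x^2$ to an $N$-ary analog, exploiting the fact that a sawtooth with $N$ teeth can be realized in a single ReLU layer of width $\calO(N)$, so that $L$ iterated self-compositions produce $N^L$ teeth using depth $L$ and width $\calO(N)$, and the resulting piecewise linear interpolant of $x^2$ at $N^L$ equispaced nodes is accurate to $\calO(N^{-2L})$, which is already stronger than the required $N^{-L}$.

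First I would define an $N$-tooth tent map $T_N:[0,1]\to[0,1]$ as the piecewise affine function with $N$ equal teeth over $[0,1]$, and verify that $T_N$ can be written as a linear combination of $\calO(N)$ ReLU units with a common preactivation shift on each tooth, so that $T_N$ is exactly realizable by a single ReLU layer of width $\calO(N)$. Next, I would let $f_s$ denote the piecewise linear interpolant of $x^2$ at the nodes $k/N^s$ for $k=0,1,\dots,N^s$, and invoke the standard estimate for linear interpolation of a $C^2$ convex function on intervals of length $N^{-s}$ to obtain $\|f_s-x^2\|_{L^\infty([0,1])}\le\tfrac{1}{4}N^{-2s}$.

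The key structural identity is that the refinement $f_{s-1}-f_s$ is piecewise linear, vanishes at every node of level $s-1$, and consists of $N$ symmetric bumps per interval of length $N^{-(s-1)}$ with common amplitude of order $N^{-2s}$. Consequently $f_{s-1}-f_s$ agrees, up to an explicit $\calO(1)$ constant $c_s$, with a scaled $s$-fold composition of $T_N$:
\begin{equation*}
f_L(x)=x-\sum_{s=1}^{L}\frac{c_s}{N^{2s}}\,T_N^{\circ s}(x),
\end{equation*}
obtained by telescoping. This representation is implemented by a depth-$L$ ReLU FNN whose $s$-th hidden layer applies $T_N$ to the running composition (using $N$ neurons), updates the accumulated sum $\sum c_t N^{-2t}T_N^{\circ t}$ in a single channel, and propagates the bare input $x$ through in another channel. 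A careful bookkeeping of positive and negative parts of the scalar channels brings the per-layer count up to exactly $3N$.

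Setting $\phi=f_L$ then yields $|\phi(x)-x^2|\le\tfrac{1}{4}N^{-2L}\le N^{-L}$ on $[0,1]$ with width $3N$ and depth $L$ as desired. The main obstacle will be the tight width bookkeeping: one must design the layer so that the $N$ neurons implementing $T_N$ at the next composition can coexist with the accumulator and the identity channel for $x$ without exceeding width $3N$, which requires reusing the same slots across distinct functional roles as the network progresses through its $L$ layers.
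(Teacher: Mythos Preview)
The central structural identity you rely on is incorrect for $N>2$. Take $f_s$ to be the piecewise linear interpolant of $x^2$ at the nodes $j/N^s$. On a single coarse interval $[a,a+h]$ with $h=N^{-(s-1)}$, a direct computation at the refined node $x=a+jh/N$ gives
\[
f_{s-1}(x)-f_s(x)=h^2\cdot\tfrac{j}{N}\Bigl(1-\tfrac{j}{N}\Bigr),\qquad j=0,1,\dots,N.
\]
So the correction on each coarse interval is a single discrete-parabola hump with $N$ linear pieces and a unique maximum near the midpoint, \emph{not} $N$ equal-amplitude bumps. It therefore cannot equal $c_s N^{-2s}\,T_N^{\circ s}(x)$ for any sawtooth $T_N$: restricted to a coarse interval, $T_N^{\circ s}$ is a copy of $T_N$, an oscillating sawtooth with constant peak heights. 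Your telescoping formula $f_L(x)=x-\sum_s c_s N^{-2s}T_N^{\circ s}(x)$ does not produce the interpolant $f_L$, and the error bound does not follow. The binary case is the unique one where this works, precisely because with a single interior refinement node the ``discrete parabola'' and ``single tent'' coincide.

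The paper avoids this by keeping the \emph{binary} tent $T_1$ throughout, for which the identity $f_{i-1}-f_i=2^{-2i}T_1^{\circ i}$ is valid, and spends the width budget on computing several binary iterates per effective layer: choose $k$ with $(k-1)2^{k-1}<N\le k2^k$, realize $T_1,\dots,T_k$ (applied to the current iterate) in parallel in one hidden layer of total width at most $k2^k+1\le 3N$, accumulate their weighted contributions, pass $T_1^{\circ k}$ forward, and repeat $L$ times to reach $f_{Lk}$ with error $2^{-2Lk}\le N^{-L}$ since $N\le 2^{2k}$. Your $N$-ary idea can be salvaged, but not with the formula you wrote: one must carry both an $N$-piece sawtooth $\sigma_N$ (to iterate) and the parabola profile $\rho$ interpolating $t(1-t)$ at $j/N$ (for the correction), using $f_{s-1}-f_s=N^{-2(s-1)}\,\rho\!\circ\sigma_N^{\circ(s-1)}$. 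That costs roughly $2N$ neurons per layer for the two piecewise-linear maps plus channels for $x$ and the running sum, after which the width-$3N$ bookkeeping becomes the genuinely delicate point.
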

\begin{proof}
	Define a set of ``sawtooth'' functions $T_i:[0,1]\to [0,1]$ by induction as follows. Set
	\begin{equation*}
		T_1(x)=\left\{\begin{array}{lc}
			2x, &\tn{if} \  x\in [0,\tfrac12],\\
			2(1-x), &\tn{if} \   x\in(\tfrac12,1],
		\end{array}\right.
	\end{equation*}
	and 
	\begin{equation*}
		T_i=T_{i-1}\circ T_1\quad \tn{for $i=2,3,\cdots$}.
	\end{equation*}
	It is easy to check that $T_i$ has $2^{i-1}$ ``sawteeth'' and
	\begin{equation*}
		T_{m+n}=T_m\circ T_n\quad \tn{for any $m,n\in\N^+$.}
	\end{equation*}
	See Figure~\ref{fig:toothFunctions} for illustrations of $T_i$ for $i=1,2,3,4$.
	\begin{figure}[!htp]
		\centering
		\begin{subfigure}[b]{0.24\textwidth}
			\centering
			\includegraphics[width=0.99\textwidth]{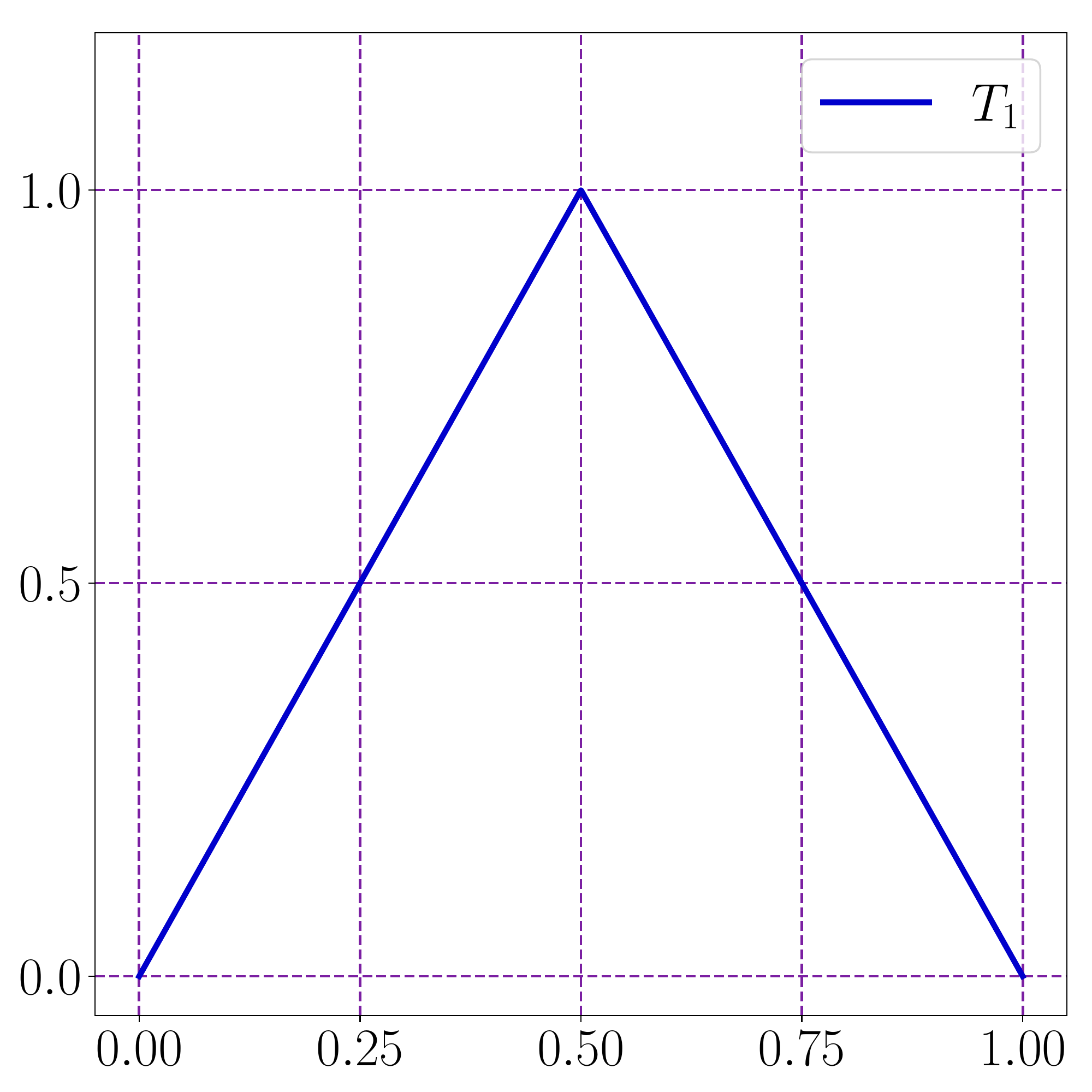}
		\end{subfigure}
		\begin{subfigure}[b]{0.24\textwidth}
			\centering            \includegraphics[width=0.99\textwidth]{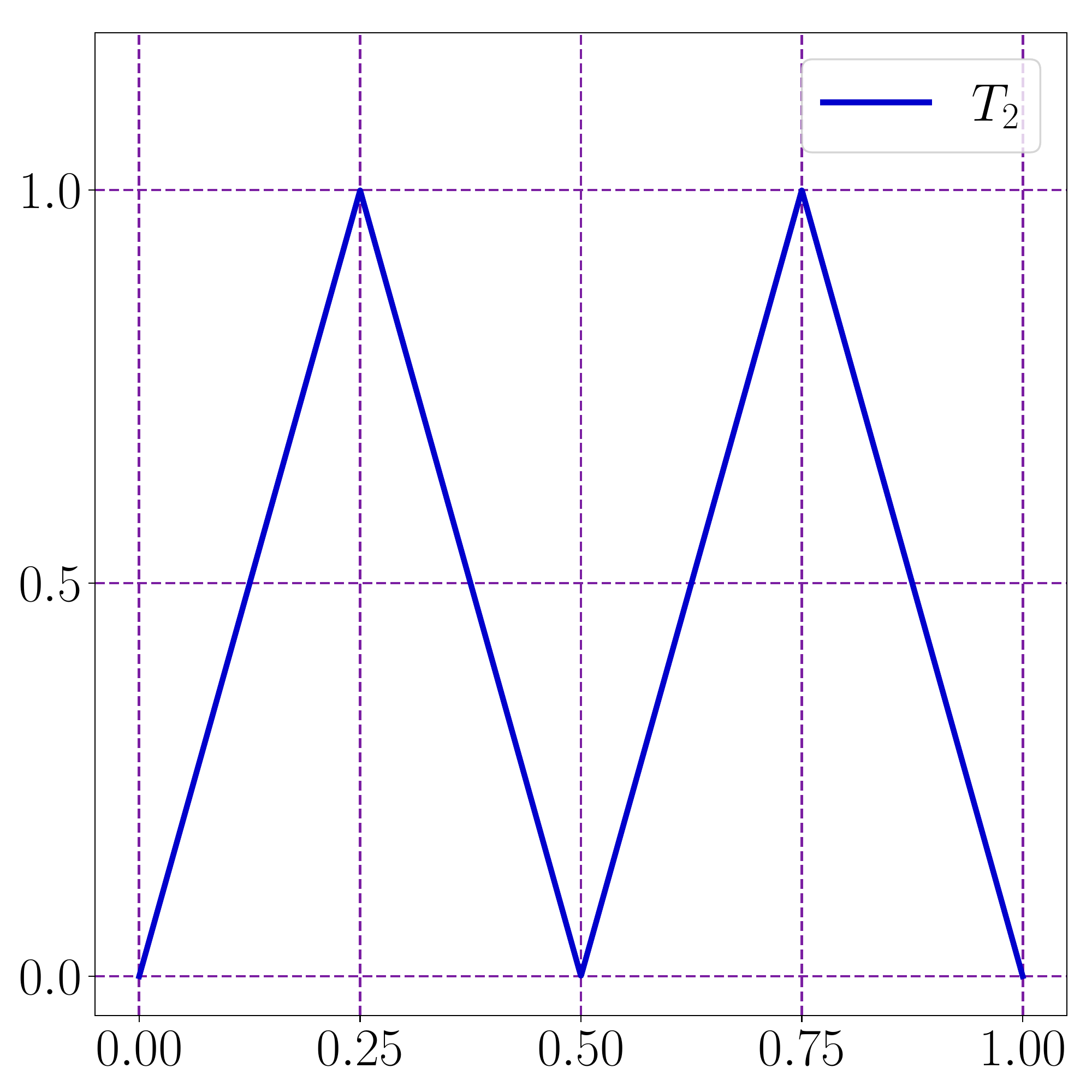}
		\end{subfigure}
		\begin{subfigure}[b]{0.24\textwidth}
			\centering           \includegraphics[width=0.99\textwidth]{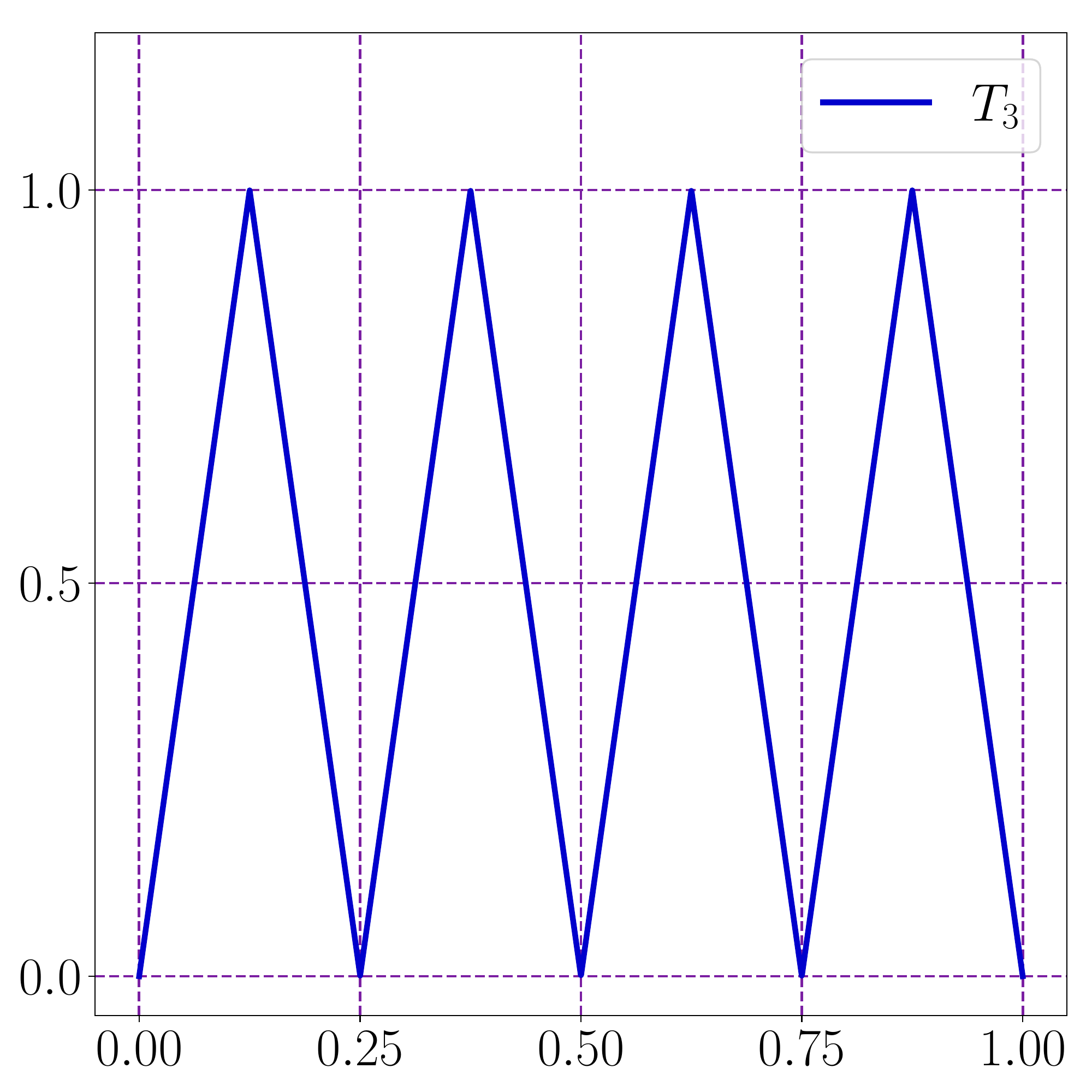}
		\end{subfigure}
		\begin{subfigure}[b]{0.24\textwidth}
			\centering            \includegraphics[width=0.99\textwidth]{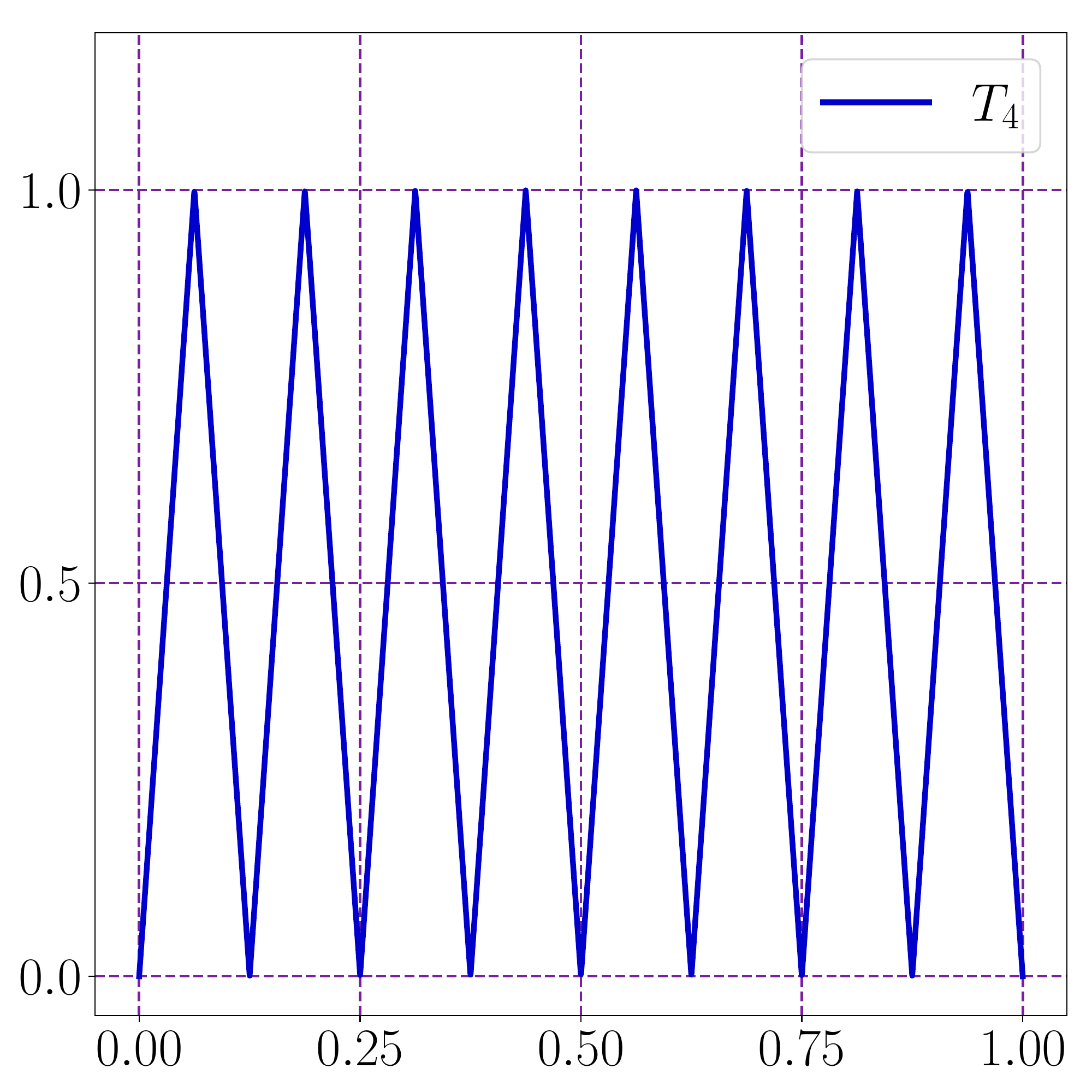}
		\end{subfigure}
		\caption{Examples of ``sawtooth'' functions $T_1,\ T_2,\ T_3$, and $T_4$.}
		\label{fig:toothFunctions}
	\end{figure}
	
	Define piecewise linear functions $f_s:[0,1]\to [0,1]$ for $s\in \N^+$ satisfying the following two requirements (see Figure~\ref{fig:fs} for several examples of $f_s$).
	\begin{itemize}
		\item $f_s(\tfrac{j}{2^s})=\big(\tfrac{j}{2^s}\big)^2$ for $j=0,1,2,\cdots,2^s$.
		\item $f_s(x)$ is linear between any two adjacent points of $\{\tfrac{j}{2^s}:j=0,1,2,\cdots,2^s\}$.
	\end{itemize}
	\begin{figure}[!htp]
		\centering
		\begin{subfigure}[b]{0.24\textwidth}
			\centering
			\includegraphics[width=0.99\textwidth]{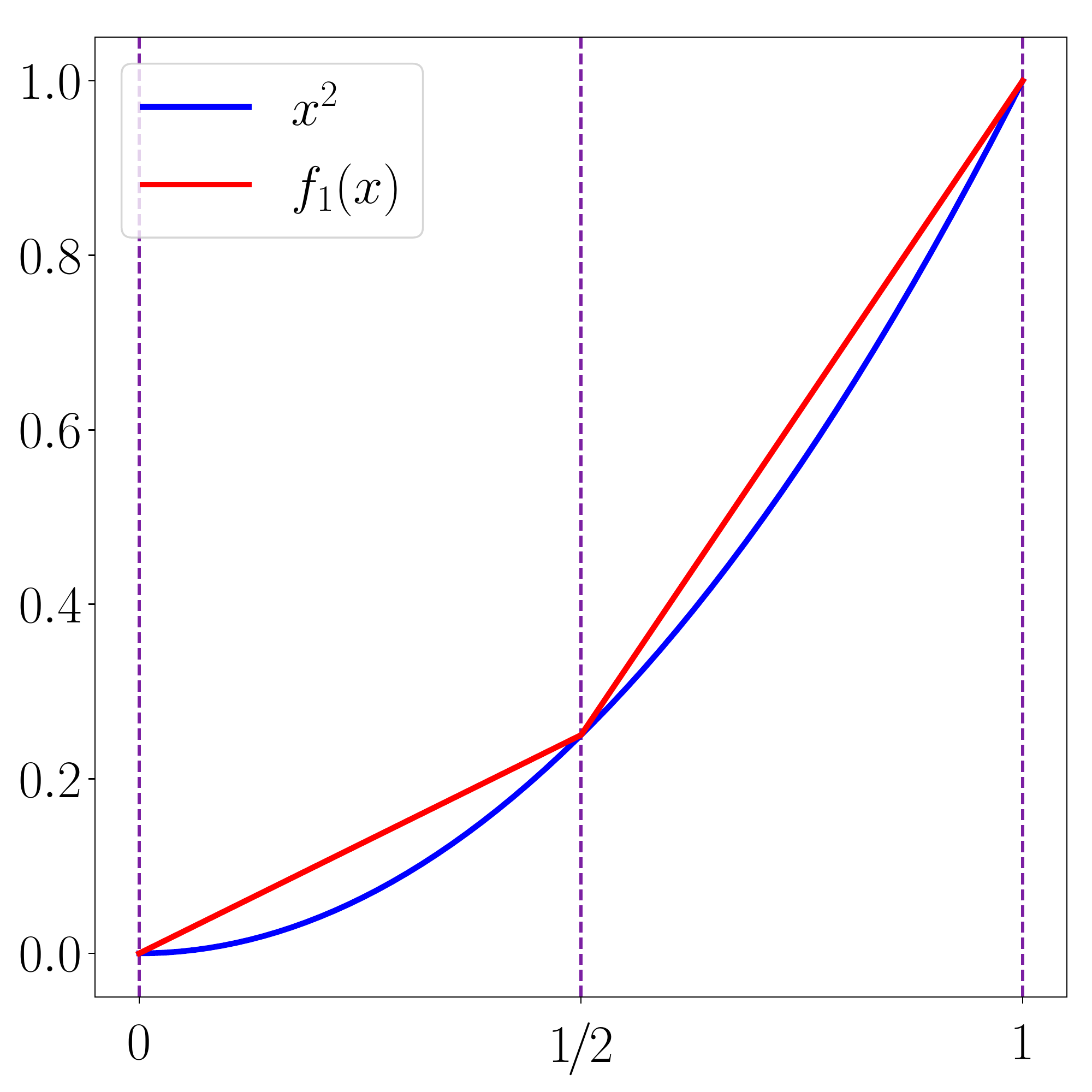}
		\end{subfigure}
		\begin{subfigure}[b]{0.24\textwidth}
			\centering
			\includegraphics[width=0.99\textwidth]{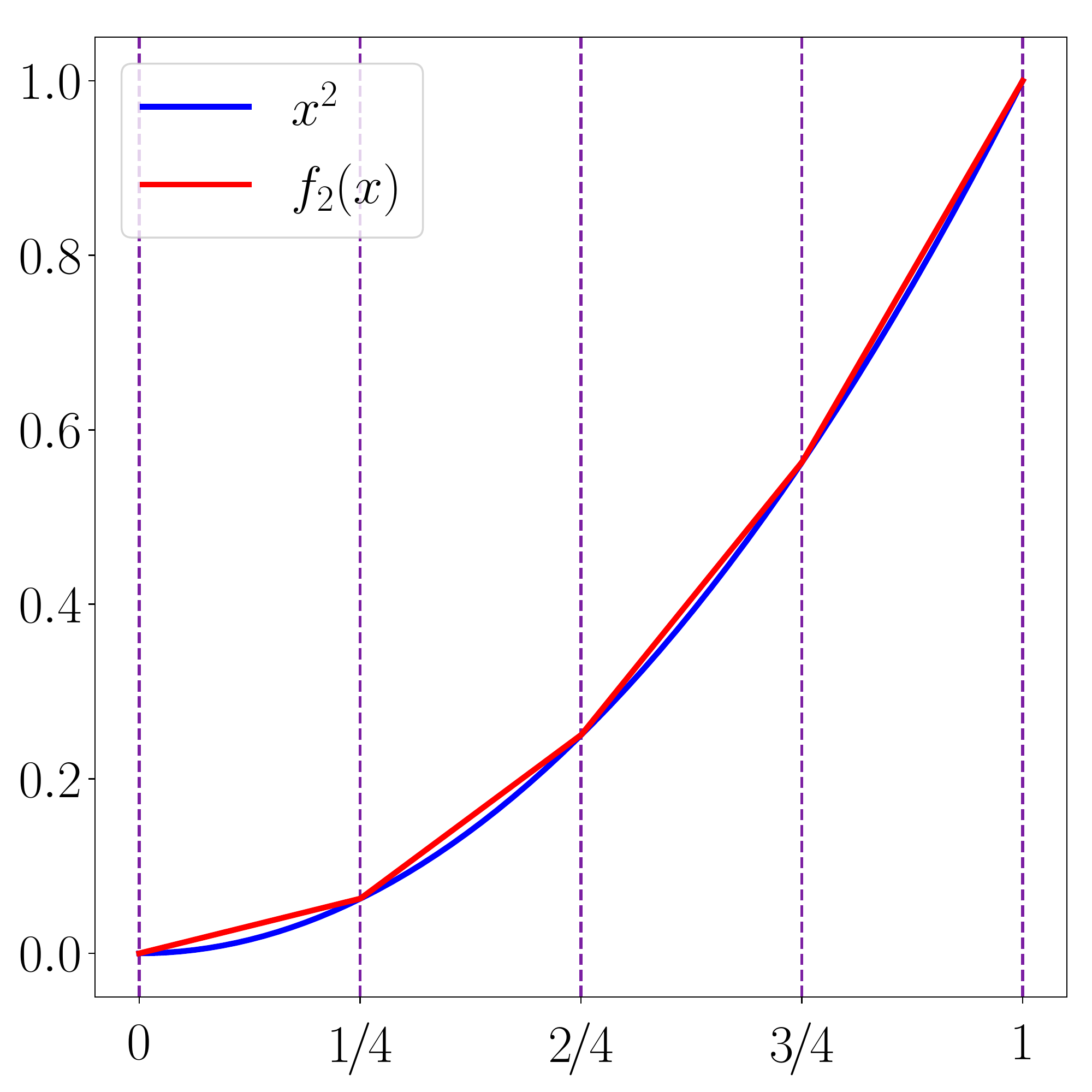}
		\end{subfigure}
		\begin{subfigure}[b]{0.24\textwidth}
			\centering
			\includegraphics[width=0.99\textwidth]{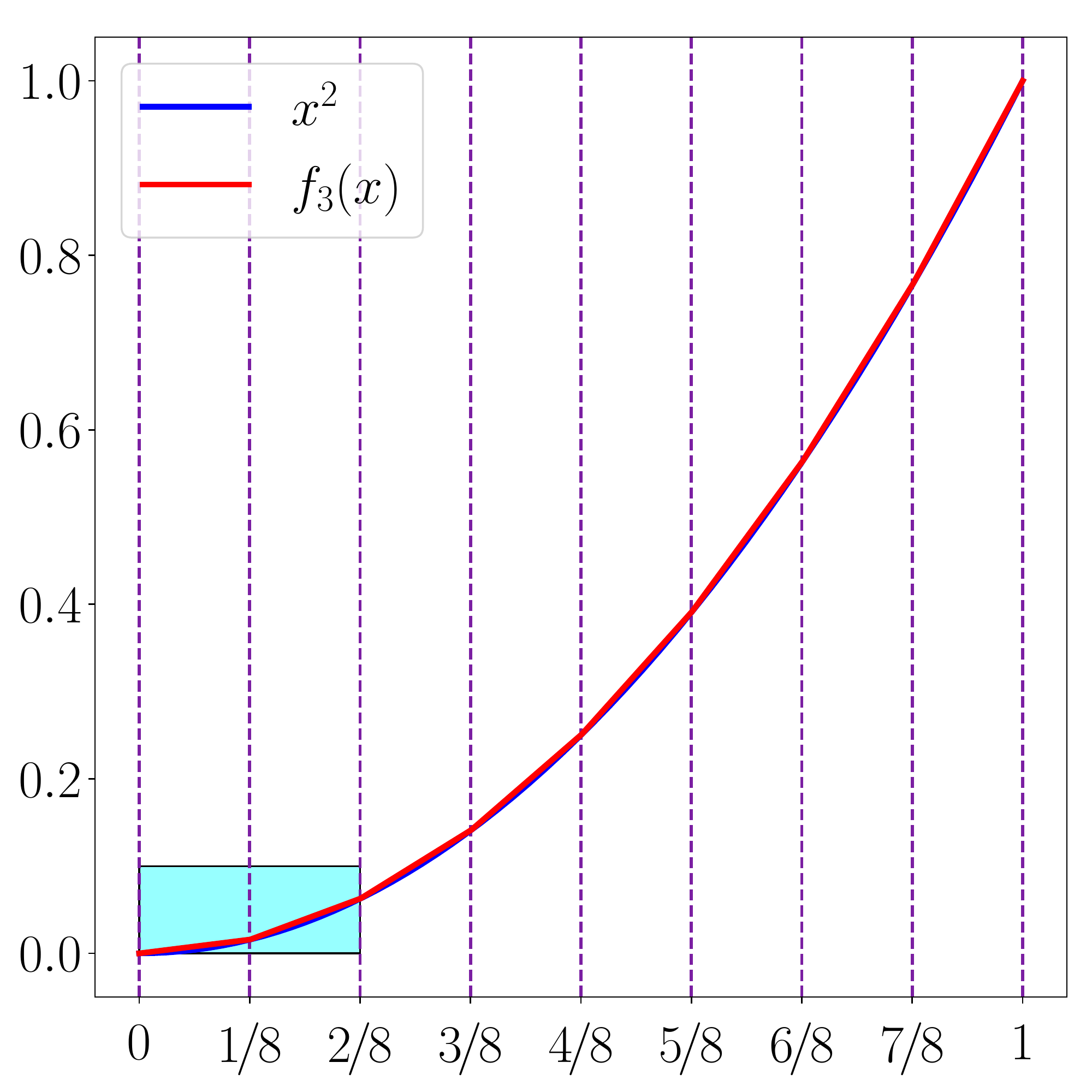}
		\end{subfigure}
		\begin{subfigure}[b]{0.24\textwidth}
			\centering
			\includegraphics[width=0.99\textwidth]{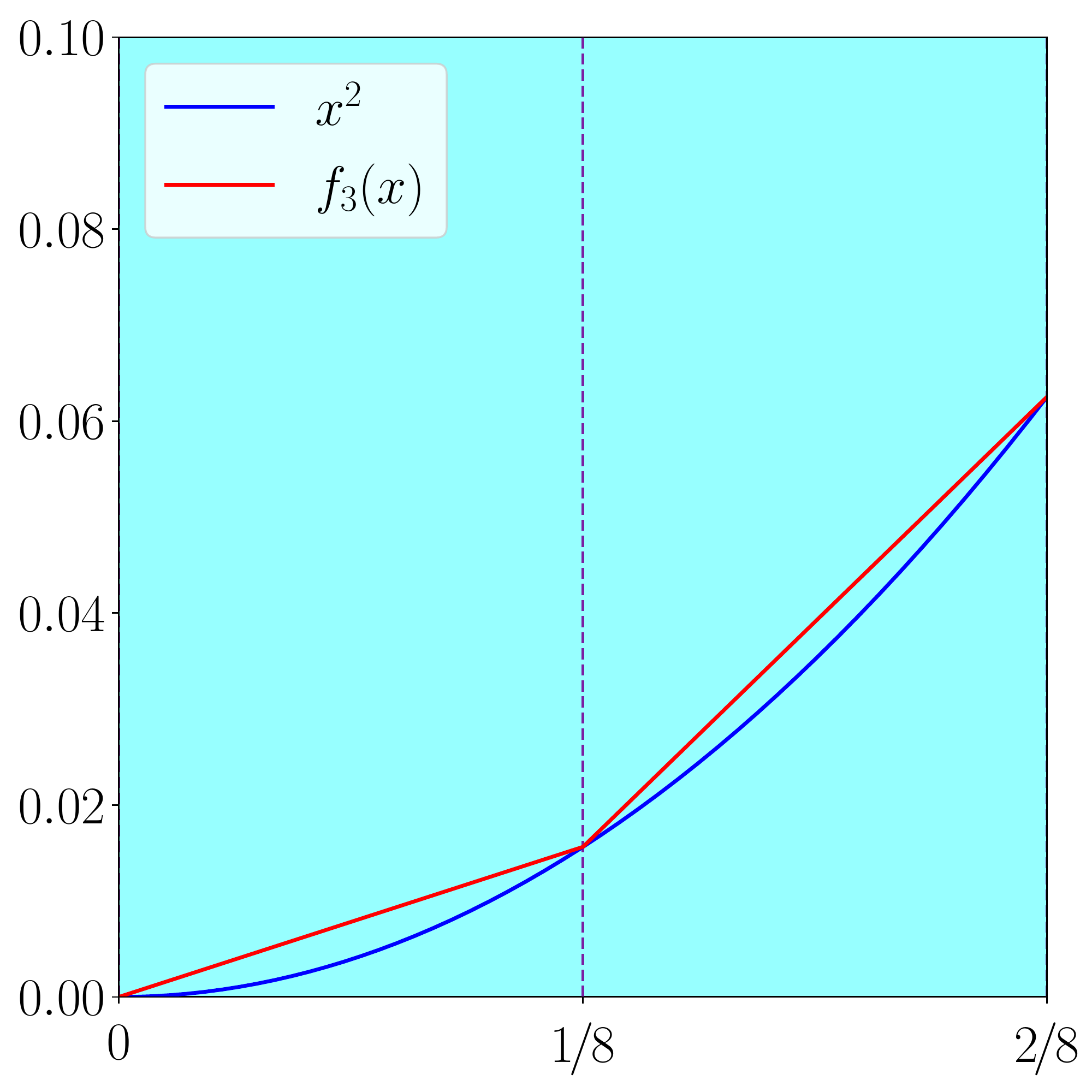}
		\end{subfigure}
		\caption{Illustrations of $f_1,\ f_2$, and $f_3$ for approximating $x^2$. }
		\label{fig:fs}
	\end{figure}
	Recall the fact 
	\begin{equation*}
		0\le tx_1^2+(1-t)x_2^2 - \Big(tx_1+(1-t)x_2\Big)^2\le \frac{(x_2-x_1)^2}{4}\quad \tn{for any $t,x_1,x_2\in [0,1]$.}
	\end{equation*}
		Thus, we have
	\begin{equation}
	0\le f_s(x)-x^2\le \frac{(2^{-s})^2}{4}= 2^{-2(s+1)}\quad \tn{for any $x\in [0,1]$ and $s\in \N^+$.}
	\label{eq:xsquareandfs}
	\end{equation}
	Note that $f_{i-1}(x)=f_i(x)=x^2$ for $x\in \{\tfrac{j}{2^{i-1}}:j=0,1,2,\cdots,2^{i-1}\}$ and the graph of $f_{i-1}-f_i$ is a symmetric ``sawtooth'' between any two adjacent points of $\{\tfrac{j}{2^{i-1}}:j=0,1,2,\cdots,2^{i-1}\}$. 
	It is easy to verify that
	\begin{equation*}
		f_{i-1}(x)-f_i(x)=\tfrac{T_i(x)}{2^{2i}}\quad \tn{for any $x\in [0,1]$ and $i=2,3,\cdots$}.
	\end{equation*}
	Therefore, for any $x\in [0,1]$ and $s\in \N^+$, we have
	\begin{equation*}
		f_s(x)=f_1(x)+\sum_{i=2}^s(f_i-f_{i-1})=x-(x-f_1(x))-\sum_{i=2}^{s}\tfrac{T_i(x)}{2^{2i}}=x-\sum_{i=1}^{s}\tfrac{T_i(x)}{2^{2i}}.
	\end{equation*}

	Given $N\in \N^+$, there exists a unique $k\in \N^+$ such that $ (k-1)2^{k-1}+1\le N\le k2^k$. For this $k$, using $s=Lk$, we can construct a ReLU FNN as shown in Figure~\ref{fig:architectureApproxSquare} to implement a function $\phi=f_{Lk}$  approximating $x^2$ well. Note that $T_i$ can be implemented by a one-hidden-layer ReLU FNN with width $2^i$. Hence, the network in Figure~\ref{fig:architectureApproxSquare} has width $k2^k+1\le 3N$\footnote{This inequality is clear for $k=1,2,3,4$. In the case $k\ge 5$, we have $k2^k+1\le \tfrac{k2^k+1}{N}N\le \tfrac{(k+1)2^k}{(k-1)2^{k-1}}N\le 2\tfrac{k+1}{k-1}N\le 3N$. } and depth $2L$. 
	
	\begin{figure}[!htp]
		\centering
		\includegraphics[width=0.985\textwidth]{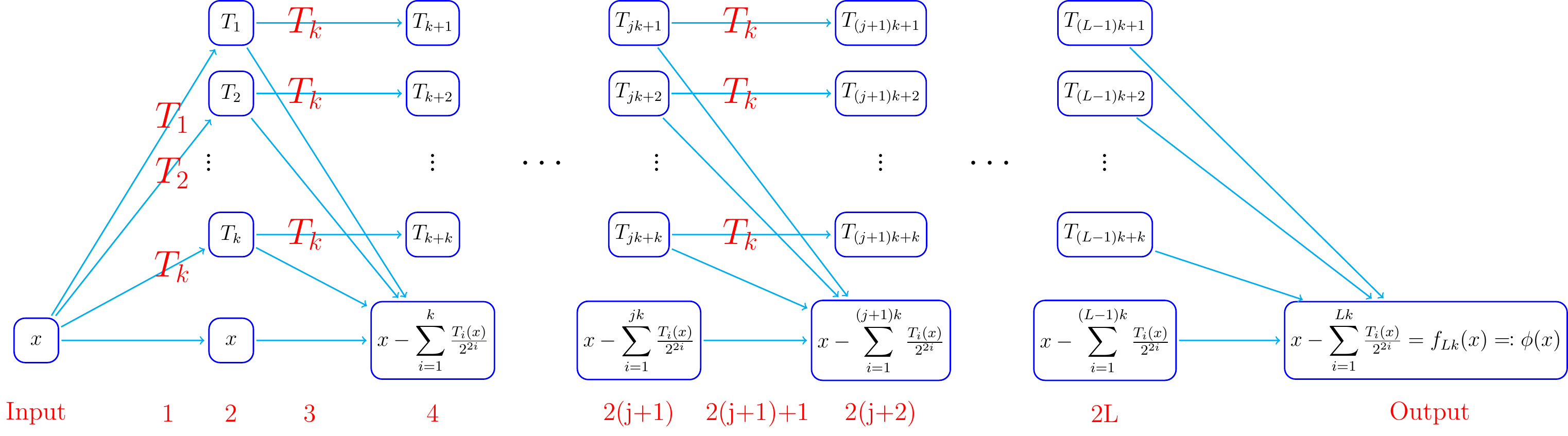}
		\caption{An illustration of the target  network architecture for approximating $x^2$ on $[0,1]$. 
			$T_i$ can be implemented by a one-hidden-layer ReLU FNN with width $2^i$ for $i=1,2,\cdots,K$. The red numbers below the architecture indicate the order of hidden layers.
		}
		\label{fig:architectureApproxSquare}
	\end{figure}

	As shown in Figure~\ref{fig:architectureApproxSquare},  the $(2\ell)$-th hidden layer of the network has the identify function as activation functions for $\ell=1,2,\cdots,L$. Thus, the network in Figure~\ref{fig:architectureApproxSquare} can be interpreted as a ReLU FNN with width $3N$ and depth $L$.
	In fact, if all activation functions in a certain hidden layer are identity maps, the depth can be reduced by one via combining two adjacent linear transforms into one. For example, suppose $\bm{W}_1\in \R^{N_1\times N_2}$, $\bm{W}_2\in \R^{N_2\times N_3}$, and $\varrho$ is an identity map that can be applied to vectors or matrices elementwisely; then $\bm{W}_1\varrho(\bm{W}_2\bmx)=\bmW_3\bmx$ for any $\bmx\in\R^{N_3}$, where $\bmW_3=\bmW_1\cdot\bmW_2\in \R^{N_1\times N_3}$.

	It remains to estimate the approximation error  of $\phi(x)\approx x^2$. By Equation~\eqref{eq:xsquareandfs}, for any $x\in [0,1]$, we have
	\begin{equation*}
		|\phi(x)-x^2|= |f_{Lk}(x)-x^2|\le 2^{-2(Lk+1)}\le 2^{-2Lk}\le N^{-L},
	\end{equation*}
	where the last inequality comes from $N\le k2^k\le 2^{2k}$. So we finish the proof.    
\end{proof}

We have constructed a ReLU FNN to approximate $f(x)=x^2$. By the fact that $xy=2\big((\tfrac{x+y}{2})^2-(\tfrac{x}{2})^2-(\tfrac{y}{2})^2\big)$, it is easy to construct a new ReLU FNN to approximate $f(x,y)=xy$ as follows.

\begin{lemma}
	\label{lem:xyApprox}
	For any $N,L\in \N^+$, there exists a function $\phi$ implemented by a ReLU FNN with width $9N$ and depth $L$ such that
	\begin{equation*}
		|\phi(x,y)-xy|\le 6N^{-L}\quad \tn{for any $x,y\in [0,1]$.}
	\end{equation*}
\end{lemma}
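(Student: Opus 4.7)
The plan is to reduce $xy$ to the approximation of single-variable squares via the polarization-type identity
\[
xy \;=\; 2\left(\left(\tfrac{x+y}{2}\right)^2 - \left(\tfrac{x}{2}\right)^2 - \left(\tfrac{y}{2}\right)^2\right),
\]
which is already highlighted in the text just before the lemma. Since $x,y\in[0,1]$, each of the three arguments $\tfrac{x+y}{2}$, $\tfrac{x}{2}$, $\tfrac{y}{2}$ lies in $[0,1]$, so Lemma~\ref{lem:approxSquare} applies to each of them.

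First I would invoke Lemma~\ref{lem:approxSquare} three times to obtain three functions $\psi_1,\psi_2,\psi_3$, each implemented by a ReLU FNN of width $3N$ and depth $L$, satisfying $|\psi_i(t)-t^2|\le N^{-L}$ for all $t\in[0,1]$. I would then stack these three sub-networks in parallel into a single ReLU FNN: the input layer receives $(x,y)$, and the initial affine map produces the three scalars $\tfrac{x+y}{2},\tfrac{x}{2},\tfrac{y}{2}$ that are fed into the corresponding sub-networks. Running them side by side yields a ReLU FNN of width $3\cdot 3N = 9N$ and depth $L$, whose output layer takes the affine combination
\[
\phi(x,y)\;\coloneqq\;2\,\psi_1\!\left(\tfrac{x+y}{2}\right) - 2\,\psi_2\!\left(\tfrac{x}{2}\right) - 2\,\psi_3\!\left(\tfrac{y}{2}\right).
\]
The final affine combination is absorbed into the weight matrix and bias of the last layer, so no extra depth is consumed.

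The error estimate is then an immediate application of the triangle inequality: using the identity above,
\[
|\phi(x,y)-xy| \;\le\; 2\sum_{i=1}^{3}\left|\psi_i(\cdot)-(\cdot)^2\right| \;\le\; 2\cdot 3\cdot N^{-L} \;=\; 6N^{-L}.
\]
There is really no substantive obstacle here; the main care needed is bookkeeping: verifying that the three parallel sub-networks can indeed be combined without any extra hidden layers (which follows because each sub-network starts with an affine transformation of the input and ends with an affine transformation to the output, so both the ``pre-processing'' maps $(x,y)\mapsto\tfrac{x+y}{2},\tfrac{x}{2},\tfrac{y}{2}$ and the final linear combination are folded into existing affine layers), and confirming that each input to $\psi_i$ stays in $[0,1]$ so that Lemma~\ref{lem:approxSquare} is applicable. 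Once these bookkeeping points are checked, Lemma~\ref{lem:xyApprox} follows.
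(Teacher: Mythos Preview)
Your proposal is correct and essentially identical to the paper's proof: the paper also invokes Lemma~\ref{lem:approxSquare} (using one function $\psi$ three times rather than three separate $\psi_i$, a cosmetic difference), defines $\phi(x,y)=2\big(\psi(\tfrac{x+y}{2})-\psi(\tfrac{x}{2})-\psi(\tfrac{y}{2})\big)$, and obtains the same $6N^{-L}$ bound via the triangle inequality. The paper phrases the width/depth bookkeeping slightly differently---it first counts width $9N$ and depth $L+2$ and then observes that two hidden layers carry identity activations and can be absorbed into adjacent affine maps---which is exactly your ``folding'' argument stated in reverse.
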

\begin{proof}    
	By Lemma~\ref{lem:approxSquare}, there exists a function $\psi$ implemented by a ReLU FNN with width $3N$ and depth $L$ such that
	\begin{equation*}
		|x^2-\psi(x)|\le N^{-L}\quad \tn{for any $x\in [0,1]$.}
	\end{equation*}    
	Inspired by the fact
	\begin{equation*}
		xy=2\big((\tfrac{x+y}{2})^2-(\tfrac{x}{2})^2-(\tfrac{y}{2})^2\big)\quad \tn{for any $x,y\in \R$,}
		\label{eq:xyEquality}
	\end{equation*}
	we construct the desired function $\phi$ as 
	\begin{equation}\label{eq:x2:xy}
		\phi(x,y)\coloneqq 2\big(\psi(\tfrac{x+y}{2})-\psi(\tfrac{x}{2})-\psi(\tfrac{y}{2})\big)\quad \tn{for any $x,y\in \R$.}
	\end{equation} 
	Then $\phi$ can be implemented by the network architecture in Figure~\ref{fig:xy=phi}. 	 
	\begin{figure}[H]
		\centering
		\includegraphics[width=0.85\textwidth]{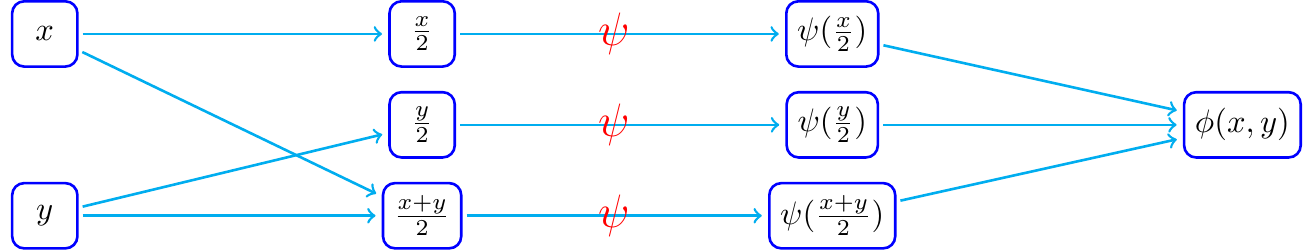}
		\caption{An illustration of the network architecture implementing $\phi$ for approximating $xy$ on $[0,1]^2$.}
		\label{fig:xy=phi}
	\end{figure} 
	It follows from $\psi\in\NNF(\NNwidth\le 3N\NNspace\NNdepth\le L)$ that the network in Figure~\ref{fig:xy=phi} is with width $9N$ and depth $L+2$. Similar to the discussion in the proof of Lemma~\ref{lem:approxSquare}, the network in Figure~\ref{fig:xy=phi} can be interpreted as a ReLU FNN with width $9N$ and depth $L$, since two of the hidden layers have the identify function as their activation functions.
	Moreover, for any $x,y\in [0,1]$,
	\begin{equation*}
		\begin{split}
			|xy-\phi(x,y)|&= \left|2\big((\tfrac{x+y}{2})^2-(\tfrac{x}{2})^2-(\tfrac{y}{2})^2\big)-2\big(\psi(\tfrac{x+y}{2})-\psi(\tfrac{x}{2})-\psi(\tfrac{y}{2})\big)\right|\\
			&\le 2\left|(\tfrac{x+y}{2})^2-\psi(\tfrac{x+y}{2})\right|+2\left|(\tfrac{x}{2})^2-\psi(\tfrac{x}{2})\right|+2\left|(\tfrac{y}{2})^2-\psi(\tfrac{y}{2})\right|
			\le 6N^{-L}.
		\end{split}
	\end{equation*}
	Therefore, we have finished the proof.
\end{proof}

Now let us prove Lemma~\ref{lem:xyApproxAB}, which shows how to construct a ReLU FNN to approximate $f(x,y)=xy$ on $[a,b]^2$ with arbitrary $a<b$, i.e., a rescaled version of Lemma~\ref{lem:xyApprox}.

\begin{proof}[Proof of Lemma~\ref{lem:xyApproxAB}]
	By Lemma~\ref{lem:xyApprox}, there exists a function $\psi$ implemented by a ReLU FNN with width $9N$ and depth $L$ such that
	\begin{equation*}
		|\psi(\tildex,\tildey)-\tildex\tildey|\le 6N^{-L}\quad \tn{for any $\tildex,\tildey\in [0,1]$.}
	\end{equation*}
	By setting $\tildex=\tfrac{x-a}{b-a}$ and $\tildey=\tfrac{y-a}{b-a}$ for any $x,y\in [a,b]$, we have $\tildex,\tildey\in[0,1]$, implying
	\begin{equation*}
		\big|\psi(\tfrac{x-a}{b-a},\tfrac{y-a}{b-a})-\tfrac{x-a}{b-a}\tfrac{y-a}{b-a}\big|\le 6N^{-L}\quad \tn{for any $x,y\in [a,b]$.}
	\end{equation*}
	It follows that, \tn{for any $x,y\in [a,b]$,}
	\begin{equation*}
		\big|(b-a)^2\psi(\tfrac{x-a}{b-a},\tfrac{y-a}{b-a})+a(x+y)-a^2-xy\big|\le 6(b-a)^2N^{-L}.
	\end{equation*}
	Define, for any $x,y\in \R$,
	\begin{equation*}
		\phi(x,y)\coloneqq (b-a)^2\psi(\tfrac{x-a}{b-a},\tfrac{y-a}{b-a})+a\cdot\sigma(x+y+2|a|)-a^2-2a|a|.
	\end{equation*}
	Then $\phi$ can be implemented by the network architecture in Figure~\ref{fig:xy=phi=ab}. 
	\begin{figure}[H]
		\centering
		\includegraphics[width=0.85\textwidth]{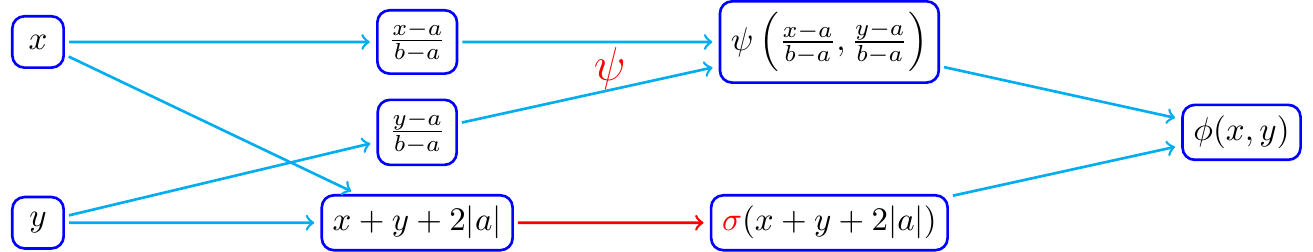}
		\caption{An illustration of the network architecture implementing $\phi$ for approximating $xy$ on $[a,b]^2$. Two of the hidden layers have the identify function as their activation functions, since the red ``\textcolor{red}{$\sigma$}'' comes from the red arrow ``\textcolor{red}{$\longrightarrow$}'', where the red arrow ``\textcolor{red}{$\longrightarrow$}'' is a ReLU FNN with width $1$ and depth $L$. }
		\label{fig:xy=phi=ab}
	\end{figure} 
	It follows from $\psi\in \NNF(\NNwidth\le 9N\NNspace\NNdepth \le L)$ that the network in Figure~\ref{fig:xy=phi=ab} is  with width $9N+1$ and depth $L+2$. 
	Similar to the discussion in the proof of Lemma~\ref{lem:approxSquare}, the network in Figure~\ref{fig:xy=phi=ab} can be interpreted as a ReLU FNN with width $9N+1$ and depth $L$, since two of the hidden layers have the identify function as their activation functions.
	
	Note that $x+y+2|a|\ge 0$ for any $x,y\in [a,b]$, implying
	\begin{equation*}
		\phi(x,y)= (b-a)^2\psi(\tfrac{x-a}{b-a},\tfrac{y-a}{b-a})+a(x+y)-a^2\quad \tn{for any $x,y\in [a,b]$.}
	\end{equation*}
	Hence, 
	\begin{equation*}
		\big|\phi(x,y)-xy\big|\le 6(b-a)^2N^{-L}\quad \tn{for any $x,y\in [a,b]$.}
	\end{equation*}
	So we finish the proof. 
\end{proof}

The next lemma shows how to construct a ReLU FNN to approximate a multivariate  function $f(x_1,x_2,\cdots,x_k)=x_1x_2\cdots x_k$ on $[0,1]^k$.

\begin{lemma}
	\label{lem:xiMultiply}
	For any $N,L,k\in \N^+$ with $k\ge 2$, there exists a function $\phi$ implemented by a ReLU FNN with width $9(N+1)+k-1$ and depth $7kL(k-1)$ such that
	\begin{equation*}
		|\phi(\bmx)-x_1x_2\cdots x_k|\le 9(k-1)(N+1)^{-7kL}\quad \tn{for any $\bmx=[x_1,x_2,\cdots,x_k]^T\in [0,1]^k$.}
	\end{equation*}
\end{lemma}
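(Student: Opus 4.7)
The plan is to construct $\phi$ as a cascade of $k-1$ two-input multiplications, each realized via Lemma~\ref{lem:xyApproxAB}. Setting $y_0\coloneqq x_1$ and recursively defining approximations $y_i\approx y_{i-1}x_{i+1}$ for $i=1,\dots,k-1$, the terminal output $y_{k-1}$ will approximate $x_1x_2\cdots x_k$. For each stage I would apply Lemma~\ref{lem:xyApproxAB} with its parameter $N$ replaced by $N+1$ and its depth parameter set to $7kL$, on an interval $[a_i,b_i]$ that slightly enlarges $[0,1]$ so as to contain the current range of $y_{i-1}$; this produces a multiplier $\psi_i$ of width $9(N+1)+1$ and depth $7kL$ satisfying $|\psi_i(u,v)-uv|\le 6(b_i-a_i)^2(N+1)^{-7kL}$ for $u,v\in[a_i,b_i]$. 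Meanwhile, the unused inputs $x_{i+2},\dots,x_k\in[0,1]$ are propagated forward as identity channels, which is legitimate because $\sigma$ preserves nonnegative scalars.

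Counting resources, stage $i$ occupies at most $9(N+1)+1$ neurons per layer for $\psi_i$ together with $k-i-1$ carry channels, so the width of the composite network is $\max_i\bigl(9(N+1)+k-i\bigr)=9(N+1)+k-1$ (attained at $i=1$), and the total depth is $(k-1)\cdot 7kL=7kL(k-1)$, matching the statement.

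For the error estimate, set $\eta\coloneqq(N+1)^{-7kL}$ and $e_i\coloneqq|y_i-x_1\cdots x_{i+1}|$. Stage $1$ uses $[a_1,b_1]=[0,1]$, giving $e_1\le 6\eta$. For $i\ge 2$ I would take $[a_i,b_i]=[-e_{i-1},1+e_{i-1}]$, which contains both $y_{i-1}$ and $x_{i+1}$, and use the triangle inequality
\[
e_i\;\le\;|\psi_i(y_{i-1},x_{i+1})-y_{i-1}x_{i+1}|+x_{i+1}|y_{i-1}-x_1\cdots x_i|\;\le\;6(1+2e_{i-1})^2\eta+e_{i-1}.
\]
A straightforward inner induction on $i$ then yields $e_i\le 9i\eta$ for all $1\le i\le k-1$: the inductive step reduces to verifying $6(1+18(i-1)\eta)^2\le 9$, i.e.\ $18(i-1)\eta\le\sqrt{3/2}-1$, which follows from $\eta\le 2^{-7k}$ and $i-1\le k-2$ via the numerical estimate $18(k-2)\cdot 2^{-7k}<0.22$ for every $k\ge 2$. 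Taking $i=k-1$ gives the desired bound $|\phi(\bmx)-x_1\cdots x_k|=e_{k-1}\le 9(k-1)(N+1)^{-7kL}$.

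The main technical point to watch is the interplay between the enlargement of $[a_i,b_i]$ and the amplification factor $6(b_i-a_i)^2$ inherited from Lemma~\ref{lem:xyApproxAB}. Because that bound scales quadratically in $b_i-a_i$, even a modestly generous widening of $[0,1]$ would inflate the per-stage error past the target constant $9$; the saving grace is that $\eta$ is super-exponentially small in $k$, so $1+2e_{i-1}$ stays microscopically close to $1$ throughout the cascade and the linear-in-$i$ growth $9i\eta$ closes the induction cleanly. A genuine induction on $k$ with the lemma's own statement as the inductive hypothesis appears not to work because the target depth $7kL(k-1)$ does not split additively in a way that accommodates both the $k$-term product and an extra multiplication by $x_{k+1}$; this is why I favor the direct cascade construction above.
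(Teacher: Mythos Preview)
Your proposal is correct and follows essentially the same approach as the paper: build $\phi$ as a cascade of $k-1$ binary multipliers obtained from Lemma~\ref{lem:xyApproxAB} (with $N\mapsto N+1$, depth $7kL$), carry the unused nonnegative inputs through via $\sigma$, and show the error accumulates to $9i(N+1)^{-7kL}$ at stage $i$. The only cosmetic difference is that the paper fixes a single interval $[-0.1,1.1]$ for all stages and checks once that $9i(N+1)^{-7kL}\le 9k\,2^{-7k}\le 0.1$ keeps every intermediate value inside it, whereas you let the interval $[a_i,b_i]=[-9(i-1)\eta,\,1+9(i-1)\eta]$ vary with $i$; both yield the same width, depth, and error constants.
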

\begin{proof}
	By Lemma~\ref{lem:xyApproxAB}, there exists a function $\phi_1$ implemented by a ReLU FNN with width $9(N+1)+1$ and depth $7kL$ such that
	\begin{equation}
		|\phi_1(x,y)-xy|\le 6(1.2)^2(N+1)^{-7kL}\le 9(N+1)^{-7kL}\quad \tn{for any $x,y\in [-0.1,1.1]$.}
		\label{eq:xyMultiply}
	\end{equation}
	Next, we construct a sequence of functions $\phi_i:[0,1]^{i+1}\to [0,1]$ for $i\in\{1,2,\cdots,k-1\}$ by induction such that
	\begin{enumerate}[(i)]
		\item $\phi_i$ can be implemented by a ReLU FNN with width $9(N+1)+i$ and depth $7kLi$ for each $i\in \{1,2,\cdots,k-1\}$.
		\item For any $i\in \{1,2,\cdots,k-1\}$ and $x_1,x_2,\cdots,x_{i+1}\in [0,1]$, it holds that
		\begin{equation}
			|\phi_{i}(x_1,\cdots,x_{i+1})-x_1x_2\cdots x_{i+1}|\le 9i(N+1)^{-7kL}.
			\label{eq:xiMultiply}
		\end{equation}
	\end{enumerate}    
	
	First, let us consider the case $i=1$, it is obvious that the two required conditions are true: 1) $9(N+1)+i=9(N+1)+1$ and $7kLi=7kL$ if $i=1$; 2) Equation~\eqref{eq:xyMultiply} implies Equation~\eqref{eq:xiMultiply} for $i=1$.
	
	Now assume $\phi_i$ has been defined; we then define
	\begin{equation*}
		\phi_{i+1}(x_1,\cdots,x_{i+2})\coloneqq \phi_1\big(\phi_i(x_1,\cdots,x_{i+1}),\sigma(x_{i+2})\big)\quad \tn{for any $x_1,\cdots,x_{i+2}\in \R$.}
	\end{equation*}
	Note that $\phi_{i}\in \NNF(\NNwidth\le 9(N+1)+i\NNspace\NNdepth\le 7kLi)$ and $\phi_{1}\in \NNF(\NNwidth\le 9(N+1)+1\NNspace\NNdepth\le 7kL)$. Then $\phi_{i+1}$ can be implemented via a ReLU FNN with width \[\max\{9(N+1)+i+1,9(N+1)+1\}=9(N+1)+(i+1)\] and depth $7kLi+7kL=7kL(i+1)$. 
	
	By the hypothesis of induction, we have 
	\begin{equation}
		\label{eq:hypoInduction}
		|\phi_{i}(x_1,\cdots,x_{i+1})-x_1x_2\cdots x_{i+1}|\le 9i(N+1)^{-7kL}.
	\end{equation}
	Recall the fact that $9i(N+1)^{-7kL}\le 9k2^{-7k}\le 9k\tfrac{2^{-7}}{k}\le0.1$ for any $N,L,k\in\N^+$ and $i\in\{1,2,\cdots,k-1\}$. It follows that 
	\begin{equation*}
		\phi_{i}(x_1,\cdots,x_{i+1})\in [-0.1,1.1]\quad \tn{for any $x_1,\cdots,x_{i+1}\in [0,1]$.}
	\end{equation*}
	
	Therefore, by Equations~\eqref{eq:xyMultiply}  and \eqref{eq:hypoInduction}, we have
	\begin{equation*}
		\begin{split}
			&\ \ \,  {|\phi_{i+1}(x_1,\cdots,x_{i+2})-x_1x_2\cdots x_{i+2}|}
			\\
			&{=\big|\phi_1\big(\phi_i(x_1,\cdots,x_{i+1}),\sigma(x_{i+2})\big)-x_1x_2\cdots x_{i+2}\big|}\\
			&{\le\big|\phi_1\big(\phi_i(x_1,\cdots,x_{i+1}),x_{i+2}\big)-\phi_i(x_1,\cdots,x_{i+1}) x_{i+2}\big|
				+\big|\phi_i(x_1,\cdots,x_{i+1}) x_{i+2}-x_1x_2\cdots x_{i+2}\big|}\\
			& {\le 9(N+1)^{-7kL}+9i(N+1)^{-7kL}=9(i+1)(N+1)^{-7kL},}
		\end{split}
	\end{equation*}    
	for any $x_1,x_2,\cdots,x_{i+2}\in [0,1]$, which means we finish the process of induction.
	
	Now let $\phi\coloneqq \phi_{k-1}$, by the principle of induction, we have 
	\begin{equation*}
		|\phi(x_1,\cdots,x_{k})-x_1x_2\cdots x_{k}|\le 9(k-1)(N+1)^{-7kL}\quad \tn{for any $x_1,\cdots,x_k\in [0,1]$.}
	\end{equation*}
	So $\phi$ is the desired function implemented by a ReLU FNN with width $9(N+1)+k-1$ and depth $7kL(k-1)$, which means we finish the proof.
\end{proof}

With Lemma~\ref{lem:xiMultiply} in hand, we are ready to prove Proposition~\ref{prop:approxPolynomial} for approximating general multivariate  polynomials by ReLU FNNs.

\begin{proof}[Proof of Proposition~\ref{prop:approxPolynomial}]
	The case $k=1$ is trivial, so we assume $k\ge 2$ below. Set $\tildek=\|\bmalpha\|_1\le k$, 
	denote $\bmalpha=[\alpha_1,\alpha_2,\cdots,\alpha_d]^T$, 
	and let $[z_1,z_2,\cdots,z_\tildek]^T\in\R^\tildek$ be the vector such that
	\begin{equation*}
		z_\ell=x_j\quad \tn{if}\ \sum_{i=1}^{j-1}\alpha_i< \ell \le \sum_{i=1}^{j}\alpha_i \quad \tn{for $j=1,2,\cdots,d$}.
	\end{equation*}
	That is,
	\begin{equation*}
		[z_1,z_2,\cdots,z_\tildek]^T=\big[\overbrace{x_1,\cdots,x_1}^{\alpha_1 \ \tn{times}},\overbrace{x_2,\cdots,x_2}^{\alpha_2 \ \tn{times}},\cdots,\overbrace{x_d,\cdots,x_d}^{\alpha_d \ \tn{times}}\big]^T\in \R^\tildek.
	\end{equation*}
	Then we have $P(\bmx)=\bmx^\bmalpha=z_1z_2\cdots z_\tildek$. 
	
	We construct the target ReLU FNN in two steps. First, there exists an affine linear map $\calL:\R^d\to \R^k$ that duplicates  $\bmx$ to form a new vector  $[z_1,z_2,\cdots,z_\tildek,1,\cdots,1]^T\in\R^k$, i.e., $\calL(\bmx)=[z_1,z_2,\cdots,z_\tildek,1,\cdots,1]^T\in\R^k$. Second, by Lemma~\ref{lem:xiMultiply}, there exists a function $\psi:\R^k\to \R$ implemented by a ReLU FNN with width $9(N+1)+k-1$ and depth $7kL(k-1)$ such that $\psi$ maps $[z_1,z_2,\cdots,z_\tildek,1,\cdots,1]^T\in \R^k$ to $z_1z_2\cdots z_\tildek$ within an error  $9(k-1)(N+1)^{-7kL}$. Hence, we can construct the desired function via $\phi\coloneqq\psi\circ \calL$. Then  
	$\phi$ can be implemented by a ReLU FNN with width $9(N+1)+k-1$ and depth $7kL(k-1)\le 7k^2L$, and 
	\begin{equation*}
		\begin{split}
			|\phi(\bmx)-P(\bmx)|=|\phi(\bmx)-\bmx^\bmalpha|&=|\psi\circ\calL(\bmx)-x_1^{\alpha_1}x_2^{\alpha_2}\cdots x_d^{\alpha_d}|\\
			&=|\psi(z_1,z_2,\cdots,z_\tildek,1,\cdots,1)-z_1z_2\cdots z_\tildek|\\
			&\le 9(k-1)(N+1)^{-7kL}\le 9k(N+1)^{-7kL}
		\end{split}
	\end{equation*}  
	for any $x_1,x_2,\cdots,x_d\in [0,1]$.
	So, we finish the proof.
\end{proof}

\subsection{Proof of Proposition~\ref{prop:approxStepFun} for step function approximation}
\label{sec:approxStepFun}
To prove Proposition~\ref{prop:approxStepFun} in this sub-section, we will discuss how to pointwisely approximate step functions by ReLU FNNs except for the trifling region. Before proving Proposition~\ref{prop:approxStepFun}, let us first introduce a basic lemma about fitting $\calO(N_1N_2)$ samples using a two-hidden-layer ReLU FNN with $\calO(N_1+N_2)$ neurons. 

\begin{lemma}
    \label{lem:squarePointsLemma}
    For any $N_1,N_2\in \N^+$, given $N_1(N_2+1)+1$ samples $(x_i,y_i)\in \R^2$ with $x_0<x_1<\cdots<x_{N_1(N_2+1)}$ and  $y_i\ge 0$ for $i=0,1,\cdots,N_1(N_2+1)$,
    there exists $\phi\in \NNF(\NNinput=1;\NNwidthvec=[2N_1,2N_2+1])$ satisfying the following conditions:
    \begin{enumerate}
        \item $\phi(x_i)=y_i$ for $i=0,1,\cdots,N_1(N_2+1)$.
        \item $\phi$ is linear on each interval $[x_{i-1},x_{i}]$ for $i\notin \{(N_2+1)j:j=1,2,\cdots,N_1\} $.
    \end{enumerate}
\end{lemma}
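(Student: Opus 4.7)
Let $M:=N_2+1$, so the $N_1M+1$ samples split naturally into $N_1$ ``blocks,'' where block $j$ (for $j=0,1,\ldots,N_1-1$) consists of the $M+1$ samples indexed $jM,jM+1,\ldots,(j+1)M$. The boundary indices $\{jM:j=1,\ldots,N_1\}$ excluded in condition~2 are precisely the block-to-block transitions, so within each block the function $\phi$ must piecewise-linearly interpolate the first $M$ points on the ``super-interval'' $[x_{jM},x_{(j+1)M-1}]$ and is completely free on the ``transition'' sub-interval $[x_{(j+1)M-1},x_{(j+1)M}]$. I will exploit this freedom to decouple the blocks.

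The plan is to use the $2N_1$ neurons of the first hidden layer to realize (through the weight matrix of the second layer acting on them) the $N_1$ clipped-ramp functions
\[
a_j(x)\;=\;\tfrac{1}{\Delta_j}\Big[\sigma(x-x_{jM})-\sigma\big(x-x_{(j+1)M-1}\big)\Big],\qquad \Delta_j:=x_{(j+1)M-1}-x_{jM},
\]
each of which vanishes for $x\le x_{jM}$, rises affinely from $0$ to $1$ across the super-interval of block $j$, and equals $1$ thereafter. The key identity is that for $x$ in super-interval $j^*$, every linear combination $\sum_j c_j a_j(x)+b$ collapses to the affine function $c_{j^*}\,p_{j^*}+D_{j^*}$ of the local coordinate $p_{j^*}:=(x-x_{j^*M})/\Delta_{j^*}\in[0,1]$, where $D_{j^*}:=b+\sum_{j<j^*}c_j$. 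Thus within each super-interval the second hidden layer effectively sees a one-dimensional input, but with slope control that I can vary independently across different $j^*$.

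The $2N_2+1=2M-1$ neurons of the second hidden layer would then be allocated so that, on every super-interval $j^*$, their ReLU corners land at the $M-1$ prescribed local knots $q_k^{(j^*)}:=(x_{j^*M+k}-x_{j^*M})/\Delta_{j^*}$ for $k=1,\ldots,M-1$, while a few extra neurons absorb the telescoping constants $D_{j^*}^{(\ell)}$; the output weights are chosen to reproduce the correct slope increments $(y_{j^*M+k}-y_{j^*M+k-1})/(x_{j^*M+k}-x_{j^*M+k-1})$, which together with matching $\phi(x_{j^*M})=y_{j^*M}$ forces $\phi(x_{j^*M+k})=y_{j^*M+k}$ for all $k=0,1,\ldots,M-1$. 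The main obstacle is the coupling across super-intervals: because $D_{j^*}^{(\ell)}$ depends telescopically on the same parameters $c_j^{(\ell)}$ that control the slopes on earlier super-intervals, I cannot independently solve the $N_1$ interpolation problems. I plan to handle this either by solving the resulting linear system explicitly, using the freedom on the transition intervals to absorb any cross-block mismatch into the unconstrained segment $[x_{(j+1)M-1},x_{(j+1)M}]$, or, more cleanly, by induction on $N_1$: assuming the claim for $N_1-1$ blocks, I adjoin two new first-layer neurons $\sigma(x-x_{(N_1-1)M})$ and $\sigma(x-x_{N_1M-1})$ together with a second-layer correction supported only on the last block, using the free transition at $[x_{N_1M-1},x_{N_1M}]$ to splice the new block on without perturbing the previously fixed values.
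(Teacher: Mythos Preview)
The paper does not prove this lemma; it simply quotes it as Lemma~2.2 of \cite{SHEN201974} and refers the reader there. So there is no ``paper's own proof'' to compare against, and I will evaluate your proposal on its merits.

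Your block decomposition and choice of first-layer breakpoints are exactly right, and you correctly isolate the crux: the second-layer pre-activations must simultaneously produce the right knots and slope increments on every super-interval, while the output weights $w_\ell$ are global. However, the obstacle you run into is self-inflicted. By funnelling the $2N_1$ first-layer neurons into the $N_1$ clipped ramps $a_j$ and then only considering combinations $\sum_j c_j a_j + b$, you discard half of the available degrees of freedom; it is precisely this restriction that forces the telescoping relation $D_{j^*}=b+\sum_{j<j^*}c_j$ and couples the blocks. If instead each second-layer neuron takes a \emph{general} combination $\sum_j\bigl(c_j\,\sigma(x-x_{jM})+d_j\,\sigma(x-x_{(j+1)M-1})\bigr)+b$, then on super-interval $j^*$ its pre-activation is an affine function whose slope and intercept can be prescribed \emph{independently} for each $j^*$: the $2\times 2$ system for $(c_{j^*},d_{j^*-1})$ determined by the jump in slope and the jump in intercept across the transition is nonsingular because $x_{j^*M}\neq x_{j^*M-1}$. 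With this freedom the blocks decouple completely. One then assigns, for each $k=1,\dots,N_2$, the pair of neurons $2k-1,2k$ (with output weights $+1,-1$) to have pre-activations $s_{j^*,k}^{\pm}(x-x_{j^*M+k})$ with $s^{\pm}>0$ and $s^+-s^-$ equal to the required slope increment on block $j^*$; the remaining $(2N_2{+}1)$-st neuron carries the block-dependent initial affine piece (shifted up by a large constant that the output bias cancels, so the ReLU never clips it). The leftover parameter $d_{N_1-1}$ for each neuron handles the final value $\phi(x_{N_1M})=y_{N_1M}$. The hypothesis $y_i\ge 0$ is used only to keep that last neuron's pre-activation nonnegative.

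Your proposed induction also does not close the gap as stated: the two new first-layer neurons you adjoin contribute only \emph{one} effective parameter per second-layer neuron on the new super-interval, since $\sigma(x-x_{N_1M-1})\equiv 0$ there. That single parameter must fix both the breakpoint location and the slope jump, while $w_\ell$ and the inherited constant $D_\ell$ are already frozen by the inductive step---so you cannot match arbitrary data on the last block. The resolution is again to abandon the $a_j$-only parametrisation.
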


The above lemma is Lemma~$2.2$ of  \cite{SHEN201974}; and the reader is referred to \cite{SHEN201974} for its proof. Essentially, this lemma shows the equivalence of one-hidden-layer ReLU FNNs of size $\calO(N^2)$ and two-hidden-layer ones of size $\calO(N)$ to fit $\calO(N^2)$ samples.

The next lemma below shows that special shallow and wide ReLU FNNs can be represented by deep and narrow ones. This lemma was proposed as Proposition $2.2$ in  \cite{2019arXiv190605497S}.

\begin{lemma}
	\label{lem:widthReduction}
	For any $N,L,d\in \N^+$, it holds that 
	\begin{equation*}
		\begin{split}	       
			&\quad \ \NNF(\NNinput=d\NNspace\NNwidthvec=[N,NL]\NNspace\NNoutput=1)\\
			&\subseteq \NNF(\NNinput=d\NNspace\NNwidth\le 2N+2\NNspace\NNdepth\le L+1\NNspace\NNoutput=1).
		\end{split}
	\end{equation*}
\end{lemma}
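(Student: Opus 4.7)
The plan is to represent any $f\in \NNF(\NNinput=d\NNspace\NNwidthvec=[N,NL]\NNspace\NNoutput=1)$ by ``unrolling'' the wide second hidden layer (of width $NL$) into $L$ successive depth steps. Writing $\bmy=\sigma(\bmW_1\bmx+\bmb_1)\in\R^N$ for the first-hidden-layer output, I can group the $NL$ second-layer neurons into $L$ blocks of $N$ neurons each and obtain
\begin{equation*}
f(\bmx)=\sum_{\ell=1}^{L}T_\ell(\bmy)+b,\qquad T_\ell(\bmy):=\sum_{j=1}^{N}w_{\ell,j}\,\sigma\bigl(\bmA_{\ell,j}^{T}\bmy+c_{\ell,j}\bigr),
\end{equation*}
where the $w_{\ell,j}$, $\bmA_{\ell,j}$, $c_{\ell,j}$, and $b$ are read off from the second linear map, the second bias, and the output linear map of the original network. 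Setting $S_\ell:=T_1(\bmy)+\cdots+T_\ell(\bmy)$ and $S_0:=0$, the goal is to compute $S_L+b$ layer by layer using narrow layers.

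Next I will construct a new ReLU FNN whose hidden layers carry three ``registers.'' After the first hidden layer (which just computes $\bmy$ with $N$ neurons), each subsequent hidden layer holds: (i) the vector $\bmy\in\R^N$, propagated with $N$ ReLU neurons using the identity $\sigma(y_j)=y_j$ for $y_j\ge 0$; (ii) the $N$ activations $\{\sigma(\bmA_{\ell,j}^{T}\bmy+c_{\ell,j})\}_{j=1}^{N}$ of the current block, computed from register (i) via an affine map; and (iii) a scalar partial sum $S_{\ell-1}$ carried by $2$ neurons via the identity $S_{\ell-1}=\sigma(S_{\ell-1})-\sigma(-S_{\ell-1})$. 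The affine map entering the next hidden layer re-computes register (i) by copying $\bmy$, collapses register (ii) and register (iii) of the current layer into the updated partial sum $S_\ell=S_{\ell-1}+\sum_{j}w_{\ell,j}\sigma(\bmA_{\ell,j}^{T}\bmy+c_{\ell,j})$ (again stored as a signed pair), and opens a fresh block $\ell+1$ in register (ii). The output affine map at the end reads $S_L+b$ off of the last register (ii) and last register (iii), producing $f(\bmx)$.

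Counting resources: I use one hidden layer to produce $\bmy$ and then $L$ more layers to roll up the $L$ blocks into $S_L$, for a total of $L+1$ hidden layers; each such layer uses at most $N+N+2=2N+2$ neurons. This gives exactly the containment claimed. The main thing to be careful about is the bookkeeping at the boundaries: in the second hidden layer the accumulator is trivial ($S_0=0$), and the final hidden layer must carry both the $L$-th block activations and $S_{L-1}$ simultaneously so that the single output affine map can form $S_L+b$. No genuine obstacle arises beyond this indexing, because the two ReLU identities $\sigma(t)=t$ for $t\ge 0$ and $t=\sigma(t)-\sigma(-t)$ allow a ReLU network to faithfully transport nonnegative vectors and signed scalars through a layer with no distortion, which is precisely the mechanism by which width $NL$ can be traded for depth $L$.
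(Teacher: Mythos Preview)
Your proposal is correct. The paper does not actually include a proof of this lemma; it simply cites it as Proposition~2.2 of \cite{2019arXiv190605497S}. Your argument---grouping the $NL$ second-layer neurons into $L$ blocks of $N$, carrying the nonnegative vector $\bmy$ through successive layers via $\sigma(y_j)=y_j$, computing one block per layer, and storing the running scalar sum with the signed pair $\sigma(S)-\sigma(-S)$---is precisely the standard construction used there, and your resource count ($L+1$ hidden layers, width at most $N+N+2=2N+2$) matches the stated bound exactly.
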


With Lemmas~\ref{lem:squarePointsLemma} and \ref{lem:widthReduction} in hand, let us present the detailed proof of Proposition~\ref{prop:approxStepFun}. 
\begin{proof}[Proof of Proposition~\ref{prop:approxStepFun}]
	We divide the proof into two cases: $d=1$ and $d\ge 2$.

\mycase{1}{$d=1$.}

In this case,  $K=\lfloor N^{1/d}\rfloor^2 \lfloor L^{2/d}\rfloor=N^2L^2$. Denote $M=N^2L$ and consider the sample set \[
\begin{split}\big\{(1,M-1),(2,0)\big\}&\bigcup
	\big\{(\tfrac{m}{M},m):m=0,1,\cdots,M-1\big\}\\
	&\bigcup \big\{(\tfrac{m+1}{M}-\delta,m):m=0,1,\cdots,M-2\big\}
	.\end{split}\]
Its size is $2M+1=N\cdot\big((2NL-1)+1\big)+1$. By Lemma~\ref{lem:squarePointsLemma} (set $N_1=N$ and $N_2=2NL-1$ therein), there exists \[\begin{split}
	\phi_1&\in \NNF(\NNwidthvec=[2N,2(2NL-1)+1])\\
	&=\NNF(\NNwidthvec=[2N,4NL-1])
\end{split}\] such that
\begin{itemize}
	\item $\phi_1(\tfrac{M-1}{M})=\phi_1(1)=M-1$ and $\phi_1(\tfrac{m}{M})=\phi_1(\tfrac{m+1}{M}-\delta)=m$ for $m=0,1,\cdots,M-2$;
	\item $\phi_1$ is linear on $[\tfrac{M-1}{M},1]$ and each interval $[\tfrac{m}{M},\tfrac{m+1}{M}-\delta]$ for $m=0,1,\cdots,M-2$. 
\end{itemize}
Then
\begin{equation}
	\label{eq:returnmStepFunc}
	\phi_1(x)=m\quad \tn{if} \ x\in [\tfrac{m}{M},\tfrac{m+1}{M}-\delta\cdot \one_{\{m\le M-2\}}]\quad \tn{for $m=0,1,\cdots,M-1$.}
\end{equation}

Now consider another sample set
\[\begin{split}
	\big\{(\tfrac{1}{M},L-1),(2,0)\big\}&\bigcup\big\{(\tfrac{\ell}{ML},\ell):\ell=0,1,\cdots,L-1\big\}\\
	&\bigcup \big\{(\tfrac{\ell+1}{ML}-\delta,\ell):\ell=0,1,\cdots,L-2\big\}.
\end{split}\] 
Its size is $2L+1=1\cdot\big((2L-1)+1\big)+1$. By Lemma~\ref{lem:squarePointsLemma} (set $N_1=1$ and $N_2=2L-1$ therein), there exists \[\begin{split}
	\phi_2&\in \NNF(\NNwidthvec=[2,2(2L-1)+1])\\
	&=\NNF(\NNwidthvec=[2,4L-1])
\end{split}\] such that
\begin{itemize}
	\item $\phi_2(\tfrac{L-1}{ML})=\phi_2(\tfrac{1}{M})=L-1$ and $\phi_2(\tfrac{\ell}{ML})=\phi_2(\tfrac{\ell+1}{ML}-\delta)=\ell$ for $\ell=0,1,\cdots,L-2$;
	\item $\phi_2$ is linear on $[\tfrac{L-1}{ML},\tfrac{1}{M}]$ and each interval $[\tfrac{\ell}{ML},\tfrac{\ell+1}{ML}-\delta]$ for $\ell=0,1,\cdots,L-2$. 
\end{itemize}
It follows that, for  $m=0,1,\cdots,M-1$ and $\ell=0,1,\cdots,L-1$,
\begin{equation}
	\label{eq:returnlStepFunc}
	\phi_2(x-\tfrac{m}{M})=\ell\quad  \tn{for}\ x\in [\tfrac{mL+\ell}{ML},\tfrac{mL+\ell+1}{ML}-\delta\cdot \one_{\{\ell\le L-2\}}].
\end{equation}

$K=ML$ implies that any $k\in \{0,1,\cdots,K-1\}$ can be unique represented by $k=mL+\ell$ for $m\in\{0,1,\cdots,M-1\}$ and $\ell\in\{0,1,\cdots,L-1\}$. 
Then the desired function $\phi$ can be implemented by ReLU FNN as shown in Figure~\ref{fig:stepFunc}. 

\begin{figure}[H]
	\centering
	\includegraphics[width=0.95\textwidth]{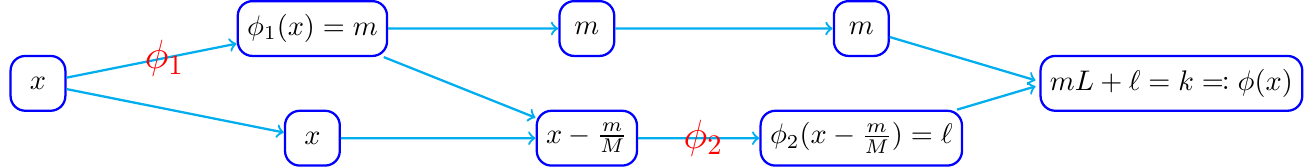}
	\caption{An illustration of the network architecture implementing $\phi$ based on Equations~\eqref{eq:returnmStepFunc} and \eqref{eq:returnlStepFunc} with $x\in [\tfrac{k}{K},\tfrac{k+1}{K}-\delta\cdot \one_{\{k\le K-2\}}]=[\tfrac{mL+\ell}{ML},\tfrac{mL+\ell+1}{ML}-\delta\cdot \one_{\{m\le M-2 \tn{ or }\ell\le L-2\}}]$, where $k=mL+\ell$ for $m=0,1,\cdots,M-1$ and $\ell=0,1,\cdots,L-1$. 
	}
	\label{fig:stepFunc}
\end{figure}
Clearly, 
\begin{equation*}
	\phi(x)=k\quad \tn{if $x\in [\tfrac{k}{K},\tfrac{k+1}{K}-\delta\cdot \one_{\{k\le K-2\}}]$\quad for $k\in\{0,1,\cdots,K-1\}.$}
\end{equation*}
By Lemma~\ref{lem:widthReduction}, $\phi_1\in\NNF(\NNwidthvec=[2N,4NL-1])\subseteq\NNF(\NNwidth\le 4N+2\NNspace\NNdepth\le 2L+1) $ and $\phi_2 \in\NNF(\NNwidthvec=[2,4L-1])\subseteq\NNF(\NNwidth\le 6\NNspace\NNdepth\le 2L+1)$, implying $\phi\in \NNF(\NNwidth\le \max\{4N+2+1,6+1\}=4N+3\NNspace\NNdepth\le (2L+1)+2+(2L+1)+1=4L+5)$.
So we finish the proof for the case $d=1$

\mycase{2}{$d\ge2$.}

Now we consider the case when $d\ge2$. Consider the sample set 
\begin{equation*}
	\begin{split}
		\big\{(1,{K-1}),(2,0)\big\}&\bigcup\big\{(\tfrac{k}{K},k):k=0,1,\cdots,K-1\big\}\\
		&\bigcup \big\{(\tfrac{k+1}{K}-\delta,k):k=0,1,\cdots,K-2\big\},
	\end{split}
\end{equation*} 
whose size is $2K+1=\lfloor N^{1/d}\rfloor \big((2\lfloor N^{1/d}\rfloor \lfloor L^{2/d}\rfloor-1)+1\big)+1$.
By Lemma~\ref{lem:squarePointsLemma} (set $N_1=\lfloor N^{1/d}\rfloor$ and $N_2=2\lfloor N^{1/d}\rfloor \lfloor L^{2/d}\rfloor-1$ therein), there exists 
\begin{equation*}
	\begin{split}
		\phi &\in \NNF(\NNwidthvec =[2\lfloor N^{1/d}\rfloor,2(2\lfloor N^{1/d}\rfloor \lfloor L^{2/d}\rfloor-1)+1])\\
		&= \NNF(\NNwidthvec =[2\lfloor N^{1/d}\rfloor,4\lfloor N^{1/d}\rfloor \lfloor L^{2/d}\rfloor-1])
	\end{split}
\end{equation*} 
such that
\begin{itemize}
	\item $\phi(\tfrac{K-1}{K})=\phi(1)=K-1$, and $\phi(\tfrac{k}{K})=\phi(\tfrac{k+1}{K}-\delta)=k$ for $k=0,1,\cdots,K-2$;
	\item $\phi$ is linear on $[\tfrac{K-1}{K},1]$ and each interval $[\tfrac{k}{K},\tfrac{k+1}{K}-\delta]$ for $k=0,1,\cdots,K-2$. 
\end{itemize}
Then
\begin{equation*}
	\phi(x)=k \quad \tn{if} \ x\in [\tfrac{k}{K},\tfrac{k+1}{K}-\delta\cdot \one_{\{k\le K-2\}}]\quad   \tn{ for $k=0,1,\cdots,K-1$.}
\end{equation*}

By Lemma~\ref{lem:widthReduction}, 
\begin{equation*}
	\begin{split}
		\phi &\in \NNF(\NNwidthvec =[2\lfloor N^{1/d}\rfloor,4\lfloor N^{1/d}\rfloor \lfloor L^{2/d}\rfloor-1])\\
		&\subseteq 
		\NNF(\NNwidth\le 4\lfloor N^{1/d}\rfloor+2\NNspace\NNdepth \le 2\lfloor L^{2/d}\rfloor+1)\\
		&\subseteq 
		\NNF(\NNwidth\le 4\lfloor N^{1/d}\rfloor+3\NNspace \NNdepth \le 4L+5).
	\end{split}
\end{equation*} 
which means we have finished the proof for the case $d\ge 2$.
\end{proof}

\subsection{Proof of Proposition~\ref{prop:pointsMatching} for point fitting}
\label{sec:pointsMatching}
In this sub-section, we will discuss how to use ReLU FNNs to fit a collection of points in $\R^2$.\footnote{Fitting a collection of points $\{(x_i,y_i)\}_i$ in $\R^2$ means that the target ReLU FNN takes a value close to $y_i$ at the location $x_i$.}
It is trivial to fit $n$ points via one-hidden-layer ReLU FNNs with $\calO(n)$ parameters. However, to prove Proposition~\ref{prop:pointsMatching}, we need  to fit $\calO(n)$ points with much fewer parameters, which is the main difficulty of our proof.  Our proof below is mainly based on the ``bit extraction'' technique and the composition architecture of neural networks. 

Let us first introduce a basic lemma
based on the ``bit extraction'' technique, which is actually Lemma~$2.6$ of \cite{2019arXiv190605497S}.

\begin{lemma}
	\label{lem:bitExtractionOld}
	For any $N,L\in \N^+$, any $\theta_{m,\ell}\in \{0,1\}$ for $m=0,1,\cdots,M-1$ and $\ell=0,1,\cdots,L-1$, where $M=N^2L$, there exists a function $\phi$ implemented by a ReLU FNN with width $4N+3$ and depth $3L+3$ such that
	\[\phi(m,\ell)=\sum_{j=0}^{\ell}\theta_{m,j}\quad \tn{
		for $m=0,1,\cdots,M-1\tn{ and }\ell=0,1,\cdots,L-1$.}\]
\end{lemma}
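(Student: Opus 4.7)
My plan is to build $\phi$ as a composition of two subnetworks: an encoder that compresses the $L$ bits $\{\theta_{m,j}\}_{j=0}^{L-1}$ attached to each index $m$ into a single real number $\alpha_m\in[0,1)$, followed by an extractor that, given the pair $(\alpha_m,\ell)$, produces the running sum $\sum_{j=0}^{\ell}\theta_{m,j}$ by iteratively peeling off the leading bits of $\alpha_m$. The key observation is that a single scalar can losslessly encode $L$ binary digits, so storing all $N^2L^2$ bits only costs $N^2L$ real numbers, and the bits can then be stripped off one dyadic layer at a time, trading width for depth in a way that matches the allotted budget $(4N+3,\,3L+3)$.

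For the encoder I set $\alpha_m\coloneqq\sum_{j=0}^{L-1}\theta_{m,j}\,2^{-j-1}$ and interpolate the $M=N^2L$ data points $(m,\alpha_m)$ using Lemma~\ref{lem:squarePointsLemma} with $N_1=N$ and $N_2=NL-1$, which yields a two-hidden-layer ReLU FNN of widths $[2N,\,2NL-1]$ that hits each $\alpha_m$ exactly on the integer grid. The second layer is too wide for the target budget, but Lemma~\ref{lem:widthReduction} applied with parameters $N'=2N$ and $L'=L$ (so that $N'L'=2NL\ge 2NL-1$) rewrites this shallow-and-wide encoder as a deep-and-narrow one of width at most $4N+2$ and depth at most $L+1$. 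This fits the width budget with room for one auxiliary channel and consumes only $L+1$ of the $3L+3$ allotted layers, leaving a budget of $2L+2$ layers for the extractor.

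For the extractor I use the recursion $\alpha^{(0)}=\alpha_m$, $b_j=\lfloor 2\alpha^{(j)}\rfloor$, $\alpha^{(j+1)}=2\alpha^{(j)}-b_j$, which produces $b_j=\theta_{m,j}$. Since $2\alpha^{(j)}$ takes only dyadic values on the finite grid $\{k\,2^{-(L-j-1)}:0\le k<2^{L-j}\}\subseteq[0,2)$, the floor can be replaced by the ReLU surrogate $f(x)=\epsilon^{-1}\bigl(\sigma(x-1+\epsilon)-\sigma(x-1)\bigr)$ with $\epsilon=2^{-(L-1)}$, which agrees with $\lfloor\cdot\rfloor$ exactly on every level $j$'s grid. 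Each iteration therefore consumes a constant amount of width and two ReLU layers of depth. In parallel I maintain an accumulator $s^{(j+1)}=s^{(j)}+g(c^{(j)})\,b_j$ with counter $c^{(j)}=\ell-j$ decremented linearly, where the gate $g(t)\coloneqq\sigma(t+1)-\sigma(t)$ satisfies $g(t)=\one_{\{t\ge 0\}}$ on integers and so multiplies in $\theta_{m,j}$ exactly when $j\le\ell$. Over $L$ iterations this costs $\calO(L)$ depth and only $\calO(1)$ extra width, and the final output is $s^{(L)}=\sum_{j=0}^{\ell}\theta_{m,j}$.

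The main obstacle I anticipate is fitting the construction into the exact numerical envelope $(4N+3,\,3L+3)$ rather than the looser $\calO(N)$-by-$\calO(L)$ bound. The encoder already occupies $4N+2$ of the permitted $4N+3$ neurons per layer, so every auxiliary channel of the extractor, namely the remainder $\alpha^{(j)}$, the accumulator $s^{(j)}$, the counter $c^{(j)}$, and the two ReLU nodes implementing the surrogate $f$, must either be folded into the single spare slot or be carried only inside the narrower extraction layers, and one must check that the two-layers-per-bit extraction schedule plus affine routing exactly fits within the residual $2L+2$ layers. A secondary subtlety is showing inductively that the dyadic grid structure $\alpha^{(j)}\in\{k\,2^{-(L-j)}\}$ is preserved \emph{exactly}; any rounding error in the surrogate would corrupt later bit extractions, which pins down the breakpoints of $f$ and forces $\epsilon$ to match the finest grid spacing encountered over all $L$ iterations.
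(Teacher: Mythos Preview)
The paper does not prove this lemma at all; it simply quotes it as Lemma~2.6 of \cite{2019arXiv190605497S} and moves on. So there is no ``paper's own proof'' to compare against here.

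That said, your plan is exactly the standard bit-extraction argument used in that reference: pack the $L$ bits $\theta_{m,0},\dots,\theta_{m,L-1}$ into a single dyadic real $\alpha_m=\sum_j\theta_{m,j}2^{-j-1}$, realize the map $m\mapsto\alpha_m$ on the $M=N^2L$ integer nodes by Lemma~\ref{lem:squarePointsLemma} (width $[2N,2NL-1]$) followed by Lemma~\ref{lem:widthReduction} (width $4N+2$, depth $L+1$, with one spare channel to carry $\ell$), and then peel off the bits one at a time with the ReLU floor surrogate while gating the accumulator by the sign of $\ell-j$. Your observation that a single break scale $\epsilon=2^{1-L}$ suffices for all $L$ iterations because the dyadic grid only coarsens is correct, and so is the reduction of the binary product $g_j\cdot b_j$ to $\sigma(g_j+b_j-1)$.

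Your stated ``main obstacle'' is slightly overstated: the encoder and extractor run \emph{sequentially}, not in parallel, so once the $(L+1)$ encoder layers have produced $(\alpha_m,\ell)$ the remaining layers need only constant width (roughly $7$ neurons: two for the floor surrogate, two for the gate, and carries for $\alpha^{(j)},s^{(j)},\ell$), well under $4N+3$. The genuine bookkeeping is in the depth count, where two ReLU layers per bit give $2L$ extractor layers, and $L+1+2L=3L+1$ leaves exactly two layers of slack inside the $3L+3$ budget for glue. This is tight but works, and it matches the accounting in the cited source.
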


Next, let us introduce Lemma~\ref{lem:bitExtractionNew}, a variant of Lemma~\ref{lem:bitExtractionOld} for a different mapping for the ``bit extraction". Its proof is based on  Lemmas~\ref{lem:squarePointsLemma},
\ref{lem:widthReduction}, and \ref{lem:bitExtractionOld}.

\begin{lemma}
	\label{lem:bitExtractionNew}
	For any $N,L\in \N^+$ and any $\theta_{i}\in \{0,1\}$ for $i=0,1,\cdots,N^2L^2-1$, there exists a function $\phi$ implemented by a ReLU FNN with width $8N+6$ and depth $5L+7$ such that
	\[\tn{$\phi(i)=\theta_{i}$\quad \tn{for} $i=0,1,\cdots,N^2L^2-1$.}\]
\end{lemma}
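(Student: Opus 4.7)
The strategy is to reduce the target to two parallel copies of the cumulative-sum extractor of Lemma~\ref{lem:bitExtractionOld}, preceded by a piecewise-linear decoder that converts the scalar input $i$ into the index pair consumed by that lemma. Reindex the bits via $\theta_{m,\ell}\coloneqq\theta_{mL+\ell}$ for $m\in\{0,\ldots,N^2L-1\}$ and $\ell\in\{0,\ldots,L-1\}$. By Lemma~\ref{lem:bitExtractionOld} there is $\psi_1$ of width $4N+3$ and depth $3L+3$ with $\psi_1(m,\ell)=\sum_{j=0}^{\ell}\theta_{m,j}$; applied to the right-shifted sequence $\tilde\theta_{m,0}=0$, $\tilde\theta_{m,\ell}=\theta_{m,\ell-1}$ (for $\ell\geq 1$), the same lemma also yields $\tilde\psi_1$ of the same size satisfying $\tilde\psi_1(m,\ell)=\sum_{j=0}^{\ell-1}\theta_{m,j}$ (with the empty sum read as $0$). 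Telescoping then gives
\begin{equation*}
\theta_{m,\ell}=\psi_1(m,\ell)-\tilde\psi_1(m,\ell)\quad\text{for every valid }(m,\ell).
\end{equation*}

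For the decoder I would fit the $2N^2L$ samples
\begin{equation*}
\bigl\{(kL,\,k),\ ((k+1)L-1,\,k):\ k=0,1,\ldots,N^2L-1\bigr\}
\end{equation*}
using Lemma~\ref{lem:squarePointsLemma}. Because the two samples attached to each block $k$ share the value $k$ and Lemma~\ref{lem:squarePointsLemma} makes the interpolant linear on every non-special interval, the resulting $\mu$ is identically $k$ on $[kL,(k+1)L-1]$, so $\mu(i)=\lfloor i/L\rfloor$ at every integer $i\in\{0,\ldots,N^2L^2-1\}$; the few ``special'' intervals of Lemma~\ref{lem:squarePointsLemma} sit at $[NL^2j-1,\,NL^2j]$, contain no integer in their interior, and therefore cause no trouble. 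Choosing $N_1=N$ and $N_2+1=2NL$ realizes $\mu$ via a two-hidden-layer network of widths $[2N,\,4NL-1]$, which Lemma~\ref{lem:widthReduction} (after padding the second layer to width $4NL=2N\cdot 2L$) compresses into width $4N+2$ and depth $2L+1$.

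Assembly proceeds by propagating $i$ as one extra neuron per layer (using $i=\sigma(i)$ since $i\geq 0$) alongside the $\mu$-subnetwork, for width $4N+3$ and depth $2L+1$. An affine map then forms the pair $(\mu(i),\,i-L\mu(i))=(m,\ell)$ and feeds it in parallel into the two Lemma~\ref{lem:bitExtractionOld} networks, contributing width $2(4N+3)=8N+6$ and depth $3L+3$; a final linear output returns $\psi_1(m,\ell)-\tilde\psi_1(m,\ell)=\theta_i$. The overall width is $\max\{4N+3,\,8N+6\}=8N+6$ and the overall depth is at most $(2L+1)+(3L+3)=5L+4\leq 5L+7$, matching the claim.

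\textbf{Main obstacle.} The delicate piece is the decoder $\mu$. The step function of Proposition~\ref{prop:approxStepFun} has the ``wrong'' granularity (jumps spaced $1/K$ on $[0,1]$ with $K=N^2L^2$) and cannot directly map integer $i$ to $\lfloor i/L\rfloor$ within the stated budget. The crux is to \emph{double} the fitting points at each plateau's two endpoints so that linear interpolation automatically reproduces the desired floor values at every integer in between, and then to balance the parameters $(N_1,N_2)=(N,\,2NL-1)$ in Lemma~\ref{lem:squarePointsLemma} so that Lemma~\ref{lem:widthReduction} converts the excessive $\calO(NL)$ second-layer width into $\calO(L)$ depth---yielding an $\calO(N)$-width, $\calO(L)$-depth decoder that is compatible with the two parallel copies of Lemma~\ref{lem:bitExtractionOld}.
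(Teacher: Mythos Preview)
Your proposal is correct and follows essentially the same route as the paper: the same reindexing $i=mL+\ell$, the same telescoping via two applications of Lemma~\ref{lem:bitExtractionOld} to the original and right-shifted bit sequences, and the same decoder $\mu(i)=\lfloor i/L\rfloor$ built from Lemma~\ref{lem:squarePointsLemma} with $(N_1,N_2)=(N,2NL-1)$ and compressed by Lemma~\ref{lem:widthReduction}. The only cosmetic slip is that your sample set has $2N^2L$ points while Lemma~\ref{lem:squarePointsLemma} requires $N_1(N_2+1)+1=2N^2L+1$; the paper fixes this by appending the dummy sample $(N^2L^2,\,N^2L)$, and your depth count ($5L+4$) is a bit optimistic compared to the paper's $(2L+1)+2+(3L+3)+1=5L+7$, but either way you land within the stated bound.
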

\begin{proof}
	The case $L=1$ is clear. We assume $L\ge 2$ below.

Denote $M=N^2L$, for each $i\in \{0,1,\cdots,N^2L^2-1\}$, there exists a unique representation $i=mL+\ell$ for $m\in \{0,1,\cdots,M-1\}$ and $\ell\in\{0,1,\cdots,L-1\}$.
Thus, we can define, for $m=0,1,\cdots,M-1$ and $\ell=0,1,\cdots,L-1$,
\begin{equation*}
	a_{m,\ell}\coloneqq  \theta_i,\quad 
	\tn{where $i=mL+\ell$.}
\end{equation*}
Then, for $m=0,1,\cdots,M-1$,  we set $b_{m,0}=0$ and 
$b_{m,\ell}=a_{m,\ell-1}$ for $\ell=1,2,\cdots,L-1$.

By Lemma~\ref{lem:bitExtractionOld}, there exist $\phi_1,\phi_2\in\NNF(\NNwidth\le 4N+3\NNspace \NNdepth\le 3L+3)  $ such that
\begin{equation*}
	\phi_1(m,\ell)=\sum_{j=0}^{\ell}a_{m,j} \quad \tn{and}\quad \phi_2(m,\ell)=\sum_{j=0}^{\ell}b_{m,j}
\end{equation*}
for $m=0,1,\cdots,M-1$ and $\ell=0,1,\cdots,L-1$. 

We consider the sample set \[\{(mL,m):m=0,1,\cdots,M\}\bigcup \big\{\big((m+1)L-1,m\big):m=0,1,\cdots,M-1\big\}.\] 
Its size is $2M+1=N\cdot\big((2NL-1)+1\big)+1$. By Lemma~\ref{lem:squarePointsLemma} (set $N_1=N$ and $N_2=2NL-1$ therein), there exists \[\begin{split}\psi
	&\in \NNF(\NNwidthvec=[2N,2(2NL-1)+1])\\
	&=\NNF(\NNwidthvec=[2N,4NL-1])\end{split}\] such that
\begin{itemize}
	\item $\psi(ML)=M$ and $\psi(mL)=\psi\big((m+1)L-1\big)=m$ for $m=0,1,\cdots,M-1$;
	\item $\psi$ is linear on each interval $[mL,(m+1)L-1]$ for $m=0,1,\cdots,M-1$. 
\end{itemize}
It follows that
\[\psi(x)=m\quad \tn{if}\ x\in [mL,(m+1)L-1]\quad \tn{ for } m=0,1,\cdots,M-1,\]
implying
\begin{equation*}
	\psi(mL+\ell)=m\quad \tn{for $m=0,1,\cdots,M-1$ and $\ell=0,1,\cdots,L-1$.}
\end{equation*}

For $i=0,1,\cdots,N^2L^2-1$, by representing $i=mL+\ell$ for $m=0,1,\cdots,M-1$ and $\ell=0,1,\cdots,L-1$, we have $\psi(i)=\psi(mL+\ell)=m$ and $i-L\psi(i)=\ell$, from which we
deduce
\begin{equation}
	\label{eq:phi1-phi2=phi}
	\begin{split}
		&\quad\ \phi_1\big(\psi(i),i-L\psi(i)\big)-\phi_2\big(\psi(i),i-L\psi(i)\big)\\
		&=\phi_1(m,\ell)-\phi_2(m,\ell)
		=\sum_{j=0}^{\ell} a_{m,j}-\sum_{j=0}^{\ell} b_{m,j}\\
		&=\sum_{j=0}^{\ell} a_{m,j}-\sum_{j=1}^{\ell} a_{m,j-1}-b_0
		=a_{m,\ell}
		=\theta_i.\\
	\end{split}
\end{equation}

Therefore, the desired function $\phi$ can be implemented by the network architecture described in Figure~\ref{fig:depthEfficiency}.
\begin{figure}[H]
	\centering
	\includegraphics[width=0.875\textwidth]{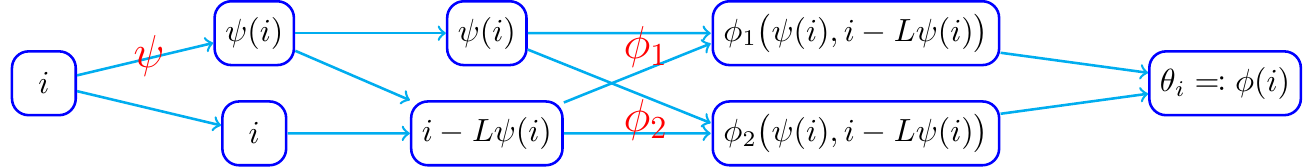}	
	\caption{An illustration of the network architecture implementing the desired function $\phi$ 
		based on Equation~\eqref{eq:phi1-phi2=phi}.
	}
	\label{fig:depthEfficiency}
\end{figure}

Note that \[\phi_1,\phi_2\in \NNF(\NNwidth\le 4N+3\NNspace \NNdepth\le 3L+3). \] 
And by Lemma~\ref{lem:widthReduction}, 
\[\begin{split}\psi&\in \NNF(\NNwidthvec=[2N,4NL-1])\\
	&\subseteq \NNF(\NNwidth\le 4N+2\NNspace \NNdepth\le 2L+1).\end{split}\]
Hence, the network architecture shown in Figure~\ref{fig:depthEfficiency} is with width $\max\{4L+2+1,2(4L+3)\}=8N+6$ and depth $(2L+1)+2+(3L+3)+1=5L+7$, implying $\phi\in\NNF(\NNwidth\le 8N+6\NNspace \NNdepth\le 5L+7)$. So we finish the proof.
\end{proof}

With Lemma~\ref{lem:bitExtractionNew} in hand, we are now ready to prove Proposition~\ref{prop:pointsMatching}.
\begin{proof}[Proof of Proposition~\ref{prop:pointsMatching}]
		Set $J=\lceil{2s\log_2 (NL+1)}\rceil\in\N^+$. For each $\xi_i\in [0,1]$,  there exist $\xi_{i,1}, \xi_{i,2},\cdots,\xi_{i,J}\in \{0,1\}$ such that
	\begin{equation*}
		\big|\xi_i-\bin 0.\xi_{i,1}\xi_{i,2}\cdots \xi_{i,J}\big|\le 2^{-J}\quad \tn{for $i=0,1,\cdots,N^2L^2-1$}.
	\end{equation*}
	
	By Lemma~\ref{lem:bitExtractionNew}, there exist \[\phi_1,\phi_2,\cdots,\phi_J\in \NNF(\NNwidth\le 8N+6\NNspace \NNdepth\le 5L+7)\]
	such that
	\begin{equation*}
		\phi_j(i)=\xi_{i,j}\quad \tn{for $i=0,1,\cdots,N^2L^2-1$\tn{ and }
			$j=1,2,\cdots,J$.}
	\end{equation*}
	Define
	\begin{equation*}
		\tildephi(x)\coloneqq \sum_{j=1}^{J} 2^{-j}\phi_j(x)\quad \tn{for any $x\in \R$.}
	\end{equation*}
	It follows that, for $i=0,1,\cdots,N^2L^2-1$,
	\begin{equation*}
		\begin{split}
			|\tildephi(i)-\xi_i|&=\Big|\sum_{j=1}^{J} 2^{-j}\phi_j(i)-\xi_i\Big|=\Big|\sum_{j=1}^{J} 2^{-j}\xi_{i,j}-\xi_i\Big|\\
			&=\big|\bin 0.\xi_{i,1}\xi_{i,2}\cdots\xi_{i,J}-\xi_i\big|\le 2^{-J}\le N^{-2s}L^{-2s},
		\end{split}
	\end{equation*}
	where the last inequality comes from
	\begin{equation*}
		2^{-J}=2^{-\lceil{2s\log_2 (NL+1)}\rceil}\le 2^{-2s\log_2 (NL+1)}=(NL+1)^{-2s}\le N^{-2s}L^{-2s}.
	\end{equation*}
	
	Now let us estimate the width and depth of the network implementing $\tildephi$.
	Recall that
	\begin{equation*}
		\begin{split}
			J&=\lceil{2s\log_2 (NL+1)}\rceil\le 2s\big(1+\log_2(NL+1)\big)\le 2s\big(1+\log_2(2N)+\log_2L\big)\\
			&\le 2s\big(1+\log_2(2N)\big)\big(1+\log_2L\big)\le 2s\lceil\log_2(4N)\rceil\lceil\log_2(2L)\rceil,
		\end{split}
	\end{equation*}
	and $\phi_j\in \NNF(\NNwidth\le 8N+6\NNspace \NNdepth\le 5L+7)$ for each $j$.

	\begin{figure}[!htp]
		\centering
		\includegraphics[width=0.9\textwidth]{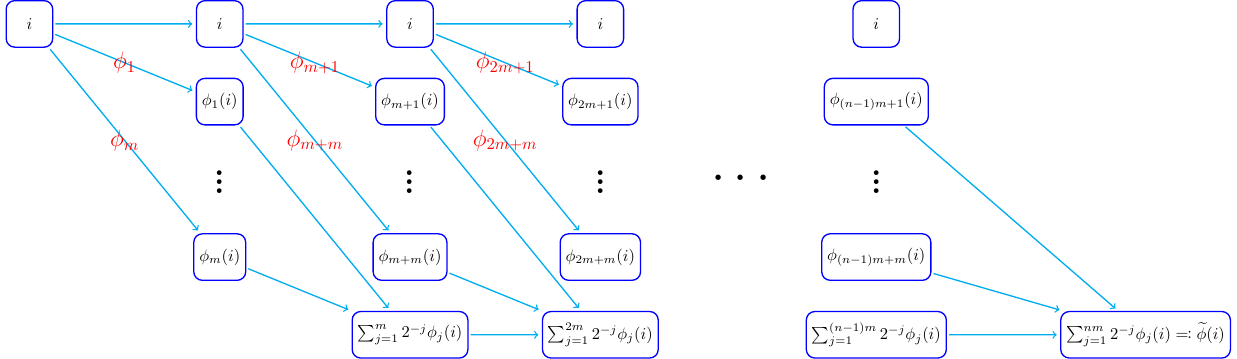}
		\caption{An illustration of the network architecture implementing  $\tildephi=\sum_{j=1}^J 2^{-j}\phi_j$ for any $i\in \{0,1,\cdots,N^2L^2-1\}$. 
			We assume $J=mn$, where $m=2s\lceil\log_2(4N)\rceil$ and $n=\lceil\log_2(2L)\rceil$, since we can set $\phi_{J+1}=\cdots=\phi_{nm}=0$ if $J<nm$.  }
		\label{fig:tphiSmoothFunc}
	\end{figure}

	As we can see from Figure~\ref{fig:tphiSmoothFunc}, $\tildephi=\sum_{j=1}^J 2^{-j}\phi_j$ can be implemented by a ReLU FNN with width 
	\begin{equation*}
	    \begin{split}
	        (8N+6)m+(1+m+1)
	        &=(8N+6)2s\lceil\log_2(4N)\rceil+2s\lceil\log_2(4N)\rceil +2\\
	        &\le 16s(N+1)\log_2(8N)
	    \end{split}
	\end{equation*} 
	and depth 
	\[\big((5L+7)+1\big)n=(5L+8)\lceil\log_2(2L)\rceil\le (5N+8)\log_2(4L).\]

	Finally, we define \[\phi(x)\coloneqq\min\big\{\sigma\big(\tildephi(x)),1\big\}=\min\big\{\max\{0,\tildephi(x)\},1\big\} \quad \tn{for any $x\in \R$}.\]
	Then $0\le \phi(x)\le1$ for any $x\in \R$ and $\phi$ can be implemented by a ReLU FNN with width $16s(N+1)\log_2(8N)$ and depth $(5L+8)\log_2(4L)+3\le 5(L+2)\log_2(4L)$. See Figure~\ref{fig:tphiToPhi} for the network architecture implementing $\phi$. Note that 
	\begin{equation*}
		\tildephi(i)=\sum_{j=1}^J 2^{-j}\phi_j(i)=\sum_{j=1}^J 2^{-j}\xi_{i,j}\in [0,1]\quad \tn{for $i=0,1,\cdots,N^2L^2-1$.}
	\end{equation*}
	
	\begin{figure}[!htp]
		\centering
		\includegraphics[width=0.75\textwidth]{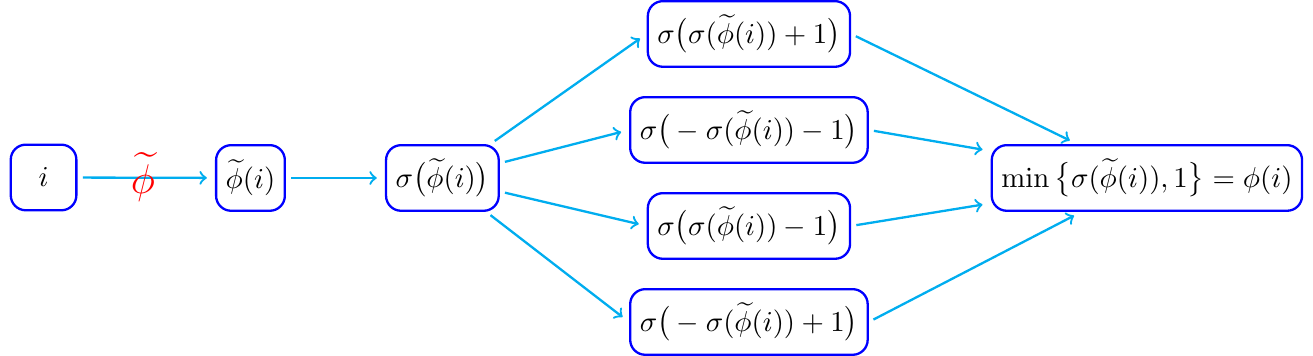}
		\caption{An illustration of the network architecture implementing the desired function $\phi$ based on the fact that $\min\{x_1,x_2\}=\tfrac{x_1+x_2-|x_1-x_2|}{2}=\tfrac{\sigma(x_1+x_2)-\sigma(-x_1-x_2)-\sigma(x_1-x_2)-\sigma(-x_1+x_2)}{2}$. 
		}
		\label{fig:tphiToPhi}
	\end{figure}
	
	It follows that
	\begin{equation*}
		|\phi(i)-\xi_i|=\Big|\min\big\{\max\{0,\tildephi(i)\},1\big\}-\xi_i\Big|=|\tildephi(i)-\xi_i|\le N^{-2s}L^{-2s},
	\end{equation*}  
	\tn{for $i=0,1,\cdots,N^2L^2-1$.}  
	The proof is complete. 
\end{proof}

\section{Conclusions}
\label{sec:conclusion}

This paper has established a nearly optimal approximation error  of ReLU FNNs in terms of both width and depth to approximate smooth functions. It is shown that ReLU FNNs with width $\calO(N\ln N)$ and depth $\calO(L\ln L)$ can approximate functions in the unit ball of $C^s([0,1]^d)$ with an approximation error  $ \calO(N^{-2s/d}L^{-2s/d})$. Through VC-dimension, it is also proved that this approximation error  is asymptotically nearly tight for the closed unit ball of $C^s([0,1]^d)$.

We would like to remark that our analysis is for the fully connected feed-forward neural networks with the ReLU activation function. It would be an interesting direction for further study to generalize our results  to neural networks with other architectures (e.g., convolutional neural networks and ResNet) and activation functions (e.g., tanh and sigmoid functions). These will be subjects of future work. 

\section*{Acknowledgments}
The work of J.~Lu is supported in part by the National Science Foundation via grants DMS-1415939, CCF-1934964, and DMS-2012286.
Z.~Shen is supported by Tan Chin Tuan Centennial Professorship. H.~Yang H.~Yang was partially supported by the National Science Foundation under award DMS-1945029.

\bibliographystyle{plain}%
\bibliography{references}%
\end{document}